\long\def\comment#1{}
\newfont{\bbb}{msbm10 scaled 700}
\newfont{\bb}{msbm10 scaled 1100}
\newcommand{\EE}{\mathbb{E}}
\newcommand{\RR}{\mathbb{R}}
\newcommand{\uv}{{\bf u}}
\newcommand{\Ac}{\mathcal{A}}
\newcommand{\Dc}{\mathcal{D}}
\newcommand{\Lc}{\mathcal{L}}
\newcommand{\Nc}{\mathcal{N}}
\newcommand{\Pc}{\mathcal{P}}
\newcommand{\Sc}{\mathcal{S}}
\newcommand{\Tc}{\mathcal{T}}
\newcommand{\Xc}{\mathcal{X}}
\newcommand{\Zc}{\mathcal{Z}}
\newcommand{\cs}{{\boldsymbol c}}
\newcommand{\xs}{{\boldsymbol x}}
\newcommand{\ys}{{\boldsymbol y}}
\newcommand{\zs}{{\boldsymbol z}}
\newcommand{\muv}{\hbox{\boldmath$\mu$}}
\renewcommand{\det}{{\hbox{det}}}
\newcommand{\trace}{{\hbox{tr}}}
\theoremstyle{plain}
\newtheorem{theorem}{Theorem}[section]
\newtheorem{proposition}[theorem]{Proposition}
\newtheorem{lemma}[theorem]{Lemma}
\newtheorem{corollary}[theorem]{Corollary}
\theoremstyle{definition}
\theoremstyle{remark}
\DeclareMathOperator{\E}{\mathbb{E}}
\DeclareMathOperator{\X}{\mathcal{X}}
\DeclareMathOperator{\Law}{Law}
\DeclareMathOperator{\RX}{\overline{\mathcal{R}}_{\bm{\Xi}}}
\newcommand{\Rad}[2]{\widehat{\mathfrak{R}}_{#1}(#2)}
\title{Probabilistic Variational Contrastive Learning}
\author{%
  Minoh~Jeong \\
  University of Michigan\\
  \texttt{minohj@umich.edu} \\
  \And
  Seonho~Kim \\
  Ohio State University \\
  \texttt{kim.7604@osu.edu} \\
  \And
  Alfred~Hero \\
  University of Michigan \\
  \texttt{hero@eecs.umich.edu} \\
}
\begin{document}

\maketitle

\begin{abstract}
Deterministic embeddings learned by contrastive learning (CL) methods such as SimCLR and SupCon achieve state‐of‐the‐art performance but lack a principled mechanism for uncertainty quantification. We propose \emph{Variational Contrastive Learning} (VCL), a decoder‐free framework that maximizes the evidence lower bound (ELBO) by interpreting the InfoNCE loss as a surrogate reconstruction term and adding a KL divergence regularizer to a uniform prior on the unit hypersphere. We model the approximate posterior $q_\theta(\zs|\xs)$ as a projected normal distribution, enabling the sampling of probabilistic embeddings. Our two instantiations—VSimCLR and VSupCon—replace deterministic embeddings with samples from $q_\theta(\zs|\xs)$ and incorporate a normalized KL term into the loss. Experiments on multiple benchmarks demonstrate that VCL mitigates dimensional collapse, enhances mutual information with class labels, and matches or outperforms deterministic baselines in classification accuracy, all the while providing meaningful uncertainty estimates through the posterior model. VCL thus equips contrastive learning with a probabilistic foundation, serving as a new basis for contrastive approaches.
\end{abstract}

\section{Introduction}

Deep representation learning seeks to map each input $\xs$ into a compact embedding $\zs$ that preserves semantic similarity and facilitates downstream tasks such as classification or retrieval~\citep{bengio2013representation}. Contrastive learning methods, including SimCLR~\citep{chen2020simple} and SupCon~\citep{khosla2020supervised}, have advanced the state of the art by pulling together positive pairs and pushing apart negatives in the embedding space. However, these approaches rely on deterministic point estimates for each sample, which do not express uncertainty or capture multiple plausible representations.

To address this limitation, we introduce a probabilistic \emph{Variational Contrastive Learning} (VCL) approach, which extends deterministic embeddings to \emph{probabilistic embeddings} by maximizing the evidence lower bound (ELBO) within the contrastive learning framework. Unlike variational autoencoders (VAEs)~\citep{kingma2019introduction}, which employ a decoder to reconstruct inputs from latent variables, VCL omits explicit decoders. Instead, we show that the InfoNCE loss can serve as a surrogate for the ELBO reconstruction term, yielding a principled probabilistic formulation of contrastive learning. Our VCL framework offers several new perspectives on learned embeddings:

\paragraph{Understanding embeddings through distributions.}  
VCL maps each input $\xs$ to an approximate posterior distribution $q_{\phi}(\zs| \xs)$, yielding a mean vector that serves as the embedding and a variance that quantifies uncertainty. This probabilistic representation not only captures richer information about each sample but also enables uncertainty-aware downstream decisions.

\paragraph{A probabilistic ELBO viewpoint beyond mutual information.}  
Minimizing the InfoNCE loss maximizes a lower bound on mutual information, consistent with the InfoMax principle~\citep{linsker1988self}. However, mutual information alone may not directly correlate with downstream task performance~\citep{Tschannen2020On}, motivating geometric analyses based on alignment and uniformity~\citep{wang2020understanding}. In contrast, our probabilistic formulation interprets contrastive learning through the ELBO: maximizing the ELBO without a decoder encourages the learned posterior $q_{\phi}(\zs| \xs)$ to match the true posterior $p(\zs | \xs)$, providing a principled objective for representation~learning.

\paragraph{Controllable embedding distributions.}  
Standard contrastive learning imposes no explicit prior on embeddings, so their distribution emerges implicitly from the data and model architecture. VCL incorporates a prior $p(\zs)$ into the objective, allowing one to specify and control the marginal embedding distribution $p(\zs)$. For example, choosing a uniform prior on the hypersphere often improves empirical performance~\citep{wang2020understanding} that makes negative samples more uniformly dispersed around each anchor, and domain-specific priors can encode known structure in the~data.

\paragraph{Mitigating collapse phenomena.}  
Self-supervised contrastive learning (e.g., SimCLR) can suffer from \emph{dimensional collapse}, where embeddings occupy only a low-dimensional subspace~\citep{jing2021understanding}. A spherical uniform prior mitigates this by encouraging isotropic use of all embedding dimensions. 

This Variational Contrastive Learning framework thus provides uncertainty‐aware embeddings, a new basis of CL with theoretical insights via the ELBO, and practical solutions to common collapse problems in contrastive learning. Our contributions are summarized as follows:

\begin{itemize}
    \item We introduce \emph{Variational Contrastive Learning} (VCL), a decoder‐free ELBO maximization framework that reinterprets the InfoNCE loss as a surrogate reconstruction term and incorporates a KL divergence regularizer to a uniform prior on the unit hypersphere.
    \item We propose a distributional embedding model using a projected normal posterior $q_\theta(\zs|\xs)$ that enables sampling, uncertainty quantification, and efficient KL computation on the hypersphere.
    \item We derive a theoretical connection between the optimal InfoNCE critic and the ELBO, showing that minimizing InfoNCE asymptotically maximizes the ELBO reconstruction term (Proposition~\ref{prop:opt_inf}) and providing a generalization bound (Theorem~\ref{thm:genbnd_informal}) of KL regularization.
    \item We demonstrate that VCL mitigates both dimensional collapse in self‐supervised contrastive learning via the KL regularizer, while preserving embedding structure. We show that VCL methods preserve or improve mutual information with labels, match or exceed classification accuracy of deterministic baselines, and provide meaningful implication of distributional embeddings. 
\end{itemize}

\section{Preliminaries}\label{sec:prelim}

Let $\mathcal{D} = \{(\xs_i, \ys_i)\}_{i=1}^N$ be a dataset of input $\xs\in\Xc$ and label pairs drawn i.i.d.\ from the joint distribution $p(\xs,\ys)$.  
An \emph{encoder} $f_{\theta}\colon \mathcal{X}\to\mathbb{R}^d$, parameterized by $\theta$, maps each input $\xs$ to a $d$-dimensional vector, which we then normalize to unit length: $\zs = \frac{f_{\theta}(\xs)}{\| f_{\theta}(\xs)\|_{2}}$.
Throughout this section, we define the temperature–scaled cosine similarity between embeddings $\zs_i$ and $\zs_j$ as
\begin{align}\label{eq:cos_sim}
    s(\zs_i, \zs_j)
    = \frac{\zs_i^\top \zs_j}{\tau},
\end{align}
where $\tau > 0$ is the temperature hyperparameter.
For any two probability distributions $q$ and $p$, we denote the Kullback–Leibler (KL) divergence by $D(q \,\|\, p) \;=\; \mathbb{E}_{\zs\sim q}\!\left[\log\frac{q(\zs)}{p(\zs)}\right]$.

\subsection{Self-Supervised Contrastive Learning}
\label{subsec:selfsup}

Self-supervised contrastive learning (SSCL) learns representations from \emph{unlabeled} data by pulling together embeddings of semantically related views (positives) and pushing apart those of unrelated views (negatives). For an anchor $\xs$, let $\xs'_i$ denote a positive view sampled from $p(\xs'_i \mid \xs)$, and let $\{\xs'_j\}_{j\neq i}$ be $N-1$ negative views drawn i.i.d.\ from the marginal $p(\xs')$. The InfoNCE loss~\citep{oord2018representation} for anchor $\xs$ is then
\begin{align}
I_{\mathrm{NCE}}(\xs;\xs') 
&= - \mathbb{E}_{\substack{\xs\sim p(\xs)\\\xs'_i\sim p(\xs'_i\mid\xs)\\\{\xs'_j\}_{j\neq i}\sim p(\xs')}} 
\left[\log \frac{\exp\bigl(s(\zs,\zs'_i)\bigr)}{\sum_{j=1}^N \exp\bigl(s(\zs,\zs'_j)\bigr)}\right],
\end{align}
where $\zs = f_\theta(\xs)/\|f_\theta(\xs)\|_2$ and $s(\cdot,\cdot)$ is the temperature‐scaled cosine similarity.

In practice, following SimCLR~\citep{chen2020simple}, we generate positives by applying two random augmentations $t',t''\sim\mathcal{T}$ to each sample $\xs_i$, yielding $(\xs'_i,\xs''_i) = (t'(\xs_i),\,t''(\xs_i))$.\footnote{Although we adopt the SimCLR augmentation scheme, our method applies to any contrastive framework.} All other $2N-2$ augmented samples in the mini‐batch serve as negatives. Let $\mathcal{B}$ be the set of all $2N$ embeddings in the batch; then InfoNCE can be computed as
\begin{align}
I_{\mathrm{NCE}}
&= -\frac{1}{2N} \sum_{\zs\in\mathcal{B}}
\log \frac{\exp\bigl(s(\zs,\zs_p)\bigr)}
{\sum_{\zs_n\in\mathcal{B}\setminus\{\zs\}} \exp\bigl(s(\zs,\zs_n)\bigr)},
\end{align}
where $\zs_p$ denotes the positive embedding corresponding to $\zs$. Since InfoNCE lower‐bounds the mutual information $I(\xs;\xs')$ via
$
I(\xs;\xs')  \geq \log N - I_{\mathrm{NCE}}(\xs;\xs'),
$
we can see that minimizing $I_{\mathrm{NCE}}$ encourages encoders to preserve the semantic information of $\xs$~\citep{poole2019variational}.

\subsection{Supervised Contrastive Learning}
\label{subsec:supcon}

\cite{khosla2020supervised} extend the InfoNCE loss from the self‐supervised setting to a supervised context, calling the resulting method \emph{Supervised Contrastive Learning} (SupCon). When class labels $y_i \in \{1,\dots,C\}$ are available, all samples sharing the same label can serve as positives.  

Given a mini‐batch $\{(\xs_i,y_i)\}_{i=1}^B$, define for each anchor index $a$  
$$
\Ac(a) = \{1,2,\dots,B\}\setminus\{a\}, \text{ and }~
\Pc(a) = \{\,p\in\Ac(a): y_p = y_a\},
$$
so that $\Pc(a)$ contains the indices of all positives for anchor $a$.  
The SupCon loss for anchor $\xs_a$ is then  
\begin{align}\label{eq:supcon_loss}
I_{\mathrm{SUP}}(\xs_a)
&= -\frac{1}{|\Pc(a)|} 
\sum_{p\in\Pc(a)}
\log 
\frac{\exp\bigl(s(\zs_a,\zs_p)\bigr)}
{\displaystyle\sum_{j\in\Ac(a)}\exp\bigl(s(\zs_a,\zs_j)\bigr)}.
\end{align}
Averaging over all anchors in the batch yields the full objective:
\begin{align}
\mathcal{L}^{\mathrm{sup}}
&= \frac{1}{B}\sum_{a=1}^B I_{\mathrm{SUP}}(\xs_a).
\end{align}

\subsection{Variational Inference and the Evidence Lower Bound (ELBO)}
\label{subsec:elbo}

In variational inference~\citep{blei2017variational,kingma2019introduction}, we treat the data distribution $p(\xs)$ as the marginal of a joint distribution over observed data $\xs$ and latent variables $\zs$, i.e., $p(\xs) = \int p(\xs | \zs)\,p(\zs)\mathrm{d}\zs.$
The latent variable $\zs$ captures meaningful structure in $\xs$, serving both as a hidden cause and as a compressed representation for downstream tasks. In representation learning, we interpret $\zs$ as the embedding of $\xs$.

The log‐evidence can be written with respect to any approximate posterior $q_\phi(\zs|\xs)$ as
\begin{align}\label{eq:evidence}
\log p(\xs)
&= \log \mathbb{E}_{q_\phi(\zs\mid\xs)}\!\Bigl[\tfrac{p(\xs,\zs)}{q_\phi(\zs\mid\xs)}\Bigr].
\end{align}
Rather than optimizing \eqref{eq:evidence} directly, variational methods maximize the \emph{evidence lower bound} (ELBO) obtained as a result of  applying Jensen’s inequality:
\begin{align}\label{eq:ELBO}
\log p(\xs)
&\ge \mathbb{E}_{q_\phi(\zs\mid\xs)}\!\bigl[\log p(\xs|\zs)\bigr]
    - D\bigl(q_\phi(\zs|\xs)\,\|\,p(\zs)\bigr)
    \;=\; \mathcal{L}^{\rm ELBO}(\phi),
\end{align}
where $p(\zs)$ is a fixed prior (commonly $\mathcal{N}(\mathbf{0},I_d)$). The ELBO decomposes into a \emph{reconstruction} term $\mathbb{E}_{q}[\log p(\xs|\zs)]$ and a \emph{regularizer} $D(q_\phi(\zs|\xs)\,\|\,p(\zs))$. Maximizing $\mathcal{L}^{\rm ELBO}$ thus balances (i) accurate reconstruction, (ii) posterior‐to‐prior regularization, and (iii) posterior accuracy. By
\begin{align}
\label{eq:posterior_ap}
\log p(\xs)
= \mathcal{L}^{\rm ELBO}(\phi)
+ D\bigl(q_\phi(\zs|\xs)\,\|\,p(\zs|\xs)\bigr),
\end{align}
for fixed $\log p(\xs)$, maximizing the ELBO minimizes the KL divergence between the approximate and true posteriors~\citep{blei2017variational}.

The ELBO provides a tractable surrogate for marginal likelihood that can be optimized by standard gradient methods. It will serve as the theoretical backbone of our Variational Contrastive Learning framework, offering both a probabilistic interpretation and explicit control over latent uncertainty.

\paragraph{Relation to contrastive objectives.}
Although the ELBO stems from latent‐variable modeling, its two components align naturally with contrastive objectives: the KL divergence term enforces \emph{uniformity} in the embedding space, while the reconstruction term promotes \emph{alignment} between embeddings and observations. In Section~\ref{sec:VCL}, we leverage this connection by adopting distributional embeddings in the contrastive framework and incorporating a KL‐based regularizer on the posterior.

\section{Variational Contrastive Learning (VCL)}\label{sec:VCL}

Unlike existing variational contrastive learning methods—which primarily focus on generative models with explicit decoders~\citep{chen2025multi,wang2024vcl}—our approach performs \emph{decoder-free} ELBO maximization, making VCL a truly contrastive learning framework.

\subsection{Decoder-Free ELBO Maximization}\label{subsec:elbo}

Here we describe how to optimize two terms in ELBO \eqref{eq:ELBO} within a purely contrastive learning setup.

\paragraph{Reconstruction term.}  
The reconstruction term $\EE_{q_\theta(\zs|\xs)}\bigl[\log p(\xs|\zs)\bigr]$ requires the true conditional $p(\xs|\zs)$, which is generally intractable. Instead, we approximate it via the embedding conditional
\begin{align}
p(\zs'| \zs)
= \frac{p(\zs,\zs')}{\int p(\zs,\zs')\,\mathrm{d}\zs'},
\end{align}
where $\zs'\sim q_\theta(\cdot\mid\xs)$ captures semantics of $\xs$.
Thus,
\begin{align}\label{eq:dist_x_z}
\EE_{q_\theta(\zs|\xs)}\bigl[\log p(\xs|\zs)\bigr]
&\approx
\EE_{q_\theta(\zs|\xs)q_\theta(\zs'|\xs)}\bigl[\log p(\zs'| \zs)\bigr] \nonumber\\
&= \EE\Bigl[\log\tfrac{p(\zs,\zs')}{\int p(\zs,\zs')\,\mathrm{d}\zs'}\Bigr]
\approx \EE\Bigl[\log\tfrac{e^{\psi(\zs,\zs')}}
{\sum_j e^{\psi(\zs,\zs'_j)}}\Bigr],
\end{align}
where we approximate $p(\zs,\zs')\!\approx\!e^{\psi(\zs,\zs')}$ via a critic $\psi$.  Details on parameterizing $p(\zs'\mid\zs)$ appear in Section~\ref{subsec:VSimCLR}. 
The following lemma supports the approximation $\EE_{q_\theta(\zs|\xs)}\bigl[\log p(\xs|\zs)\bigr] \approx \EE_{q_\theta(\zs|\xs)q_\theta(\zs'|\xs)}\bigl[\log p(\zs'| \zs)\bigr]$. A further discussion on the approximation in~\eqref{eq:dist_x_z} and a tightness condition is in Appendix~\ref{app:recon_approx}.

\begin{lemma}\label{lem:recon_app_jensen}
Let $\xs$ and $\zs$ be conditionally independent given $\zs’$. Then, the reconstruction term in Section~\ref{subsec:elbo} is bounded as
\begin{align}
	\mathbb{E}_{q(\zs|\xs)} [ \log p(\xs|\zs) ] \geq \mathbb{E}_{q(\zs|\xs)q(\zs’|\xs)}[\log p(\zs’|\zs)] + {\rm const}.,
\end{align}
where {\rm const}. is independent of $\zs$. 
\end{lemma}
\begin{proof}
The proof of Proposition~\ref{lem:recon_app_jensen} is in Appendix~\ref{app:recon_app_jensen}.
\end{proof}

Noting that the right‐hand side of \eqref{eq:dist_x_z} is (up to sign) the InfoNCE surrogate, setting $\psi(\cdot,\cdot)=s(\cdot,\cdot)$ in \eqref{eq:dist_x_z} where \(s(\cdot,\cdot)\) is defined in \eqref{eq:cos_sim}  yields
\begin{align}\label{eq:recon_infoNCE}
\EE_{q_\theta(\zs|\xs)}\bigl[\log p(\xs|\zs)\bigr]
\approx
-\,I_{\mathrm{NCE}}(\xs;\xs').
\end{align}
Hence, minimizing the InfoNCE loss maximizes the reconstruction term without explicit decoders.

In contrast to VAE embeddings—which often rely on pixel‐level reconstruction through expressive decoder~\citep{song2024scale}—VCL preserves semantics via contrastive objectives.
The next proposition (proved in Appendix~\ref{app:opt_inf}) provides a theoretical connection between InfoNCE and the reconstruction~term. 
\begin{proposition}\label{prop:opt_inf}
Assume that: 1) the critic $\psi$ in InfoNCE is optimal; 2) $p(\zs)<\infty,~\forall \zs$; and 3) $0<\epsilon \leq p(\zs|\zs') \leq g_+(\zs),~\forall \zs,\zs'$ with a absolutely integrable $g:\Zc\to(0,\infty)$. Then, as the number of negatives $N\to\infty$,  
\begin{align}
-\,I_{\mathrm{NCE}}(\xs;\xs') + \log N
\;\longrightarrow\;
\EE\bigl[\log p(\zs'| \zs)\bigr]
- D\bigl(q_\theta(\zs'|\xs)\,\|\,p(\zs')\bigr)
- H\bigl(q_\theta(\zs'|\xs)\bigr),
\end{align}
where the expectation is over $q_\theta(\zs|\xs)q_\theta(\zs'|\xs)$, and $H(\cdot)$ denotes entropy.
\end{proposition}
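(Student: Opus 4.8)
The plan is to specialize the classical optimal‑critic analysis of InfoNCE (in the style of \cite{oord2018representation,poole2019variational}) to the embedding‑pair surrogate of \eqref{eq:dist_x_z}, and then to reorganize the limiting expression so that the two ELBO terms emerge.

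First I would record the optimal critic. Writing the InfoNCE objective for the embedding pair as $-I_{\mathrm{NCE}} = \EE\bigl[\psi(\zs,\zs'_i) - \log\sum_{j=1}^{N} e^{\psi(\zs,\zs'_j)}\bigr]$, with anchor $\zs\sim q_\theta(\cdot\mid\xs)$, positive $\zs'_i\sim q_\theta(\cdot\mid\xs)$, and negatives $\zs'_j\sim p(\zs')$ drawn i.i.d.\ from the spherical prior, the inner conditional expectation is maximized pointwise over the function $\zs'\mapsto\psi(\zs,\zs')$ (equivalently by the Gibbs variational / Donsker--Varadhan inequality) at $\psi^\star(\zs,\zs') = \log p(\zs'\mid\zs) - \log p(\zs') + c(\zs)$ for an arbitrary additive $c$; with $c\equiv 0$ this is exactly the modeling ansatz $e^{\psi}\propto p(\zs,\zs')$ underlying \eqref{eq:dist_x_z}.

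Then I would let $N\to\infty$. Substituting $\psi^\star$ and splitting the partition sum into the positive term $e^{\psi^\star(\zs,\zs'_i)}$, which is $O_p(1)$, and the $N-1$ negative terms, the strong law of large numbers gives $\tfrac1{N-1}\sum_{j\neq i} e^{\psi^\star(\zs,\zs'_j)} \to \EE_{\zs'\sim p(\zs')}\bigl[e^{\psi^\star(\zs,\zs')}\bigr]$, a $\zs$‑measurable constant equal to $\int p(\zs'\mid\zs)\,\mathrm d\zs'$, i.e.\ $1$ up to the (constant) surface area of $\mathbb{S}^{d-1}$. Hence $\log\sum_{j=1}^N e^{\psi^\star(\zs,\zs'_j)} = \log N + o_p(1)$ once that constant is absorbed, and, after justifying the interchange of limit and expectation by dominated convergence (valid since $\psi^\star$ is bounded above on the compact sphere and its negative part is $q_\theta(\cdot\mid\xs)\times p(\zs')$--integrable), one obtains $-I_{\mathrm{NCE}} + \log N \longrightarrow \EE_{q_\theta(\zs\mid\xs)\,q_\theta(\zs'\mid\xs)}\bigl[\log p(\zs'\mid\zs)\bigr] + (\text{measure-mismatch correction})$, where the correction records that the positive is sampled from $q_\theta(\cdot\mid\xs)$ while the partition function is estimated from prior samples.

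Finally I would identify this correction. The elementary identity $-D\bigl(q_\theta(\zs'\mid\xs)\,\|\,p(\zs')\bigr) - H\bigl(q_\theta(\zs'\mid\xs)\bigr) = \EE_{q_\theta(\zs'\mid\xs)}[\log p(\zs')]$ (expand the KL and the entropy and cancel the $\EE_q[\log q]$ terms) lets one repackage the correction exactly as $-D\bigl(q_\theta(\zs'\mid\xs)\,\|\,p(\zs')\bigr) - H\bigl(q_\theta(\zs'\mid\xs)\bigr)$, yielding the stated limit. I expect the passage to the limit $N\to\infty$ to be the main obstacle: one must control the fluctuations of the empirical partition function $\tfrac1N\sum_j e^{\psi^\star(\zs,\zs'_j)}$ uniformly enough in the anchor (via a quantitative law of large numbers or a concentration inequality), check that the positive term is asymptotically negligible inside the log, and carefully track the constant volume of $\mathbb{S}^{d-1}$ (which is why the statement is cleanest when $p(\zs')$ denotes the constant spherical density); by comparison, the optimal‑critic computation and the closing KL/entropy algebra are routine.
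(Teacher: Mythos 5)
Your route is the same as the paper's: plug in the optimal critic $\psi^\star(\zs,\zs')=\log p(\zs'\mid\zs)-\log p(\zs')$ (equivalently $e^{\psi}\propto p(\zs,\zs')$ up to a $\zs$-dependent factor), let the empirical partition function concentrate so that $\log\sum_j e^{\psi^\star(\zs,\zs'_j)}=\log N+o(1)$, and rewrite the surviving cross-entropy term via $\EE_{q}[\log p(\zs')]=-D(q\,\|\,p)-H(q)$. On the analytic side you are in fact more careful than the paper, which simply invokes the strong law of large numbers and silently interchanges the limit with the logarithm and the outer expectation; your remarks on dominated convergence, the negligibility of the positive term inside the log, and the normalizing constant of the uniform density on $\Sc^{d-1}$ address real gaps in the paper's write-up.

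The problem is the sign of the ``measure-mismatch correction,'' and this is where your argument does not close. With the positive $\zs'_i\sim q_\theta(\cdot\mid\xs)$ contributing $\EE[\psi^\star(\zs,\zs'_i)]=\EE[\log p(\zs'_i\mid\zs)]-\EE_{q_\theta(\zs'\mid\xs)}[\log p(\zs'_i)]$ and the partition term converging to $\log N$, the limit your own computation produces is
\begin{align*}
\EE\bigl[\log p(\zs'\mid\zs)\bigr]-\EE_{q_\theta(\zs'\mid\xs)}\bigl[\log p(\zs')\bigr]
=\EE\bigl[\log p(\zs'\mid\zs)\bigr]+D\bigl(q_\theta(\zs'\mid\xs)\,\|\,p(\zs')\bigr)+H\bigl(q_\theta(\zs'\mid\xs)\bigr),
\end{align*}
i.e.\ the correction is $-\EE_q[\log p(\zs')]=+D+H$, not $\EE_q[\log p(\zs')]=-D-H$ as you assert. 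Your identity $-D-H=\EE_q[\log p(\zs')]$ is correct, but you apply it to the negation of the quantity your derivation actually yields. For what it is worth, the paper's own proof contains the mirror-image slip—it expands $\EE\bigl[\log\tfrac{p(\zs'_i\mid\zs)}{p(\zs'_i)}\bigr]$ as $\EE[\log p(\zs'_i\mid\zs)]+\EE[\log p(\zs'_i)]$ rather than with a minus sign—which is precisely how the stated signs on $D$ and $H$ arise. So your conclusion matches the proposition as printed, but the last step does not follow from the line before it; a consistent derivation forces the signs of $D$ and $H$ in the limit to flip.
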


\paragraph{Regularization.}  
Maximizing the ELBO requires choosing a prior $p(\zs)$ and an approximate posterior $q_\theta(\zs\mid\xs)$. Although both are often taken as Gaussian distributions~\citep{kingma2019introduction}, this choice conflicts with the geometry of contrastive embeddings, which often lie on the unit hypersphere due to the normalization~\citep{wang2020understanding}. Instead, we adopt non‐Gaussian priors and posteriors—one key distinction from standard VAE approaches.

Motivated by the uniformity property~\citep{wang2020understanding} on the unit sphere $\Sc^{d-1} = \{\zs\in\RR^d : \|\zs\|_2 = 1\}$, we set the prior $p(\zs)$ to be the uniform distribution over $\Sc^{d-1}$. For the approximate posterior, we use the \emph{projected normal} distribution~\citep{hernandez2017general}, which admits efficient KL‐divergence computation while enforcing $\zs\in\Sc^{d-1}$. A random variable $\zs\sim\Pc\Nc(\mu,K)$ is obtained by sampling  
\begin{align}
\zs = \frac{\uv}{\|\uv\|_2}
\quad
\text{with}
\quad 
\uv\sim \Nc(\mu,K).
\end{align}
In particular, $\Pc\Nc(0,I_d)$ reduces to the uniform distribution on $\Sc^{d-1}$, i.e., $\Pc\Nc(0,I_d) \overset{d}{=} {\rm Unif}(\Sc^{d-1})$.

With $q_\theta(\zs|\xs)=\Pc\Nc(\mu,K)$, the regularization term becomes  
\begin{align}
D\bigl(q_\theta(\zs|\xs)\,\|\,p(\zs)\bigr)
= D\bigl(\Pc\Nc(\mu,K)\,\|\,\mathrm{Unif}(\Sc^{d-1})\bigr).
\end{align}
Since a closed‐form KL divergence between projected normals and the uniform sphere is intractable, we instead minimize the Gaussian KL as an upper bound—by the data processing inequality~\citep{polyanskiy2025information}:  
\begin{align}\label{eq:kl_bound}
D\bigl(\Nc(\mu,K)\,\|\,\Nc(0,I_d)\bigr)
\;\ge\;
D\bigl(\Pc\Nc(\mu,K)\,\|\,\mathrm{Unif}(\Sc^{d-1})\bigr).
\end{align}
In Appendix~\ref{app:kl_surrogate}, we analyze the tightness of the gap in~\eqref{eq:kl_bound} and show that the Gaussian KL divergence closely approximates the projected-normal KL divergence; the two exhibit nearly identical behavior throughout VCL training.

For $K=\mathrm{diag}(\sigma_1^2,\dots,\sigma_d^2)$, the Gaussian KL admits the closed form  
\begin{align}\label{eq:upper_kl}
D(\mu,K)
= \frac12\sum_{i=1}^d\bigl(\sigma_i^2 + \mu_i^2 - 1 - \log\sigma_i^2\bigr).
\end{align}

The KL divergence term $D(\mu, K)$ grows linearly with the embedding dimension $d$, which can destabilize training when $d$ is large. To address this, we normalize the KL term by $d$, i.e.,  $\widetilde{D}(\mu,K) \;=\;\frac{1}{d}\,D(\mu,K),$ so that its magnitude remains comparable to the InfoNCE loss.

\paragraph{Final objective for maximizing ELBO.}  
By combining \eqref{eq:recon_infoNCE} and \eqref{eq:upper_kl}, we obtain the following (approximate) lower bound on the ELBO:
\begin{align}\label{eq:asym_bound}
\mathcal{L}^{\rm ELBO}(\theta)
\;\ge\;
-\,I_{\rm NCE}(\xs;\xs')
\;-\,D\bigl(\mu_{\xs},\,K_{\xs}\bigr),
\end{align}
where $\mu_{\xs}$ and $K_{\xs} = \mathrm{diag}(\sigma_{\xs,1},\ldots,\sigma_{\xs,d})$ are the parameters of $q_{\theta}(\mathbf{z}\mid\xs)$.  
Because this bound is asymmetric in $(\xs,\xs')$, we symmetrize it to define our final VCL objective:
\begin{align}\label{eq:VCL_obj}
\mathcal{L}^{\rm VCL}
= \frac{1}{2}\Bigl(
I_{\rm NCE}(\xs;\xs')
+ I_{\rm NCE}(\xs';\xs)
+ D(\mu_{\xs},K_{\xs})
+ D(\mu_{\xs'},K_{\xs'})
\Bigr).
\end{align}
Minimizing $\mathcal{L}^{\rm VCL}$ therefore maximizes the ELBO. Next, we introduce Variational SimCLR (VSimCLR), which is specifically designed to optimize this objective efficiently.

\begin{figure*}
\vspace{-1em}
    \centering
    \subfigure[SimCLR]{\includegraphics[width=0.48\textwidth]{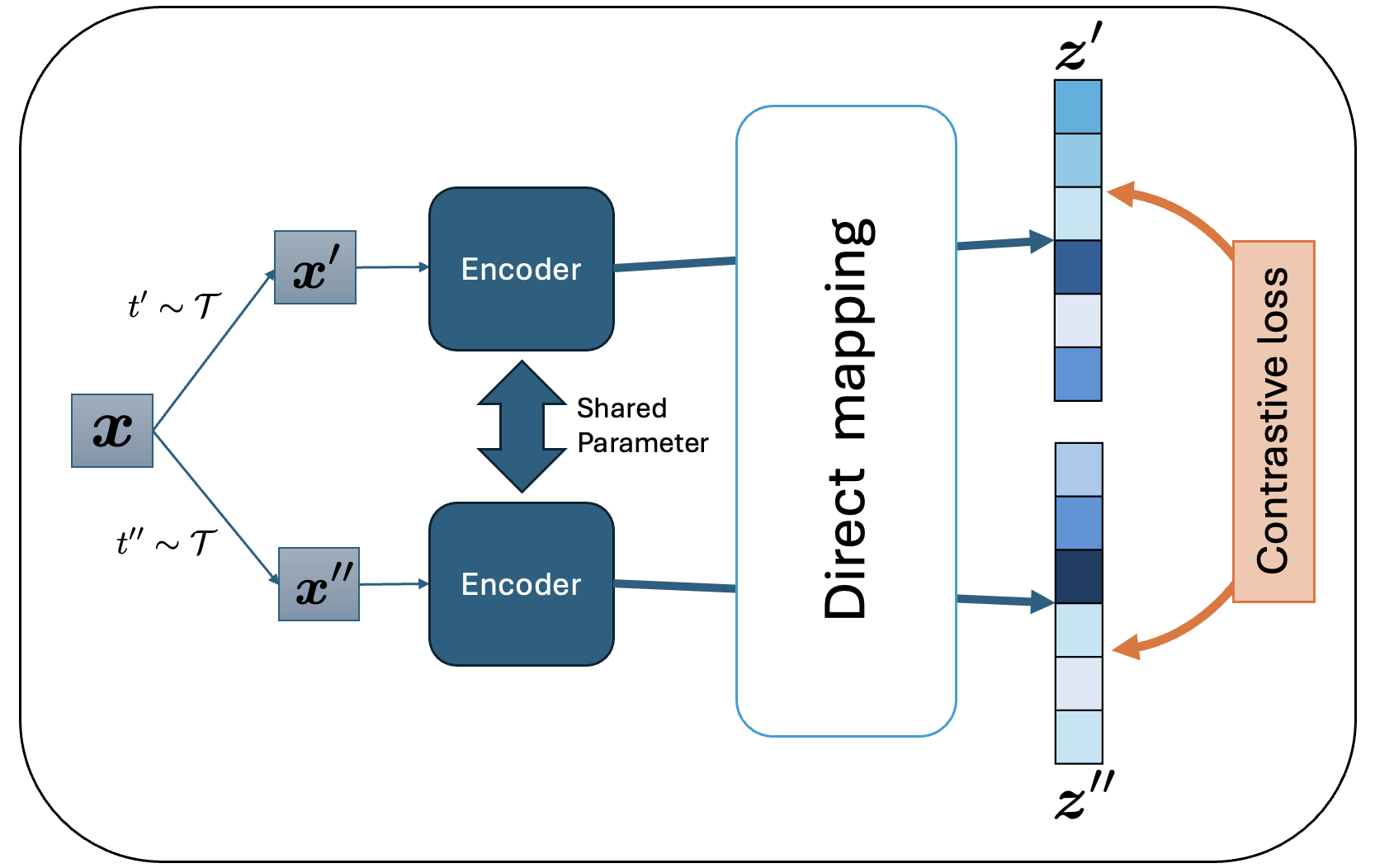} \label{subfig:SimCLR}}  
    \hfil
    \subfigure[Variational SimCLR]{\includegraphics[width=0.48\textwidth]{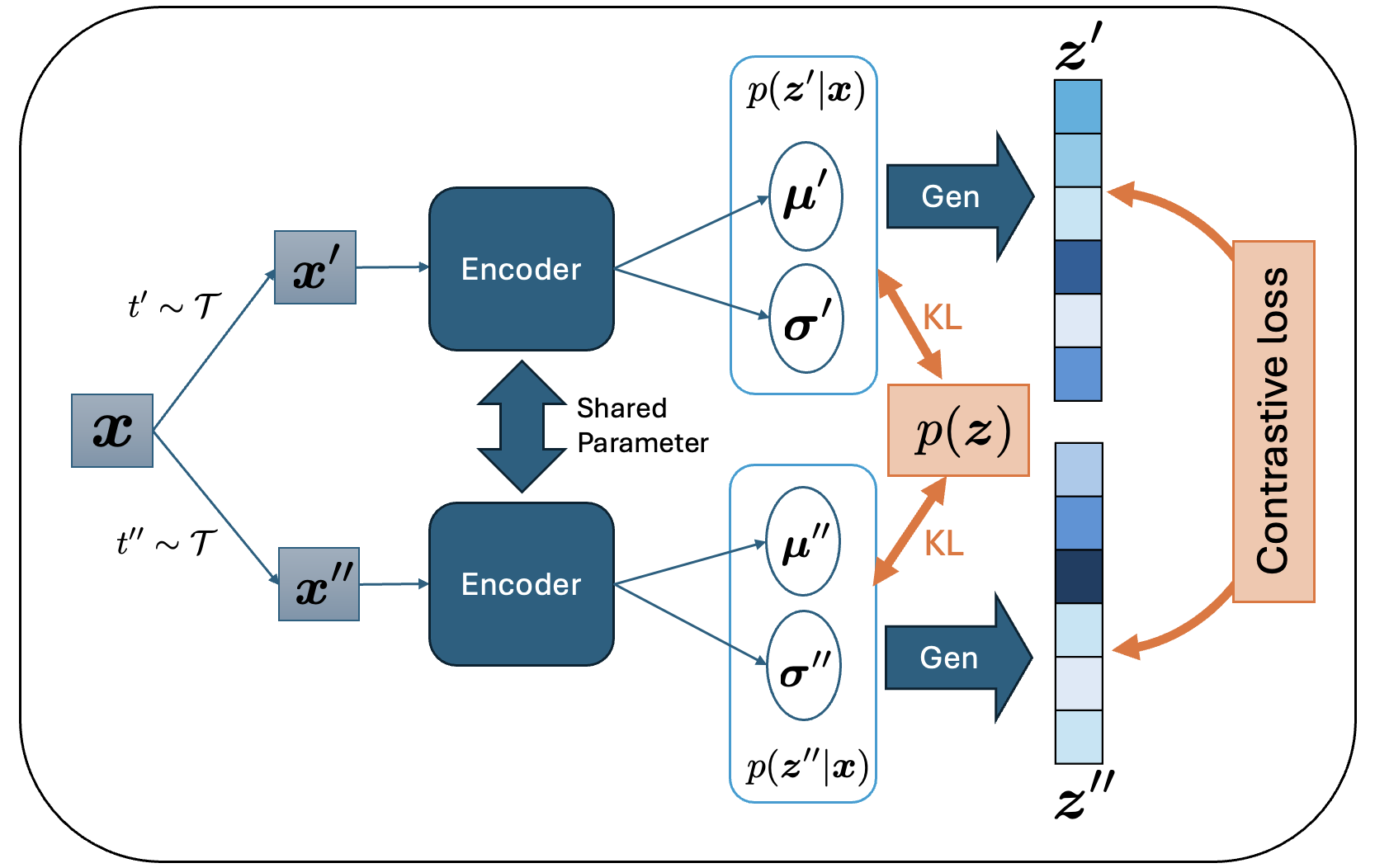}\label{subfig:VSimCLR} }
    \caption{Graphical illustration of SimCLR and Variational SimCLR (VSimCLR).}
    \label{fig:VsimCLR}
\end{figure*}

\subsection{Variational SimCLR (VSimCLR)}\label{subsec:VSimCLR}

We propose \emph{Variational SimCLR} (VSimCLR), whose architecture is illustrated in Figure~\ref{subfig:VSimCLR}. VSimCLR minimizes $\mathcal{L}^{\rm VCL}$ in \eqref{eq:VCL_obj}, thereby implicitly maximizing the ELBO and bringing the approximate posterior closer to the true posterior by \eqref{eq:posterior_ap}. Compared to SimCLR, VSimCLR differs in three key aspects: (i) the encoder outputs the parameters of a variational posterior rather than deterministic embeddings; (ii) embeddings are sampled from this posterior; and (iii) a KL divergence term between the approximate posterior and the prior is included in the loss.

Specifically, during training, each input $\xs$ is first augmented twice to obtain $\xs'$ and $\xs''$, as in SimCLR. The encoder then maps $\xs'$ and $\xs''$ to posterior parameters $(\boldsymbol{\mu}',\boldsymbol{\sigma}')$ and $(\boldsymbol{\mu}'',\boldsymbol{\sigma}'')$, respectively. We then sample
\begin{align}
\label{eq:bnbn}
\zs' = \boldsymbol{\mu}' + \operatorname{diag}(\boldsymbol{\sigma}')\;\boldsymbol{\epsilon}_1,
\quad
\text{and}
\quad
\zs'' = \boldsymbol{\mu}'' + \operatorname{diag}(\boldsymbol{\sigma}'')\;\boldsymbol{\epsilon}_2,
\end{align}
where $\boldsymbol{\epsilon}_1,\boldsymbol{\epsilon}_2 \overset{\rm i.i.d}{\sim} \mathcal{N}(\bm 0,I_d)$. After normalizing $\mathbf{z}'$ and $\mathbf{z}''$ to unit length, we compute the InfoNCE loss over the normalized embeddings in the batch and add the KL divergence
\begin{align}
\frac{1}{d} D\bigl(\mathcal{N}(\boldsymbol{\mu},\operatorname{diag}(\boldsymbol{\sigma}^2)) \,\|\, \mathcal{N}(\bm 0,I_d)\bigr)
\end{align}
for each sample. Minimizing this combined objective effectively minimizes $\mathcal{L}^{\rm VCL}$ in \eqref{eq:VCL_obj} and thus maximizes the ELBO. Figure~\ref{fig:VsimCLR} highlights these differences: VSimCLR replaces deterministic embeddings with the projected‐normal posterior $\Pc\Nc(\boldsymbol{\mu},\operatorname{diag}(\boldsymbol{\sigma}^2))$ and regularizes it via KL divergence to the standard normal.

\subsection{Variational SupCon (VSupCon)}

Building on the variational embedding pipeline of VSimCLR, VSupCon simply swaps the unsupervised InfoNCE term for the supervised contrastive loss while retaining the KL regularizer.  Concretely, for each input $\xs$ with two augmentations $\xs',\xs''$, let the encoder output posterior parameters $(\muv',K')$ and $(\muv'',K'')$, and sample normalized embeddings
\begin{align}
\zs' \sim \Pc\Nc(\muv',K'), 
\qquad 
\zs'' \sim \Pc\Nc(\muv'',K'').
\end{align}
Then the VSupCon objective is the symmetrized supervised loss plus the averaged, normalized KL penalties:
\begin{align}
\mathcal{L}^{\mathrm{VSup}}
= \frac12\Bigl(\mathcal{L}^{\rm sup}(\zs',\zs'') + \mathcal{L}^{\rm sup}(\zs'',\zs')\Bigr)
\;+\;
\frac{1}{2d}\Bigl(D(\muv',K') + D(\muv'',K'')\Bigr).
\end{align}
Minimizing $\mathcal{L}^{\mathrm{VSup}}$ therefore aligns same‐class embeddings and regularizes their posterior distributions toward the uniform prior on the sphere.

\subsection{Generalization analysis for KL divergence}
Generalization bounds quantify how a loss function performs on unseen data.  
While recent work has extensively studied the InfoNCE loss~\citep{saunshi2019theoretical,lei2023generalization,hieu2025generalization}, the KL regularizer in VCL has not yet received comparable theoretical treatment.  
To address this gap, we derive a generalization bound for the KL term under the deep neural network encoder function class \(\mathcal{F}_{\bm{\Xi}}\), parameterized by a bounded weight set \(\bm{\Xi}\) and equipped with Lipschitz activation functions, as considered in the recent work \citep{hieu2025generalization}.  
We introduce an informal version of the generalization bound below; the formal version is provided in Theorem~\ref{thm:genbnd} in Appendix~\ref{proof:thm}.

\begin{theorem}[Informal]\label{thm:genbnd_informal}
Let \(\{\bm{x}_i\}_{i=1}^N \overset{\mathrm{i.i.d}}{\sim} p(\xs)\), and let \(D_{\mathrm{KL}}(f_{\bm\Theta};\bm x)\) denote the KL regularizer applied to the output of the encoder \(f_{\bm{\Theta}} \in \mathcal{F}_{\bm{\Xi}}\) given input \(\bm{x}\).  
Then, with high probability, it holds: 
\begin{align}
\sup_{f_{\bm\Theta} \in \mathcal{F}_{\bm\Xi}}
\left[
\mathbb{E}_{\bm{x} \sim p(\xs)}\,D_{\rm KL}(f_{\bm\Theta};\bm{x})
-\frac{1}{N}\sum_{i=1}^N D_{\rm KL}(f_{\bm\Theta};\bm{x}_i)
\right]
\leq \tilde{\mathcal{O}}(1/\sqrt{N}).
\end{align}
\end{theorem}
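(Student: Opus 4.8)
The plan is to prove this as a textbook uniform-convergence bound: symmetrize the deviation into a Rademacher complexity of the composed class $\mathcal{G}=\{\bm x\mapsto D_{\mathrm{KL}}(f_{\bm\Theta};\bm x):f_{\bm\Theta}\in\mathcal{F}_{\bm\Xi}\}$, strip off the KL map by a contraction argument, and then invoke the known Rademacher bound for the deep-network class $\mathcal{F}_{\bm\Xi}$. First I would fix notation: writing the encoder output for input $\bm x$ as the pair $(\bm\mu_{\bm x},\bm\sigma^2_{\bm x})\in\mathbb{R}^{2d}$, equation \eqref{eq:upper_kl} together with the $1/d$ normalization gives
\[
D_{\mathrm{KL}}(f_{\bm\Theta};\bm x)=\frac{1}{2d}\sum_{i=1}^{d}\phi\bigl(\mu_{\bm x,i},\sigma^2_{\bm x,i}\bigr),\qquad \phi(m,s)=s+m^2-1-\log s .
\]
The preliminary observation is that, under the assumptions defining $\mathcal{F}_{\bm\Xi}$ (bounded weights, Lipschitz activations) together with the variance head being constrained to $\sigma^2\in[\sigma_{\min}^2,\sigma_{\max}^2]$ and the mean output bounded, $\|\bm\mu_{\bm x}\|_\infty\le B$, the scalar function $\phi$ is bounded, $|\phi|\le M$, and $L_\phi$-Lipschitz on the relevant box, with $M$ and $L_\phi\le\max\{2B,\,1+\sigma_{\min}^{-2}\}$ depending only on $B,\sigma_{\min},\sigma_{\max}$. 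Consequently $D_{\mathrm{KL}}(\cdot\,;\bm x)$ is bounded in absolute value by $M/2$.

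Given this boundedness, McDiarmid's inequality plus the standard symmetrization lemma yield, with probability at least $1-\delta$,
\[
\sup_{f_{\bm\Theta}\in\mathcal{F}_{\bm\Xi}}\!\Bigl[\mathbb{E}_{\bm x}\,D_{\mathrm{KL}}(f_{\bm\Theta};\bm x)-\tfrac1N{\textstyle\sum_{i=1}^N} D_{\mathrm{KL}}(f_{\bm\Theta};\bm x_i)\Bigr]\le 2\,\Rad{N}{\mathcal{G}}+M\sqrt{\tfrac{\log(1/\delta)}{2N}} .
\]
To bound $\Rad{N}{\mathcal{G}}$ I would write $D_{\mathrm{KL}}(f_{\bm\Theta};\bm x)=\Psi\bigl(f_{\bm\Theta}(\bm x)\bigr)$ for the coordinate-average map $\Psi(\bm v)=\tfrac1{2d}\sum_{i=1}^d\phi(v_i,v_{d+i})$, whose Lipschitz constant is controlled by $L_\phi$ and $d$. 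Applying Maurer's vector-contraction inequality and then sub-additivity of Rademacher complexity over the $2d$ output coordinates gives $\Rad{N}{\mathcal{G}}\le C(L_\phi,d)\,\max_j\Rad{N}{\mathcal{F}_{\bm\Xi}^{(j)}}$, where $\mathcal{F}_{\bm\Xi}^{(j)}$ is the class of the $j$-th scalar output of the encoder and $C(L_\phi,d)$ depends only on $L_\phi$ and polynomially on $d$; the $1/d$ normalization is precisely what keeps $D_{\mathrm{KL}}$ of order one and these constants controlled. Finally, the per-output Rademacher complexity of a bounded-weight, Lipschitz-activation network is $\tilde{\mathcal{O}}(1/\sqrt N)$ — exactly the bound established for $\mathcal{F}_{\bm\Xi}$ in \cite{hieu2025generalization} — and substituting this into the two displays above, with $M$, $C(L_\phi,d)$ and $\sqrt{\log(1/\delta)}$ absorbed into the $\tilde{\mathcal{O}}$, yields the claim.

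The one genuinely delicate point — and the reason the formal theorem carries a hypothesis the informal version suppresses — is the $-\log\sigma^2$ term: if the variance head is not bounded away from $0$, then $\phi$ is neither bounded nor Lipschitz, and both the concentration step and the contraction step break down. The crux of the formal proof is therefore to verify that the VSimCLR/VSupCon architecture — a softplus- or exponential-type variance head applied to bounded pre-activations, with an explicit floor — enforces $\sigma^2\in[\sigma_{\min}^2,\sigma_{\max}^2]$, and to track how $\sigma_{\min}$ enters the constants $M$ and $L_\phi$. Everything else — McDiarmid, symmetrization, the Maurer vector-contraction bookkeeping, and quoting the network Rademacher bound from \cite{hieu2025generalization} — is routine.
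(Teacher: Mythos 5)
Your proposal is correct in outline and shares the paper's skeleton (symmetrization to a Rademacher complexity, a Lipschitz reduction from the KL-loss class to the encoder class $\mathcal{F}_{\bm\Xi}$, and an appeal to the complexity bound of \cite{hieu2025generalization}), but it differs in two substantive ways. First, the point you single out as the crux --- that $-\log\sigma^2$ is unbounded and non-Lipschitz unless $\sigma^2$ is floored --- is dissolved rather than hypothesized away in the paper: the formal setup (\eqref{eq:twoparts}--\eqref{eq:KLloss}) has the encoder output the \emph{log-variance} $\bm\varepsilon$, so the troublesome term becomes the linear $-\langle\bm 1_d,\bm\varepsilon\rangle$ and the only nonlinearity is $\exp(\bm\varepsilon)$, which is automatically controlled because bounded weights and Lipschitz activations force $\|f_{\bm\Theta}(\bm x)\|_2\le B_L$. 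No $\sigma_{\min}$ assumption is needed; the price, which your sketch does not surface, is that the Lipschitz constant of the loss becomes $\eta=\mathcal{O}(e^{B_L})$, i.e.\ exponential in the product of the layer norms (harmless for the $N$-dependence but worth stating). This is exactly the ``exponential-type variance head'' you guessed, so your instinct was right, but in the paper it is part of the loss definition, not an extra architectural hypothesis to verify. Second, your contraction step uses Maurer's vector-contraction inequality plus coordinate-wise sub-additivity to reduce to per-output Rademacher complexities, whereas the paper transfers the $L_{\infty,2}$ covering number of $\mathcal{F}_{\bm\Xi}$ (Lemma~\ref{lem:covering-FXi}, which is what \cite{hieu2025generalization} actually provides --- a covering bound, not a per-coordinate Rademacher bound) to an $L_2$ covering number of the loss class via the $\ell_2$-Lipschitzness of $D$, and then applies Dudley's entropy integral. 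Both routes work; the covering-number route composes directly with the available lemma and keeps the dimension bookkeeping cleaner (the $1/d$ normalization yields a $1/\sqrt d$ in the Lipschitz constant, offsetting the $\sqrt d$ from the vector-valued output), while your route would require first converting the covering bound into per-coordinate Rademacher complexities and tracking the resulting $d$-dependent factors.
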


Theorem~\ref{thm:genbnd_informal} shows that the generalization gap of the KL term decays as \(\tilde{\mathcal{O}}(1/\sqrt{N})\).  
In contrast, as shown in \cite[Theorem~1]{hieu2025generalization}, the gap of InfoNCE scales as \(\tilde{\mathcal{O}}(1)\) and does not improve with the number of negative samples \(N\).  
Moreover, their analysis assumes that \(\{\bm{x}_i\}_{i=1}^N\) are drawn from class-conditional distributions, which is a strictly stronger assumption than our i.i.d. assumption from the marginal distribution.  
Therefore, the KL regularizer does not degrade the generalization guarantees of InfoNCE, while also providing principled uncertainty quantification.

\begin{figure*}
    \centering
    \subfigure[]{\includegraphics[width=0.245\textwidth]{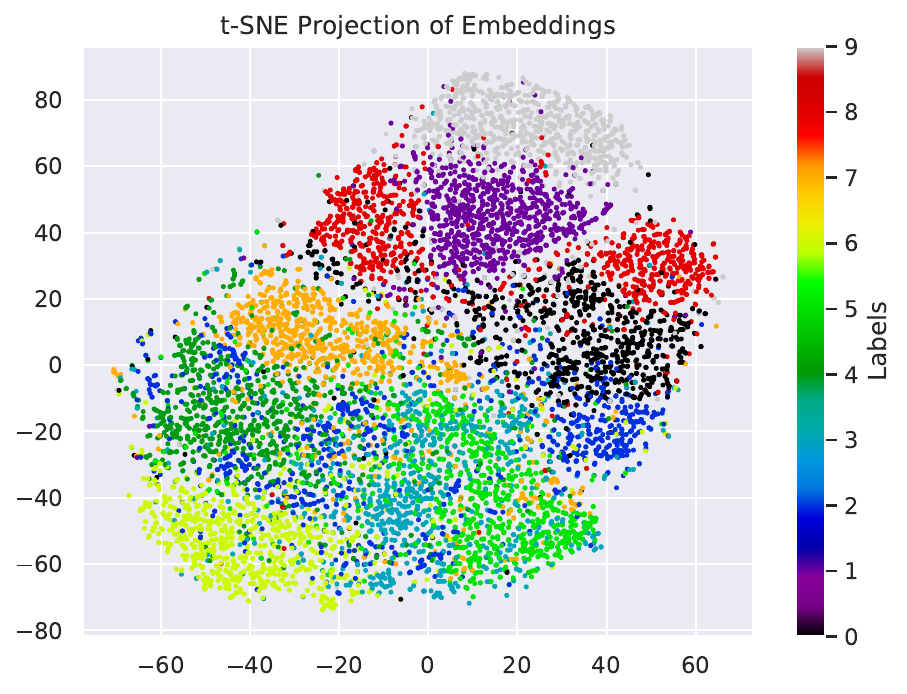} \label{subfig:t_sne_SimCLR_cifar10}}  
    \subfigure[]{\includegraphics[width=0.245\textwidth]{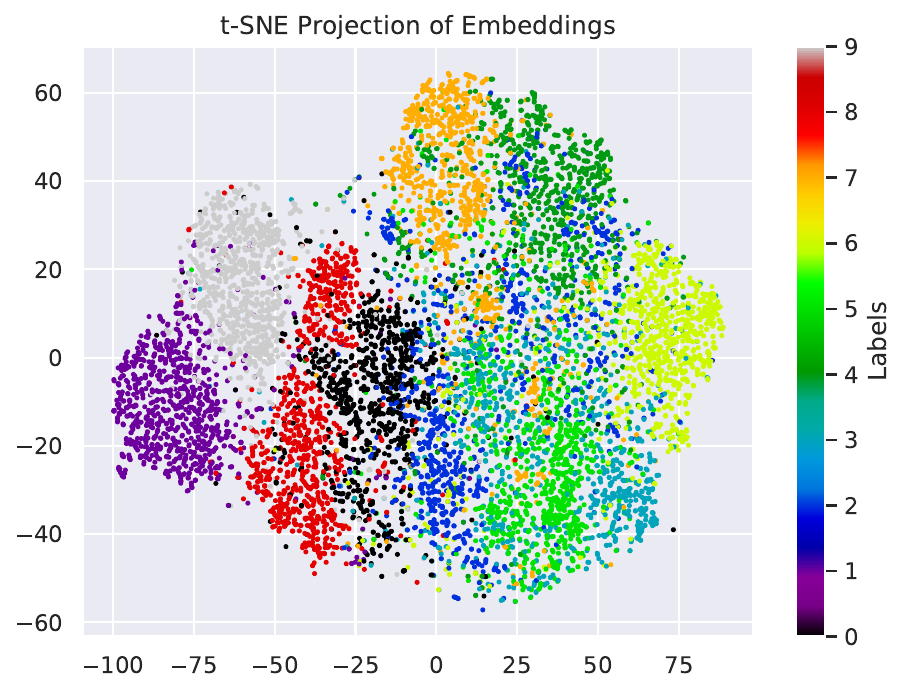}\label{subfig:t_sne_VSimCLR_cifar10} } 
    \subfigure[]{\includegraphics[width=0.23\textwidth]{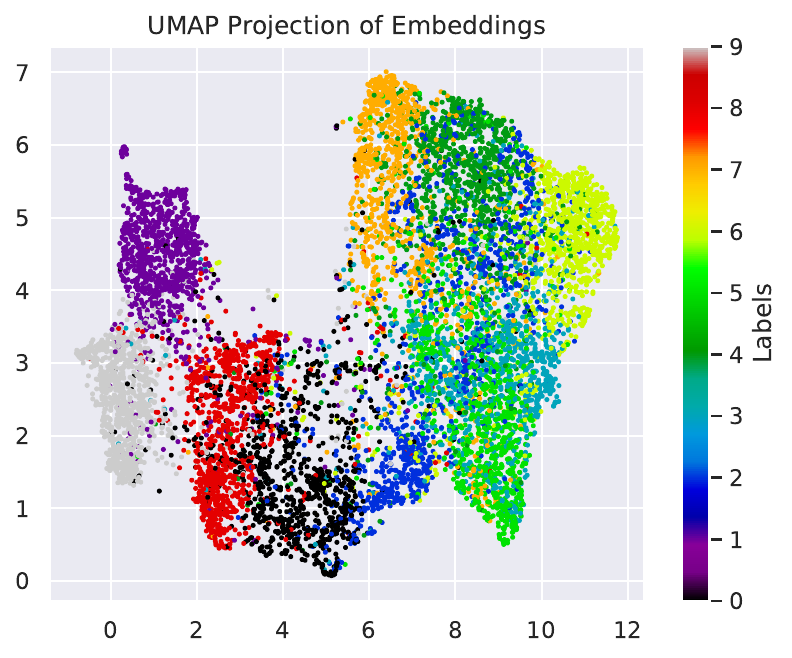} \label{subfig:umap_SimCLR_cifar10} }
    \subfigure[]{\includegraphics[width=0.23\textwidth]{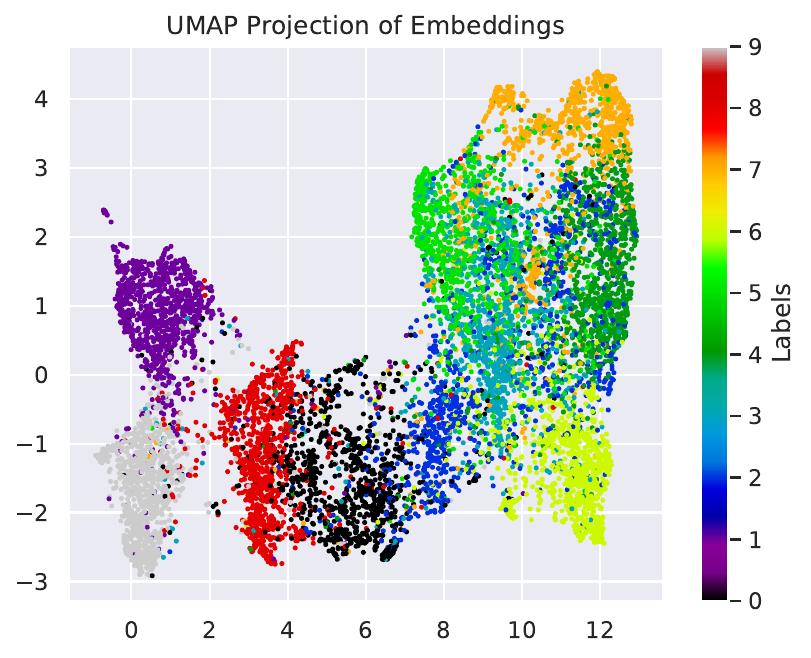}\label{subfig:umap_VSimCLR_cifar10} }
    \caption{Embedding visualization for SimCLR and VSimCLR on CIFAR-10 test set.  
(a) t-SNE of SimCLR;  
(b) t-SNE of VSimCLR;  
(c) UMAP of SimCLR;  
(d) UMAP of VSimCLR.  
VSimCLR preserves the characteristic cluster structure of contrastive learning while introducing probabilistic embeddings regularized by \eqref{eq:upper_kl}.}
    \label{fig:emb_visualization}
\end{figure*}

\section{Experiments}

We evaluate VCL with SimCLR and SupCon across five aspects: (i) embedding visualization, (ii) dimensional collapse, (iii) mutual information between embeddings and labels, (iv) classification accuracy, and (v) implications of distributional embeddings. 
Implementation and training details are provided in Appendix~\ref{app:exp_detail}.

\subsection{Embedding Visualization}

Figure~\ref{fig:emb_visualization} presents t-SNE~\citep{van2008visualizing} and UMAP~\citep{mcinnes2018umap} projections of the embeddings learned by SimCLR and VSimCLR on the CIFAR-10 test set. Although VSimCLR incorporates an additional KL‐regularizer, which often disturb the contrastive learning objective, it preserves the characteristic cluster structure induced by contrastive learning. This confirms that our probabilisitic embeddings retain the semantic information learned by contrastive learning methods.



\subsection{Dimensional Collapse}

\begin{wrapfigure}{r}{0.35\textwidth}
    \centering
    \includegraphics[width=0.35\textwidth]{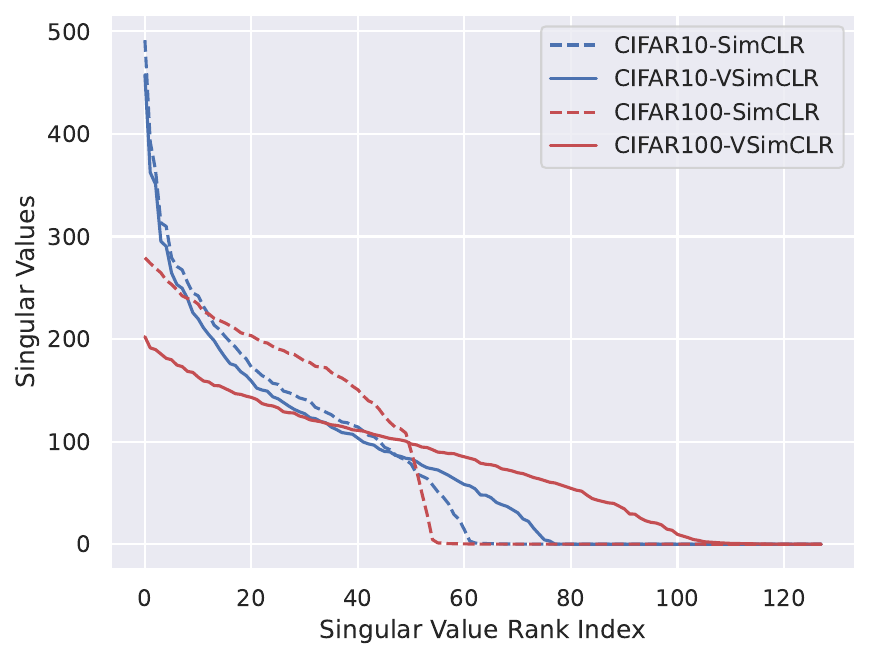}
    \vspace{-0.8em}
    \caption{Singular‐value spectrum of the embedding covariance on CIFAR‐10/100. VSimCLR mitigates dimensional collapse.}
    \label{fig:emb_dim}
  \vspace{-1em}
\end{wrapfigure}
Contrastive learning methods such as SimCLR often suffer from \emph{dimensional collapse}, where embeddings concentrate in a low‐dimensional subspace, underutilizing the full capacity of the representation space~\citep{jing2021understanding}. To quantify this effect, let $\{\zs_i\}_{i=1}^N$ be the test‐set embeddings and their covariance matrix
\begin{align}
	C = \frac{1}{N}\sum_{i=1}^N (\zs_i - \bar{\zs})(\zs_i - \bar{\zs})^\top,
\end{align} 
with $\bar{\zs} = \frac{1}{N}\sum_{i=1}^N \zs_i$.

Figure~\ref{fig:emb_dim} shows the singular values of $C$ for SimCLR and VSimCLR. VSimCLR produces a substantially flatter spectrum, indicating a higher effective rank and thus mitigating dimensional collapse. Remarkably, on CIFAR‐100, VSimCLR nearly doubles the number of dominant components compared to SimCLR. These results demonstrate that VSimCLR not only preserves semantic clustering but also leverages the embedding space more fully, and can be combined with other collapse‐mitigation strategies for further gains. Additional experiments on Caltech-256 and Tiny-ImageNet (Figure~\ref{fig:additional_dim_collapse}, Appendix~\ref{app:dim_collapse}) exhibit similar behavior.

%
%

\subsection{Mutual Information Comparison}
\begin{wrapfigure}{r}{0.3\textwidth}
\vspace{-1em}
  \centering
  \includegraphics[width=0.3\textwidth]{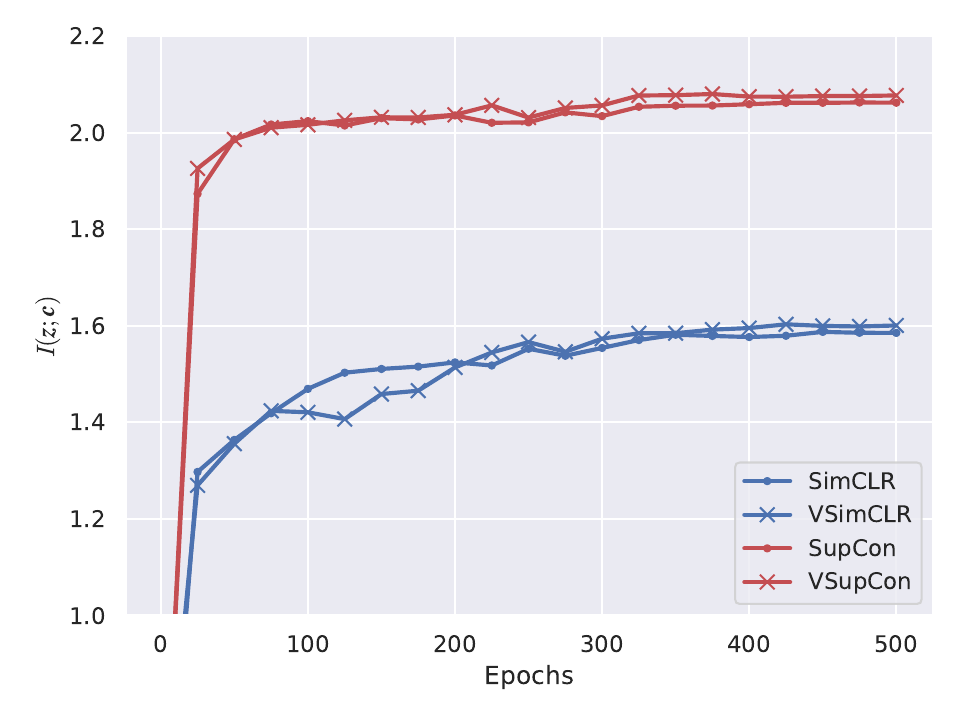}
  \caption{Estimate of $I(\zs;\cs)$.}
  \label{fig:MI_comp}
  \vspace{-1em}
\end{wrapfigure}

Figure~\ref{fig:MI_comp} reports the estimated mutual information $I(\zs;\cs)$ between the learned embeddings $\zs$ and their true class labels $\cs$ of CIFAR-10. We compute this using the Mixed KSG estimator~\citep{gao2017estimating}, which is well‐suited for mixed or multimodal distributions.

Both VSimCLR and VSupCon achieve mutual information on par with—or slightly exceeding—their non‐variational counterparts. 
These results indicate that VSimCLR ultimately preserves—or even improves—information between embeddings and labels, while also producing rich distributional representations.

\begin{table*}[t]
\caption{Classification accuracy on various datasets. We report top-1 and top-5 accuracies. 
}
\label{tab:classification}
\vskip -0.15in
\begin{center}
\begin{scriptsize}
\begin{sc}
\begin{tabular}{lcccccccccc}
\toprule
\textbf{Method} & \multicolumn{2}{c}{CIFAR-10} & \multicolumn{2}{c}{CIFAR-100}  & \multicolumn{2}{c}{Tiny-imagenet} & \multicolumn{2}{c}{STL10} & \multicolumn{2}{c}{Caltech256} \\
\cmidrule(r){2-3} \cmidrule(r){4-5} \cmidrule(r){6-7} \cmidrule(r){8-9} \cmidrule(r){10-11} 
 & Top1 & Top5 & Top1 & Top5 & Top1 & Top5 & Top1 & Top5 & Top1 & Top5  \\
\midrule
SimCLR   & 78.42 & 98.52 & 49.56 & 78.84 & 38.95 & 66.89 & 60.44 & 95.80 & 43.14 & 66.15  \\
VSimCLR & 81.48 & 98.95 & 54.58 & 82.87 & 37.70 & 66.06 & 60.11 & 92.00 & 48.50 & 69.99   \\
\hdashline
SupCon & 93.60 & 99.71 & 70.79 & 89.11 & 57.60 & 77.16 & 75.88 & 98.51 & 87.06 & 91.64   \\
VSupCon & 93.85 & 99.68 & 71.66 & 89.42 & 48.30 & 72.84 & 75.76 & 96.99 & 83.06 & 91.29 \\
\bottomrule
\end{tabular}
\end{sc}
\end{scriptsize}
\end{center}
\vskip -0.1in
\end{table*}

\subsection{Classification}

For classification, we use the posterior mean $\boldsymbol{\mu}_{\mathbf{x}}$ as the embedding and train a linear classifier.\footnote{In Appendix~\ref{app:DistNCE}, we present additional results exploring various VSimCLR design choices.}
Table~\ref{tab:classification} reports Top-1 and Top-5 accuracies on CIFAR-10, CIFAR-100, Tiny-ImageNet, STL-10, and Caltech-256. VSimCLR outperforms SimCLR on CIFAR-10 and CIFAR-100 in Top-1 accuracy, with similar gains in Top-5. On Caltech-256, VSimCLR also improves Top-1 accuracy substantially. Performance on Tiny-ImageNet and STL-10 remains comparable, with slight decreases (within experimental variance) likely due to the KL regularizer.

SupCon provides supervised baselines, and VSupCon further improves Top-1 accuracy on CIFAR-10 and CIFAR-100. Modest declines on Tiny-ImageNet, STL-10, and Caltech-256 reflect the trade-off of adding the KL term on datasets with higher complexity or fewer samples. 
We hypothesize that the drop in VSupCon arises from two factors: (i) VSimCLR’s objective coincides with the VCL objective in~\eqref{eq:VCL_obj}, whereas VSupCon’s does not, creating a mismatch that may hinder proper ELBO maximization; and (ii) SupCon directly optimizes embeddings for classification, so an added KL term can conflict with that objective. We further study the effect of the KL regularizer in Appendix~\ref{app:KL_classification}.

Although VCL is not designed to boost classification accuracy, VSimCLR consistently match or exceed their deterministic counterparts. This demonstrates that probabilistic embeddings preserve the alignment and uniformity~\citep{wang2020understanding}, while yielding meaningful uncertainty~proxy.

\begin{figure}[t]
  \centering
  \includegraphics[width=0.9\linewidth]{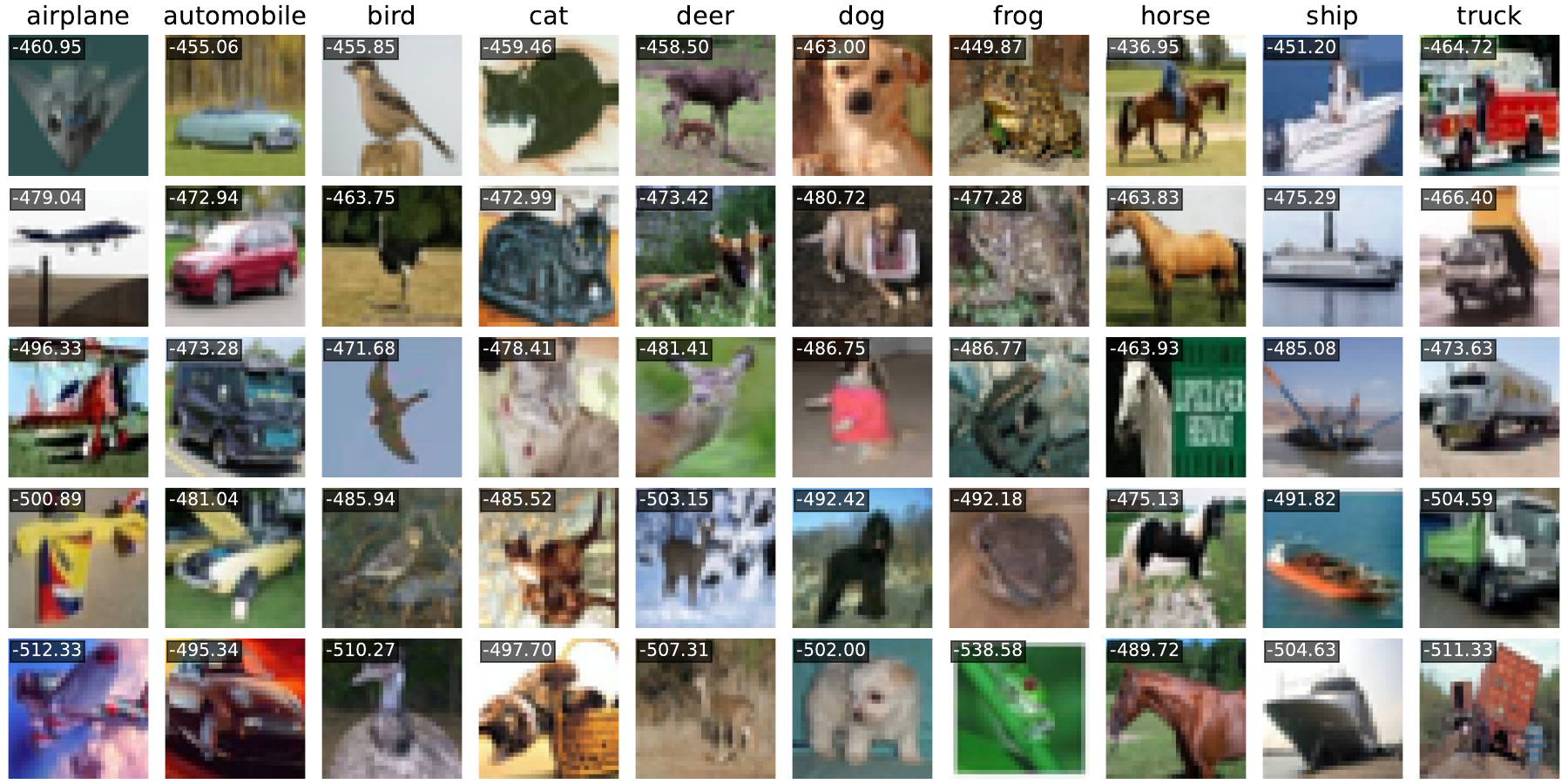}
  \caption{Sample images from the CIFAR-10, organized by class (columns) and sorted by their corresponding $\log \det(K)$ (rows). In each column, the top image has the highest $\log \det(K)$, the bottom image the lowest; the overlaid numbers indicate each image’s $\log \det(K)$.}
  \label{fig:examples}
\end{figure}

\subsection{Implications of Distributional Embeddings}


\paragraph{Examples of CIFAR-10 with posterior.}
We illustrate the interpretability of posterior using examples from CIFAR-10. Figure~\ref{fig:examples} displays sample images alongside the log‐determinant $\log\det(K)$ of their posterior covariance $K$ learned by VSimCLR. Top‐row images are common class members and exhibit larger $\log\det(K)$—indicating broader posterior dispersion—whereas bottom‐row images are atypical or uncommon with smaller $\log\det(K)$, reflecting more concentrated posteriors.\footnote{$\log\det K$ quantifies the \emph{dispersion of the posterior in embedding space}, which reflects \emph{typicality} rather than label uncertainty. Larger values correspond to more “typical” samples with many latent realizations consistent with the data manifold, whereas smaller values indicate more “unique” or outlier samples with tightly concentrated posteriors. A generative analogy may help understanding: if an outlier image had an extremely large posterior variance, then samples drawn from the prior would reproduce that outlier far too often—contradicting its rarity. Hence, larger variance corresponds to “typical” not “uncertain” inputs.}

\begin{wrapfigure}{r}{0.3\textwidth}
\vspace{-1em}
  \centering
  \includegraphics[width=0.3\textwidth]{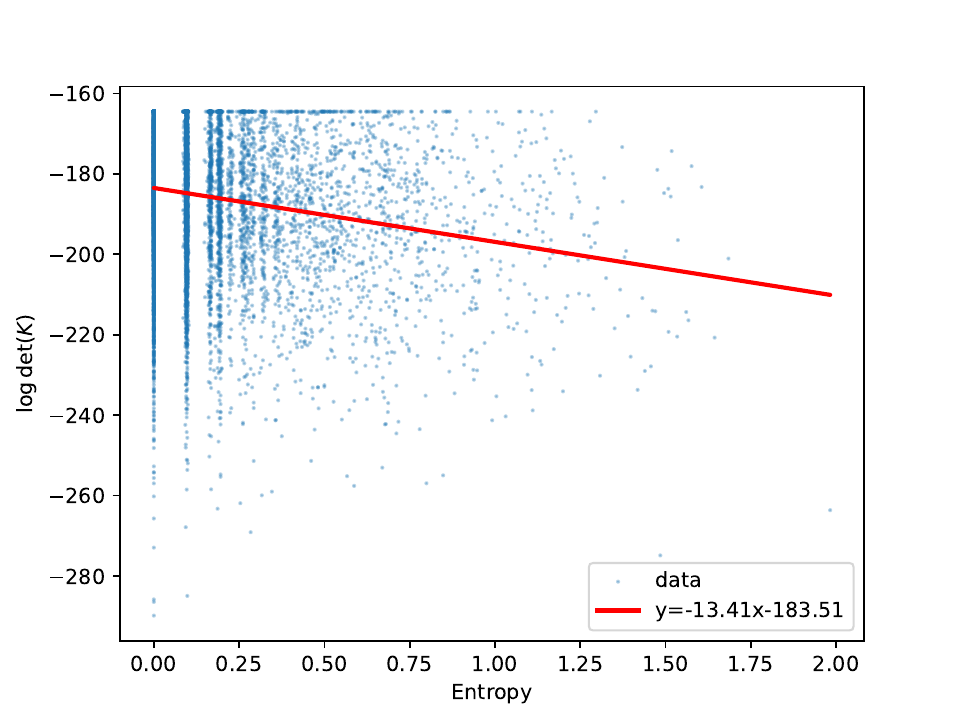}
  \caption{Posterior dispersion versus label ambiguity. Each point plots $\log\det(K)$ against the entropy of class probabilities from CIFAR-10H, with a first‐order linear fit (red line).}
  \label{fig:lin_reg}
  \vspace{-1em}
\end{wrapfigure}

\paragraph{Relationship between label-entropy and $\log\det(K)$.}
We quantify the relationship between posterior covariance and uncertainty using CIFAR-10H~\citep{peterson2019human} and CIFAR-10C~\citep{hendrycks2019robustness}. Figure~\ref{fig:lin_reg} and~\ref{fig:lin_reg_trace} plot posterior dispersion against the entropy of the CIFAR-10H soft labels~\citep{ishida2023is,jeong2023demystifying}; the negative slope of the linear fit (red line) indicates that images with lower $\log\det(K)$—i.e., more concentrated posteriors—tend to have higher label entropy and thus greater ambiguity. Next, using CIFAR-10C, we examine how posterior covariance varies with corruption severity, which correlates with label uncertainty. Figures~\ref{fig:cifar10c_vsim} and~\ref{fig:cifar10c_vsup} show that $\log\det(K)$ decreases as corruption strength increases, implying that lower posterior dispersion corresponds to higher uncertainty, consistent with Figure~\ref{fig:lin_reg}.
These results demonstrate that the dispersion of the learned posterior correlates with semantic uncertainty, highlighting the practical interpretability of VCL’s distributional embeddings. 

\paragraph{Use case of posterior.}
As an example application of posterior, we consider CIFAR-100 under a label-scarce setting in which only a small number of labels per class are available to train a linear classifier. Table~\ref{tab:label_scarcity} reports accuracies for SimCLR, VSimCLR, and VSimCLR+wt, with classifiers trained using cross-entropy (CE). Here, “+wt” denotes a weighted CE in which sample weights are proportional to posterior covariance to downweight ambiguous examples. Specifically, we use
$\Lc_{wCE} = \sum_{i=1}^N w_i \log \phi_{c_i}(\zs_i), \text{ with } w_i \propto \log\det(K) \text{ (after normalization)}$,
where $\phi_{c_i}(\zs_i)$ is the estimated probability of the true class. Table~\ref{tab:label_scarcity} shows that VCL variants improve over SimCLR and SupCon, with smaller gains for SupCon since it already leverages labels during pretraining. Moreover, weighting by posterior covariance further improves performance, supporting probabilistic embeddings as a confidence proxy. 
Additional experiments and discussion are provided in Appendix~\ref{app:implication}.

This counterintuitive finding—that typical (i.e., common) samples exhibit larger posterior dispersion—parallels the observation in concurrent work in~\cite{guth2025learning}, albeit under different settings: (i) Quantity: we analyze latent-space posterior via $\log\det K$, whereas they study input-space marginal density $p(x)$; (ii) Observation: typical samples have larger $\log\det K$, while they have lower marginal density. Although the quantities are measured in different spaces, both results indicate that typical samples are not the highest-density points. In our case, typical images yield larger posterior dispersion and atypical images smaller dispersion; since dispersion is inversely related to peak density, our result aligns with~\citep{guth2025learning}. Hence, in both settings, “typical” $\neq$ “highest-density.”

\begin{figure}[t]
    \centering
    \subfigure[$\log \det(K)$ heatmap]{\includegraphics[width=0.3\textwidth]{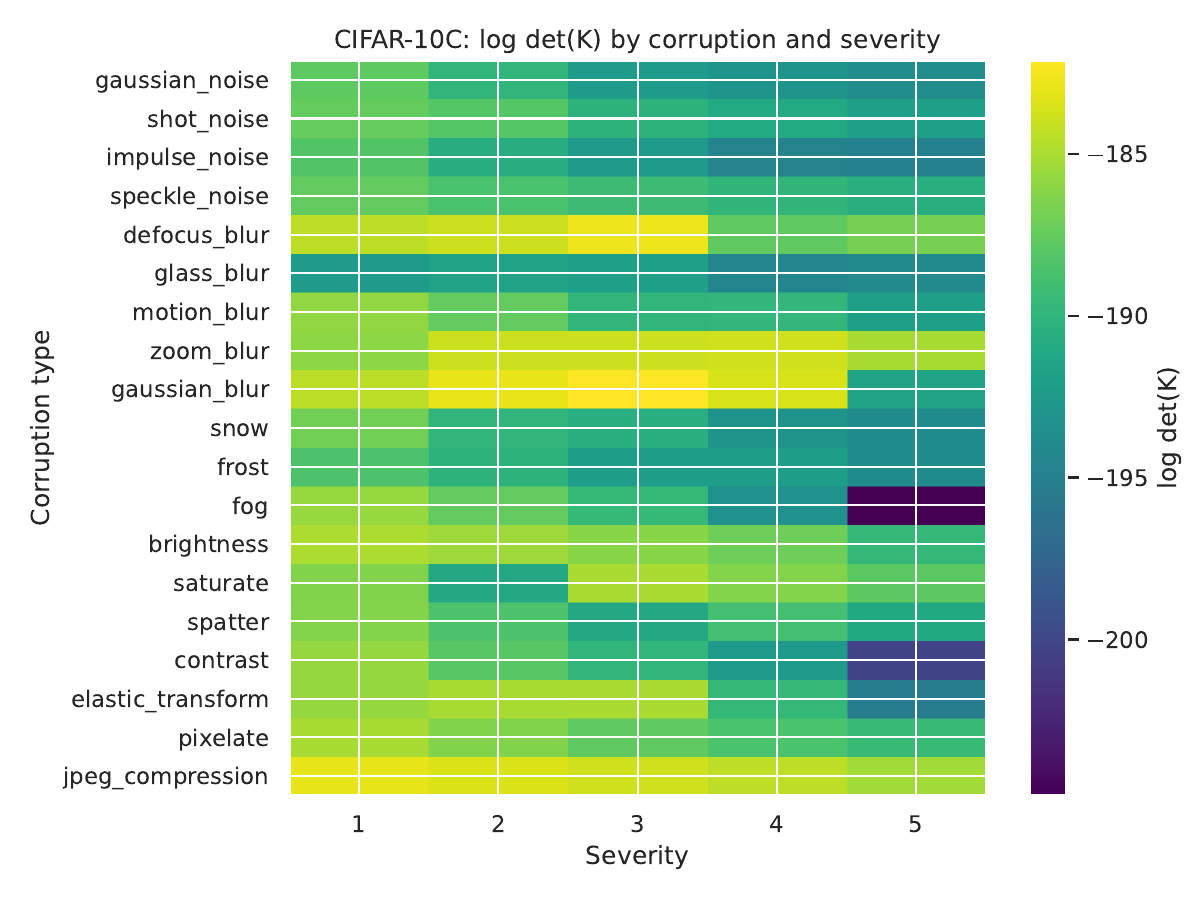} \label{subfig:cifar10c_vsim_heatmap}}  
    \hfil
    \subfigure[Change in $\log \det(K)$ from severity 1 to 5 (sorted)]{\includegraphics[width=0.3\textwidth]{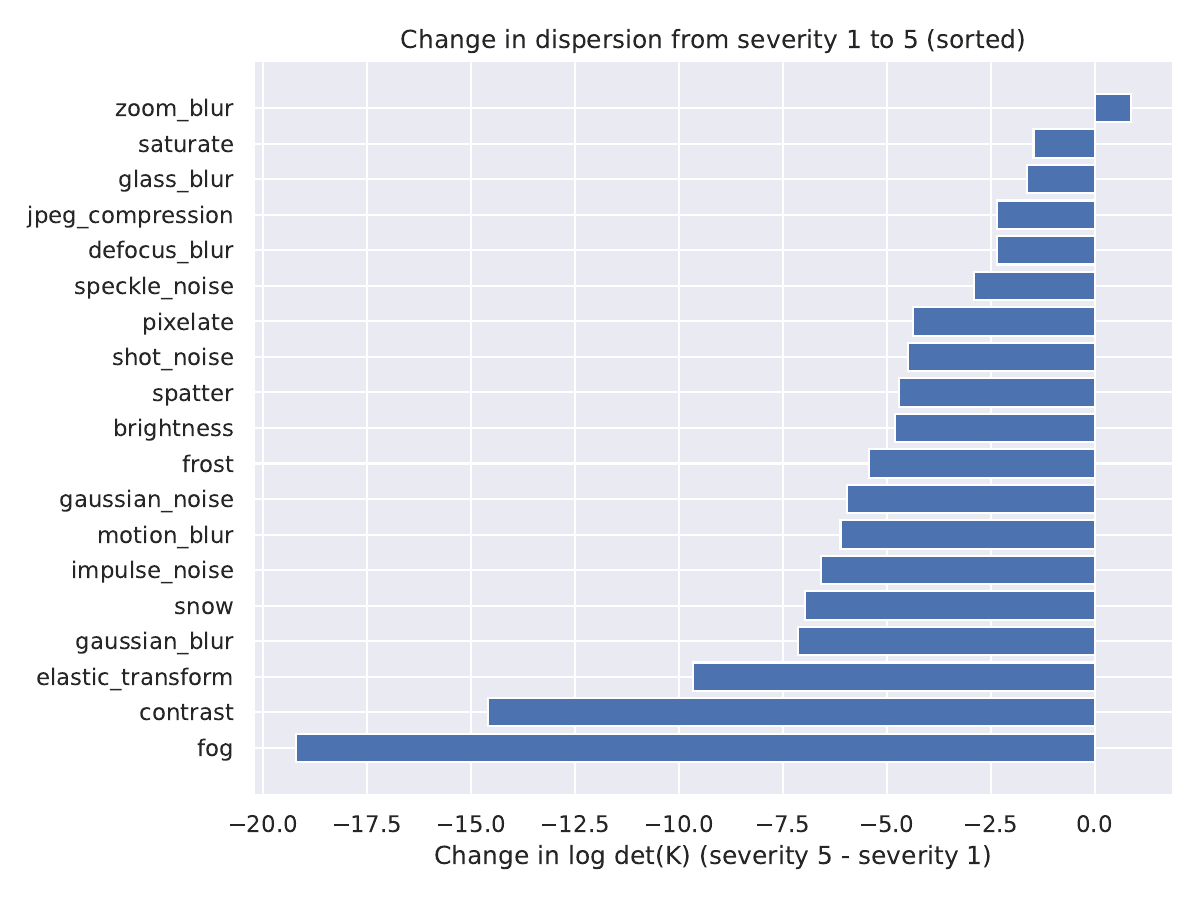}\label{subfig:cifar10c_vsim_bar} } 
    \hfil
    \subfigure[$\log \det(K)$ vs. severity (top-6 most changing corruptions)]{\includegraphics[width=0.3\textwidth]{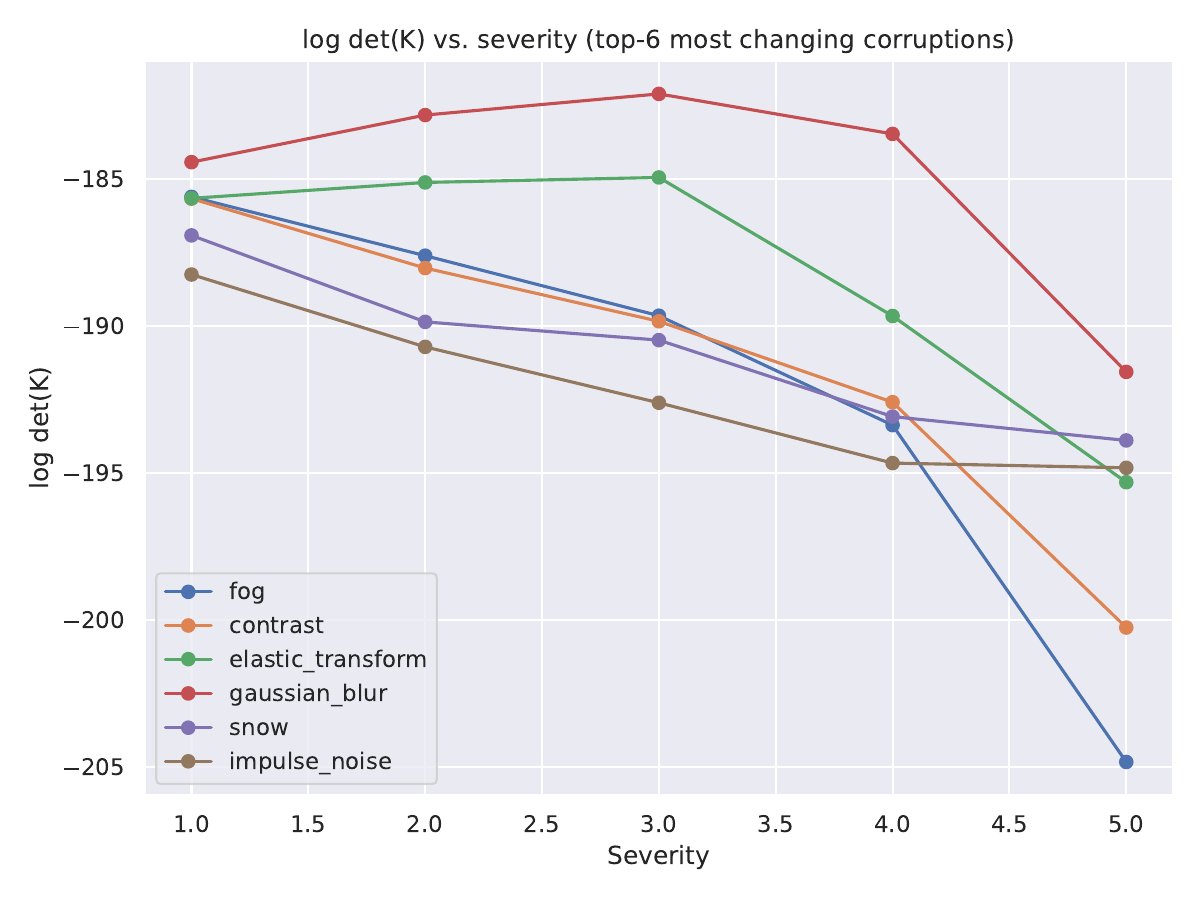} \label{subfig:cifar10c_vsim_top6} }
    \caption{$\log \det(K)$ of \textbf{VSimCLR} embeddings on CIFAR-10C under different corruption types and severities. “Severity” denotes the corruption level. 
    Exact $\log \det(K)$ values are in Table~\ref{tab:cifar10c_vsim_logdet}.}
    \label{fig:cifar10c_vsim}

\end{figure}

\begin{table}[t]
\caption{Classification accuracy on CIFAR-100 with label scarcity. We use ResNet-18 back-bone and same augmentations for all experiments. We sample the labelled subset once and report the mean accuracy of five runs with (standard error).
}
\label{tab:label_scarcity}
\begin{center}
\begin{scriptsize}
\begin{sc}
\begin{tabular}{l|ccccccccc}
\toprule
\textbf{Methods} & 1 \textbf{Labels / class} & 3 \textbf{Labels / class}  & 5 \textbf{Labels / class} & 10 \textbf{Labels / class} & 20 \textbf{Labels / class}  \\
\midrule
SimCLR & 12.22 (0.12) & 21.37 (0.15)  & 26.37 (0.01) & 33.09 (0.11) & 38.00 (0.06) \\
VSimCLR & 15.57 (0.09) & 25.70 (0.19) & 30.89 (0.11) & 37.40 (0.08) & 42.13 (0.10)   \\
VSimCLR+wt & \textbf{15.97} (0.08) & \textbf{26.07} (0.20) & \textbf{31.12} (0.06) & \textbf{37.48} (0.08) & \textbf{42.36} (0.03) &  \\
\hdashline
SupCon & 71.55 (0.04) & 71.56 (0.05) & 71.64 (0.02)  & 71.65 (0.03) & 72.07 (0.05)  \\
VSupCon & 71.77 (0.12) & \textbf{71.79} (0.10) &  \textbf{71.96} (0.09) & \textbf{72.07} (0.05)  & \textbf{72.16} (0.04)  \\
VSupCon+wt & \textbf{71.87} (0.02) & \textbf{71.78} (0.07) &  \textbf{71.94} (0.07) & \textbf{72.07} (0.07) & \textbf{72.16} (0.06)   \\
\bottomrule
\end{tabular}
\end{sc}
\end{scriptsize}
\end{center}
\end{table}

\section{Conclusion}

We have introduced \emph{Variational Contrastive Learning} (VCL), a decoder-free ELBO-maximization framework that endows contrastive learning with principled probabilistic embeddings. By interpreting InfoNCE as a surrogate reconstruction term and regularizing with a KL divergence to a uniform prior on the unit sphere, VCL learns posterior without explicit decoders. We instantiated VCL in two variants—VSimCLR and VSupCon—by replacing deterministic embeddings with samples from $q_\theta(\zs\mid\xs)$ and adding a normalized KL term.

Theoretical and empirical results show that VCL preserves the properties of contrastive embeddings, mitigates dimensional collapse, maintains or improves mutual information with labels, and matches or exceeds deterministic baselines in classification accuracy, while also providing meaningful posterior uncertainty estimates.

We further analyzed the implications of probabilistic embeddings—spanning label uncertainty, typicality, and OOD behavior—through posterior-covariance dispersion. We also observed a counterintuitive but consistent pattern, echoed in concurrent diffusion-model work~\citep{guth2025learning}: lower posterior-covariance dispersion is associated with higher sample uniqueness (i.e., more atypical or outlier examples), whereas typical samples exhibit larger posterior covariance dispersion.

\bibliographystyle{abbrv}
\bibliography{ref}


\newpage

\appendix

\section{Related work}

\subsection{Contrastive learning}

Self‐supervised contrastive learning methods~\citep{chen2020simple,tian2020makes} train an encoder $f\colon \mathcal{X}\to \mathcal{S}^{d_z-1}$ by drawing semantically related views (positives) together in the embedding space while pushing unrelated views (negatives) apart.  In the standard setup, each example is treated as its own category, and only its augmented copies count as positives.  A variety of contrastive objectives—such as InfoNCE~\citep{oord2018representation}, Debiased Contrastive Loss~\citep{chuang2020debiased}, Unbiased Contrastive Loss~\citep{barbano2022unbiased}, triplet‐based losses~\citep{chopra2005learning,hermans2017defense}, and others—have been used to learn robust representations for tasks ranging from dense prediction in computer vision~\citep{wang2021dense} to multimodal alignment~\citep{radford2021learning,girdhar2023imagebind,jeong2024anchors}.  InfoNCE~\citep{oord2018representation} in particular has been shown to lower‐bound mutual information~\citep{poole2019variational}, and subsequent work has revealed that its empirical success hinges on a balance of \emph{alignment} and \emph{uniformity} in the learned embeddings~\citep{tschannen2019mutual,wang2020understanding}. In the supervised setting, SupCon~\citep{khosla2020supervised} extends this idea by using class labels to define positive pairs among same‐class samples, often surpassing cross‐entropy training in downstream performance. ProjNCE, a generalization of SupCon~\citep{jeong2025generalizing}, modifies SupCon loss so that it becomes a proper mutual information lower bound.

\subsection{Probabilistic contrastive learning}

A growing body of work has begun to integrate probabilistic latent‐variable modeling with contrastive objectives.  In the video domain, Park et al.\ represent each video clip as a Gaussian and combine them into a mixture model, learning these distributions via a stochastic contrastive loss that captures clip‐level uncertainty and obviates complex augmentation schemes~\citep{park2022probabilistic}.  For 3D point clouds, Wang et al.\ propose a Generative Variational‐Contrastive framework that models latent features as Gaussians, enforces distributional consistency across positive pairs by combining the variational autoencoder and contrastive learning~\citep{wang2024generative}.  
In graph representation learning, Xie and Giraldo introduce Subgraph Gaussian Embedding Contrast, which maps subgraphs into a structured Gaussian space and employs optimal‐transport distances for robust contrastive objectives, yielding improved classification and link‐prediction performance~\citep{xie2024variational}.  

On the theoretical front, Zimmermann et al.\ prove that contrastive objectives invert the data‐generating process under mild conditions, uncovering a deep connection to nonlinear independent component analysis~\citep{zimmermann2021contrastive}.  
With a more generalized setting, Kirchhof et al.\ extend the InfoNCE loss so that the encoder predicts a full posterior distribution rather than a point, and prove that these distributions asymptotically recover the true aleatoric uncertainty of the data‐generating process~\citep{kirchhof2023probabilistic}.  

\subsection{Variational Inference and Contrastive Learning}

The most closely related line of work frames contrastive learning within a latent‐variable inference paradigm via Recognition‐Parametrised Models (RPMs)~\citep{aitchison2021infonce,walker2023unsupervised}.  Aitchison and Ganev introduce RPMs as a class of Bayesian models whose (unnormalized) likelihood is defined implicitly through a recognition network~\citep{aitchison2021infonce}.  They show that, under RPMs, the ELBO decomposes into mutual information minus a KL term (up to a constant), and that for a suitable choice of prior the infinite‐sample InfoNCE objective coincides with this ELBO.  Walker et al.\ consider RPMs by assuming conditional independence of observations given latent variables, and develop an EM algorithm that achieves exact maximum‐likelihood learning for discrete latents along with principled posterior inference~\citep{walker2023unsupervised}.

Other works recast variational inference itself as a contrastive estimation task.  Rhodes and Gutmann’s Variational Noise‐Contrastive Estimation (VNCE) derives a variational lower bound to the standard NCE objective, enabling joint learning of model parameters and latent posteriors in unnormalized models~\citep{rhodes2019variational}.  More recently, Ward et al.\ propose SoftCVI, which treats VI as a classification problem: they generate “soft” pseudo‐labels from the unnormalized posterior and optimize a contrastive‐style objective that yields zero‐variance gradients at the optimum~\citep{ward2025softcvi}.

\subsection{Dimensional collapse}

In contrastive self-supervised learning, several approaches have been proposed to prevent dimensional collapse by regularizing either the embedding projector or the second-order statistics of the representations. Jing \emph{et al.}\ \citep{jing2021understanding} first demonstrated that, despite the repulsive effect of negative samples, embeddings can still collapse to a low-dimensional subspace due to a combination of strong augmentations and implicit low-rank bias in weight updates. They introduced DirectCLR, which fixes a low-rank diagonal projector during training; this projector enforces the embeddings to occupy a predetermined subspace and was shown empirically to outperform SimCLR’s learned linear projector.

Following this, several works have designed novel loss functions that explicitly regularize the covariance or cross-correlation of the embedding vectors. Ermolov \emph{et al.}\ \citep{ermolov2021whitening} apply a whitening MSE loss so that positive pairs match under mean-square error while enforcing identity covariance. Barlow Twins \citep{zbontar2021barlow} minimize the deviation of the normalized cross-correlation matrix from the identity, effectively performing “soft whitening” to reduce redundancy. VICReg \citep{bardes2021vicreg} further augments this idea by combining variance, invariance, and covariance regularizers to avoid collapse without using negative samples; notably, VICReg allows its two branches to use different architectures or even modalities, enabling joint embedding across data types. More recently, He \emph{et al.}\ \citep{he2024preventing} showed that orthogonal regularization of encoder weight matrices preserves representation diversity and prevents collapse.

\section{Proofs}

\subsection{Proof of Lemma~\ref{lem:recon_app_jensen}}\label{app:recon_app_jensen}

\begin{proof}
With any auxiliary probability function $r(\zs’|\xs)$ and Jensen’s inequality, we have
\begin{align}
	\mathbb{E}_{q(\zs|\xs)} [ \log p(\xs|\zs) ] 
	& \geq \mathbb{E}_{q(\zs|\xs) r(\zs’|\xs)} \left[ \log \frac{p(\zs’|\xs) p(\xs|\zs’)}{r(\zs’|\xs)} \right] \nonumber \\
	& \overset{\rm (a)}{=} \mathbb{E}_{q(\zs|\xs) r(\zs’|\xs)}[\log p(\zs’|\zs)] + \mathbb{E}_{r(\zs’|\xs)}[\log p(\xs|\zs’)] + H(r(\zs’|\xs)) \nonumber\\
	& = \mathbb{E}_{q(\zs|\xs)q(\zs’|\xs)}[\log p(\zs’|\zs)] + {\rm const.},
\end{align}
where $\rm (a)$ follows by choosing $r(z’|x) = q(z’|x)$.
This proves Lemma~\ref{lem:recon_app_jensen}.
\end{proof}

\subsection{Proof of Proposition~\ref{prop:opt_inf}}\label{app:opt_inf}

\begin{proof}
Optimal critic~\citep{ma-collins-2018-noise} for InfoNCE satisfies that
\begin{align}\label{eq:nce_inf_opt_proof}
	\psi^\star(\xs,\zs)
	& \propto \log \frac{p(\xs|\zs)}{p(\xs)} + \alpha(\zs),
\end{align}
where $\alpha(\zs)$ only depends on $\zs$.
With the optimal critic, we then have
\begin{align}
	I_{\rm NCE}(\xs;\xs')
	& = - \EE\left[  \log \frac{e^{\psi(\zs,\zs'_i) }}{ \sum_{j=1}^N e^{\psi(\zs,\zs'_j)}} \right]  \nonumber \\
	& = - \EE\left[  \log \frac{ p(\zs|\zs'_i) }{ \sum_{j=1}^N p(\zs|\zs'_j) } \right]  \nonumber \\
	& = - \EE\left[  \log \frac{ p(\zs|\zs'_i) }{ \frac{1}{N}\sum_{j=1}^N p(\zs|\zs'_j) } \right]  + \log N.
\end{align}
Given $\zs$, since $p(\zs|\zs'_j),~j\in\{1,2,\cdots,N\}$ are i.i.d. with $\EE[p(\zs|\zs_j')] = p(\zs) < \infty$, the strong law of large numbers yields 
\begin{align}
	\lim_{N\to\infty} \frac{1}{N} \sum_{j=1}^N p(\zs|\zs'_j)
	& = p(\zs).
\end{align}
The continuous mapping theorem then gives
\begin{align}
	 \lim_{N\to\infty}  \log \frac{ p(\zs|\zs'_i) }{ \frac{1}{N}\sum_{j=1}^N p(\zs|\zs'_j) }
	 & = \log \frac{ p(\zs|\zs'_i) }{ p(\zs)}.
\end{align}

Rearranging~\eqref{eq:nce_inf_opt_proof} and taking $N\to\infty$, we obtain
\begin{align}\label{eq:dominated_conv}
	\lim_{N\to\infty} \left\{ I_{\rm NCE}(\xs;\xs') + \log N \right\} 
	& =  \lim_{N\to\infty} \EE\left[  \log \frac{ p(\zs|\zs'_i) }{ \frac{1}{N}\sum_{j=1}^N p(\zs|\zs'_j) } \right] \nonumber \\
	& \overset{\rm (a)}{=} \EE\left[  \lim_{N\to\infty} \log \frac{ p(\zs|\zs'_i) }{ \frac{1}{N}\sum_{j=1}^N p(\zs|\zs'_j) } \right] \nonumber \\
	& = \EE\left[  \log \frac{ p(\zs|\zs'_i) }{ p(\zs)} \right],
\end{align}
where the equality $\rm (a)$ follows by dominated convergence theorem that is verifiable using the fact that 
\begin{align}
	\EE\left[ \log \frac{ p(\zs|\zs'_i) }{ \frac{1}{N}\sum_{j=1}^N p(\zs|\zs'_j) } \right]
	& = \EE\left[ \log p(\zs|\zs'_i) - \log \frac{1}{N}\sum_{j=1}^N p(\zs|\zs'_j) \right]   \nonumber \\
	& \leq \EE\left[ \log g(\zs) - \log \epsilon \right]   \nonumber \\
	& \leq \log \EE\left[ g(\zs)\right] - \log\epsilon \nonumber \\
	& < \infty.
\end{align}
Rewriting~\eqref{eq:dominated_conv} gives
\begin{align}
	& \lim_{N\to\infty} \left\{ I_{\rm NCE}(\xs;\xs') + \log N \right\} \nonumber \\
	& =  \EE\left[  \log \frac{ p(\zs|\zs'_i) }{ p(\zs) } \right]  \nonumber \\
	& = \EE\left[  \log \frac{ p(\zs'_i|\zs) }{ p(\zs'_i) } \right]   \nonumber \\
	& = \EE_{q_\theta(\zs_i'|\xs)q_\theta(\zs|\xs) } \left[  \log p(\zs'_i|\zs) \right]  + \EE_{q_\theta(\zs_i'|\xs)}\left[ \log p(\zs'_i)  \right] \nonumber \\
	& = \EE_{q_\theta(\zs_i'|\xs)q_\theta(\zs|\xs) } \left[  \log p(\zs'_i|\zs) \right]  + \EE_{q_\theta(\zs_i'|\xs)}\left[ \log \frac{p(\zs'_i)}{q_\theta(\zs_i'|\xs)}  \right] + \EE_{q_\theta(\zs_i'|\xs)}\left[ \log q_\theta(\zs'_i|\xs)  \right] \nonumber \\
	& = \EE_{q_\theta(\zs_i'|\xs)q_\theta(\zs|\xs) } \left[  \log p(\zs'_i|\zs) \right]  - D( q_\theta(\zs_i'|\xs) \| p(\zs'_i) ) -H( q_\theta(\zs'_i|\xs) ) .
\end{align}
Substituting $\zs_i'$ into $\zs'$, this concludes the proof of Proposition~\ref{prop:opt_inf}
\end{proof}

\subsection{Formal statement of Theorem~\ref{thm:genbnd_informal} and its proof}\label{proof:thm}

In this section, we provide detailed problem setup, formal statement of Theorem~\ref{thm:genbnd_informal}, its extension to random augmentation, and the proof of \Cref{thm:genbnd_informal}.

\subsubsection{Problem setup and formal statement of Theorem~\ref{thm:genbnd_informal}}
First, we introduce the class of the deep forward neural network with bounded weights and Lipschitz activations.

Let \(\{\bm x_i\}_{i=1}^N \subset \mathcal X\) be independent and identically distributed samples from a distribution \(\mathcal D\) on \(\mathcal X\).  We consider depth-\(L\) feed-forward networks \(f_{\bm\Theta}:\mathcal X\to\mathbb R^{2d}\) of the form 
\[
f_{\bm\Theta}(\bm x)
=\alpha_L\Bigl(\bm\Theta^L\,\alpha_{L-1}\bigl(\bm\Theta^{L-1}\,\cdots\alpha_1(\bm\Theta^1\bm x)\bigr)\Bigr),
\]
where each activation \(\alpha_\ell:\mathbb R^{d_\ell}\to\mathbb R^{d_\ell}\) is \(\sigma_\ell\)-Lipschitz.  For each layer \(\ell=1,\dots,L\), the weight matrices lie in
\begin{equation}
\label{eq:weightdf}
\bm\Xi_{\ell}
:=\bigl\{\bm\Theta^\ell\in\mathbb R^{d_\ell\times d_{\ell-1}} :
\|\bm\Theta^\ell\|_2\le\rho_\ell,\;\|(\bm\Theta^\ell-\bm \Theta_0^\ell)^\top\|_{2,1}\le a_\ell\bigr\},
\end{equation}
where \(\|\cdot\|_2\) is the spectral norm and \(\|\cdot\|_{2,1}\) is the sum of column-wise \(\ell_2\) norms, and \(\{\bm \Theta_0^\ell\}_{\ell=1}^L\) are fixed reference matrices.  Set \(\bm\Xi=\prod_{\ell=1}^L\bm\Xi_\ell\) and define
\begin{equation}
\label{eq:functionclass}
\mathcal F_{\bm\Xi}
=\bigl\{f_{\bm\Theta}:\bm\Theta\in\bm\Xi\bigr\}.
\end{equation}

In VCL the encoder output splits as
\begin{equation}
\label{eq:twoparts}
f_{\bm\Theta}(\bm x)
=\begin{pmatrix}\mu_{\bm\Theta}(\bm x)\\\varepsilon_{\bm\Theta}(\bm x)\end{pmatrix},
\quad
\mu_{\bm\Theta},\,\varepsilon_{\bm\Theta}:\mathcal X\to\mathbb R^d.
\end{equation}
Given \(\bm{x}\), the KL regularizer by output of \(f_{\bm{\Theta}}(\bm{x})\) in \eqref{eq:twoparts} is
\begin{equation}
\label{eq:KLloss}
D\bigl(f_{\bm\Theta};\bm x\bigr)
=\frac{1}{2d}\Bigl(
\|\mu_{\bm\Theta}(\bm x)\|_2^2
+\bigl\|\exp(\varepsilon_{\bm\Theta}(\bm x))\bigr\|_1
-d
-\bigl\langle\bm1_d,\varepsilon_{\bm\Theta}(\bm x)\bigr\rangle
\Bigr).
\end{equation}

\begin{theorem}
\label{thm:genbnd}
Fix \(\delta\in(0,1)\). Under the assumptions on \(\{\bm{x}_i\}_{i=1}^N\) and definitions on \(\mathcal{F}_{\bm{\Xi}}\) \eqref{eq:functionclass} and \(D\bigl(f_{\bm\Theta};\bm x\bigr)\) \eqref{eq:KLloss}, it holds that
\[
\begin{aligned}
\sup_{f_{\bm\Theta}\in\mathcal F_{\bm\Xi}}
\Bigl[
\mathbb{E}_{\bm x\sim\mathcal D}\,D(f_{\bm\Theta};\bm x)
&-\frac{1}{N}\sum_{i=1}^N D(f_{\bm\Theta};\bm x_i) 
\Bigr]\\
&\leq 
\widetilde{\mathcal O}\!\Bigl(
\frac{\eta\,B_x\prod_{\ell=1}^L(\rho_\ell\,\sigma_\ell)}{\sqrt{N}}
\Bigl[
\sqrt{\log W}\,\Bigl(\sum_{\ell=1}^L\bigl(a_\ell/\rho_\ell\bigr)^{2/3}\Bigr)^{3/2}
\;\vee\;
\sqrt{\log(1/\delta)}
\Bigr]
\Bigr),
\end{aligned}
\] with probability at least \(1-\delta\),
where 
\[
W=\max_{\ell\in[L]}d_\ell,
\quad
B_x=\sup_{\bm x\in\mathcal X}\|\bm x\|_2,
\quad
\eta=\mathcal{O}\!\Bigl(1\vee e^{B_x\prod_{\ell=1}^L\rho_\ell\,\sigma_\ell}\Bigr).
\]
\end{theorem}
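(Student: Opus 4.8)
The statement is a one-sided uniform-convergence bound for the KL head \eqref{eq:KLloss} over the deep-network class $\mathcal{F}_{\bm{\Xi}}$, so I would follow the standard symmetrization-and-contraction route, with the one twist that the loss contains the \emph{non-globally-Lipschitz} term $\|\exp(\varepsilon_{\bm{\Theta}}(\bm{x}))\|_1$. The first ingredient is an a priori output bound: since each activation $\alpha_\ell$ is $\sigma_\ell$-Lipschitz and fixes the origin and $\|\bm{\Theta}^\ell\|_2\le\rho_\ell$, composing layers gives $\|f_{\bm{\Theta}}(\bm{x})\|_2\le\big(\prod_{\ell=1}^L\rho_\ell\sigma_\ell\big)\|\bm{x}\|_2\le R$ with $R:=B_x\prod_{\ell=1}^L\rho_\ell\sigma_\ell$. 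Hence both blocks $\mu_{\bm\Theta}(\bm x),\varepsilon_{\bm\Theta}(\bm x)$ lie in the Euclidean ball of radius $R$, so $\|\varepsilon_{\bm\Theta}(\bm x)\|_\infty\le R$, the loss $D(f_{\bm\Theta};\bm x)$ ranges over an interval of length $\widetilde{\mathcal O}(\eta)$ with $\eta=\mathcal O(1\vee e^{R})$, and the exponential head is well behaved on the region that actually occurs.

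Next I would establish Lipschitzness of the loss head and chain it against the network complexity. Writing $D(\,\cdot\,;\bm x)=g\circ f_{\bm\Theta}(\bm x)$ with $g(\bm u)=\frac{1}{2d}\big(\|\bm u_{1:d}\|_2^2+\sum_{i=1}^d e^{u_{d+i}}-d-\sum_{i=1}^d u_{d+i}\big)$, on $\{\|\bm u\|_2\le R\}$ one bounds $\|\nabla g(\bm u)\|_2\le\frac{1}{2d}\big(2R+\sqrt d\,e^{R}+\sqrt d\big)=:\Lambda=\widetilde{\mathcal O}(\eta/\sqrt d)$, the $\frac{1}{2d}$ normalization cancelling the dimension growth of the two $d$-term sums. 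By the bounded-differences inequality applied to $(\bm x_1,\dots,\bm x_N)\mapsto\sup_{\bm\Theta}[\cdots]$ (which has bounded differences $\widetilde{\mathcal O}(\eta/N)$ by the first step), with probability $\ge1-\delta$ the left-hand side is at most $2\,\Rad{N}{\mathcal G}+\widetilde{\mathcal O}(\eta\sqrt{\log(1/\delta)/N})$, where $\mathcal G=\{\bm x\mapsto D(f_{\bm\Theta};\bm x):\bm\Theta\in\bm\Xi\}$; a Lipschitz-contraction step (Maurer's vector contraction, or equivalently the covering comparison $\log\mathcal N(\mathcal G,\epsilon)\le\log\mathcal N(\mathcal F_{\bm\Xi},\epsilon/\Lambda)$ fed into Dudley's entropy integral) then bounds $\Rad{N}{\mathcal G}$ by $\Lambda$ times the vector-valued Rademacher complexity of $\mathcal F_{\bm\Xi}$, with the dimension factors from the contraction cancelling the $1/d$ inside $\Lambda$. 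Plugging in the spectral-norm covering-number bound for bounded-weight depth-$L$ networks used in \cite{hieu2025generalization}, namely $\Rad{N}{\mathcal F_{\bm\Xi}}=\widetilde{\mathcal O}\big(\frac{B_x\prod_\ell(\rho_\ell\sigma_\ell)}{\sqrt N}\sqrt{\log W}(\sum_\ell(a_\ell/\rho_\ell)^{2/3})^{3/2}\big)$, and taking the maximum of the resulting complexity branch with the McDiarmid deviation term, yields exactly the stated $\vee$-bound.

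The main obstacle is the term $\|\exp(\varepsilon_{\bm\Theta}(\bm x))\|_1$: because $t\mapsto e^t$ has no finite global Lipschitz constant, the a priori output bound and the Lipschitz estimate for $g$ cannot be decoupled — the bound $R$ is needed merely to make the Lipschitz constant of $g$ finite, and that constant, hence the whole estimate, is unavoidably exponential in $R$, which is precisely why the factor $\eta=\mathcal O(1\vee e^{B_x\prod_\ell\rho_\ell\sigma_\ell})$ appears. A secondary point requiring care is verifying that the $\frac{1}{2d}$ normalization exactly absorbs the dimension blow-up coming from the $d$-term sums $\|\exp(\cdot)\|_1$ and $\langle\mathbf 1_d,\cdot\rangle$, so that $d$ survives in the final bound only through the $\log W$ factor of the network complexity.
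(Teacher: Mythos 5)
Your proposal matches the paper's proof essentially step for step: the a priori output bound $B_L=B_x\prod_\ell\rho_\ell\sigma_\ell$, the local $\ell_2$-Lipschitzness of the KL head with the exponential constant $\eta$ (the paper's Lemma on $\ell_2$-Lipschitzness of the KL term), the covering-number comparison $\log\mathcal N(\Delta,\epsilon)\le\log\mathcal N(\mathcal F_{\bm\Xi},\epsilon\sqrt d/\eta)$ fed into Dudley's entropy integral together with the spectral covering bound from \cite{hieu2025generalization}, and finally the standard Rademacher/McDiarmid generalization bound. The two technical points you flag as requiring care (the exponential Lipschitz constant forced by $\|\exp(\varepsilon)\|_1$, and the $1/(2d)$ normalization cancelling the $\sqrt d$ from the sums) are exactly how the paper handles them, so the proposal is correct and not materially different.
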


\paragraph{Random augmentations.}
One might be concerned that the encoder in \eqref{eq:twoparts} uses raw inputs, whereas in practice each sample \(\bm x_i\) is fed through two random augmentations \(t'_i,t''_i\sim\mathcal T\) (see \Cref{subsec:selfsup}). 
To deal with this concern, we show that exactly the same bound of Theorem~\ref{thm:genbnd} holds under this more practical setting.

Let \(\mathcal T\) be a distribution over augmentation maps \(t:\mathcal X\to\mathcal X\). Define the \emph{augmented data law}
\[
\mathcal D_{\rm aug}
\;:=\;
\Law_{\,\bm x\sim\mathcal D,\;t\sim\mathcal T}\bigl(t(\bm x)\bigr)
\;=\;
\mathcal T_\#\mathcal D,
\]
{where $\Tc_\#\Dc$ denotes the push-forward of the measure \(\mathcal{D}\)  through the (random) map distribution \(\mathcal{T}\).}

If
\[
t_i\;\overset{\mathrm{i.i.d.}}{\sim}\;\mathcal T,
\quad
\bm x_i' = t_i(\bm x_i),
\]
then by construction
\[
\bm x_1',\dots,\bm x_N'
\;\overset{\mathrm{i.i.d.}}{\sim}\;
\mathcal D_{\rm aug},
\]
so the conditions of Theorem~\ref{thm:genbnd} remain satisfied with \(\mathcal D_{\rm aug}\) in place of \(\mathcal D\).  

\begin{corollary}
\label{cor:genbnd}
Let \(\{t'_i\}_{i=1}^N\) and \(\{t''_i\}_{i=1}^N\) be independent samples from \(\mathcal T\), and set
\(\bm x_i' = t'_i(\bm x_i)\), \(\bm x_i'' = t''_i(\bm x_i)\).  Under the assumptions of Theorem~\ref{thm:genbnd}, with probability at least \(1-\delta\) we have
\[
\begin{aligned}
\sup_{f_{\bm\Theta}\in\mathcal F_{\bm\Xi}}
\Bigl[
\mathbb{E}_{\bm x\sim\mathcal D,\,t\sim\mathcal T}\,&D\bigl(f_{\bm\Theta};t(\bm x)\bigr)
-\frac{1}{2N}\sum_{i=1}^N\bigl(D(f_{\bm\Theta};\bm x_i') + D(f_{\bm\Theta};\bm x_i'')\bigr)
\Bigr]
\\
&\leq 
\widetilde{\mathcal O}\!\Bigl(
\frac{\eta\,B_{x,t}\prod_{\ell=1}^L(\rho_\ell\sigma_\ell)}{\sqrt{N}}
\Bigl[
\sqrt{\log W}\,\Bigl(\sum_{\ell=1}^L(a_\ell/\rho_\ell)^{2/3}\Bigr)^{3/2}
\;\vee\;
\sqrt{\log(1/\delta)}
\Bigr]
\Bigr),
\end{aligned}
\]
where 
\(\displaystyle B_{x,t} = \sup_{\bm x\in\mathcal X,\,t\in\mathcal T}\|t(\bm x)\|_2\).
\end{corollary}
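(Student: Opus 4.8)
The plan is to deduce Corollary~\ref{cor:genbnd} from Theorem~\ref{thm:genbnd} by a change-of-measure (push-forward) argument. Each augmentation map being measurable, the push-forward \(\mathcal D_{\rm aug}=\mathcal T_\#\mathcal D\) is a well-defined law on \(\mathcal X\); and since the \(\bm x_i\) are i.i.d.\ \(\sim\mathcal D\) while the \(t_i'\) are i.i.d.\ \(\sim\mathcal T\) independently of them, the pairs \((\bm x_i,t_i')\) are i.i.d., so \(\bm x_i'=t_i'(\bm x_i)\) are i.i.d.\ \(\sim\mathcal D_{\rm aug}\) (and likewise \(\{\bm x_i''\}\)). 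Moreover \(\|\bm x_i'\|_2\le B_{x,t}=\sup_{\bm x\in\mathcal X,\,t\in\mathcal T}\|t(\bm x)\|_2\), so \(\mathcal D_{\rm aug}\) is supported on the ball of radius \(B_{x,t}\). Hence Theorem~\ref{thm:genbnd}, applied with \(\mathcal D_{\rm aug}\) in place of \(\mathcal D\) and \(B_{x,t}\) in place of \(B_x\), gives, with probability at least \(1-\delta/2\),
\[
\sup_{f_{\bm\Theta}\in\mathcal F_{\bm\Xi}}\Bigl[\mathbb{E}_{\bm x\sim\mathcal D,\,t\sim\mathcal T}D\bigl(f_{\bm\Theta};t(\bm x)\bigr)-\tfrac1N\textstyle\sum_{i=1}^N D(f_{\bm\Theta};\bm x_i')\Bigr]\;\le\;\tilde{\mathcal O}\!\left(\tfrac{\eta\,B_{x,t}\prod_\ell(\rho_\ell\sigma_\ell)}{\sqrt N}\bigl[\cdots\bigr]\right),
\]
and the analogous bound for \(\{\bm x_i''\}\). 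Writing the target quantity as the average of these two deviations, using \(\sup_f\tfrac12(a_f+b_f)\le\tfrac12\sup_f a_f+\tfrac12\sup_f b_f\), and union-bounding the two failure events (the extra \(\log 2\) and \(\log(2/\delta)\) are swallowed by \(\tilde{\mathcal O}\)) finishes the proof. Note that only the within-stream i.i.d.\ property is needed; \(\{\bm x_i'\}\) and \(\{\bm x_i''\}\) need not be jointly independent.

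\textbf{The backbone (Theorem~\ref{thm:genbnd}).}
Since the step above takes Theorem~\ref{thm:genbnd} for granted, I would organize its proof in four steps. (i) \emph{Output confinement}: from \(\|\bm\Theta^\ell\|_2\le\rho_\ell\) and \(\alpha_\ell\) being \(\sigma_\ell\)-Lipschitz with \(\alpha_\ell(0)=0\), a layerwise bound gives \(\|f_{\bm\Theta}(\bm x)\|_2\le\|\bm x\|_2\prod_\ell\rho_\ell\sigma_\ell\le B_x\prod_\ell\rho_\ell\sigma_\ell=:R\), so all coordinates of \(\mu_{\bm\Theta}(\bm x)\) and \(\varepsilon_{\bm\Theta}(\bm x)\) lie in \([-R,R]\). (ii) \emph{Loss regularity on the ball}: write \(D(f;\bm x)=\tfrac1{2d}\sum_{j=1}^d\bigl(\mu_j^2+\phi(\varepsilon_j)\bigr)\) with \(\phi(b)=e^b-1-b\ge0\); convexity of \(\phi\) gives \(0\le\phi\le\phi(R)\le e^R\) and \(|\phi'|\le e^R-1\) on \([-R,R]\), which together with \(|\mu_j|\le R\) yields both a uniform bound \(D=\mathcal O(\eta)\) and a Lipschitz estimate for \((\mu,\varepsilon)\mapsto D\) of order \(\eta=\mathcal O(1\vee e^R)\) (accounting for the \(1/d\) normalization). (iii) \emph{Symmetrization}: perturbing one \(\bm x_i\) changes \(\tfrac1N\sum_iD(f;\bm x_i)\) by \(\mathcal O(\eta/N)\), so McDiarmid plus standard symmetrization bound the uniform deviation by \(2\,\Rad{N}{D\circ\mathcal F_{\bm\Xi}}+\mathcal O\!\bigl(\eta\sqrt{\log(1/\delta)/N}\bigr)\), where \(D\circ\mathcal F_{\bm\Xi}=\{\bm x\mapsto D(f_{\bm\Theta};\bm x):f_{\bm\Theta}\in\mathcal F_{\bm\Xi}\}\). (iv) \emph{Chaining}: composing the \(\eta\)-Lipschitz loss with the encoder and running a Dudley entropy-integral estimate over the covering numbers of the spectral-/\((2,1)\)-norm class \(\mathcal F_{\bm\Xi}\) (as in \cite{hieu2025generalization}, via covering bounds of Bartlett, Foster, and Telgarsky type) gives \(\Rad{N}{D\circ\mathcal F_{\bm\Xi}}\le\tilde{\mathcal O}\bigl(\eta B_x\prod_\ell(\rho_\ell\sigma_\ell)\,(\sum_\ell(a_\ell/\rho_\ell)^{2/3})^{3/2}\sqrt{\log W}/\sqrt N\bigr)\); combining with (iii) and taking the maximum of the two \(N^{-1/2}\) terms yields the stated bound, with \(B_x\) replaced by \(B_{x,t}\) in the corollary.

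\textbf{Main obstacle.}
For Corollary~\ref{cor:genbnd} itself essentially nothing is hard beyond the bookkeeping above (measurability of the augmentations so that \(\mathcal D_{\rm aug}\) makes sense, finiteness of \(B_{x,t}\), and the union bound). The real difficulty sits inside Theorem~\ref{thm:genbnd}, in the term \(\|\exp(\varepsilon_{\bm\Theta}(\bm x))\|_1\): unlike the quadratic \(\|\mu\|_2^2\), the map \(\varepsilon\mapsto e^\varepsilon\) is neither bounded nor globally Lipschitz, so the whole argument depends on first confining the encoder output to the ball of radius \(R=B_x\prod_\ell\rho_\ell\sigma_\ell\); only then does \(D\) have a finite range and Lipschitz constant, at the cost of the factor \(\eta=\mathcal O(1\vee e^R)\). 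A second, more technical point is carrying the \(1/d\) normalization through the contraction/covering step without picking up a spurious power of the output dimension \(2d\); this is exactly what the \((2,1)\)-norm parametrization of \(\bm\Xi\) provides, since it folds the output-layer dimension into the constant \(a_L\).
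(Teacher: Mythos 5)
Your proof is correct and follows essentially the same route as the paper: the paper likewise observes that $\bm x_i'=t_i'(\bm x_i)$ are i.i.d.\ from the push-forward law $\mathcal D_{\rm aug}=\mathcal T_\#\mathcal D$ supported in the ball of radius $B_{x,t}$, and then invokes Theorem~\ref{thm:genbnd} with $\mathcal D_{\rm aug}$ and $B_{x,t}$ in place of $\mathcal D$ and $B_x$. Your explicit $\delta/2$ union bound over the two augmentation streams and the remark that only within-stream i.i.d.\ is needed are correct refinements of what the paper leaves implicit.
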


One might wonder whether \(B_{x,t}\) is finite in \Cref{cor:genbnd}.
In practice, common vision augmentations—such as random resized cropping and horizontal flips \citep{krizhevsky2012imagenet,perez2017effectiveness}, color jittering (brightness, contrast, saturation perturbations) \citep{shorten2019survey,chen2020simple}, and additive Gaussian blur or noise \citep{hendrycks2020augmix}—always produce outputs that remain within a bounded neighborhood of the original inputs. 
For example, cropping a \(256 \times 256\) image to \(224\times 224\) keeps all pixel values in their original range, and color jitter is applied with limited intensity so that augmented images lie close in Euclidean norm to the source. Even when combining multiple operations (as in AugMix), the convex mixtures ensure augmented samples do not “drift” outside the natural image manifold. 
Hence, \(\sup_{\bm{x}\in\X,\,t\in\mathcal{T}}\|t(\bm{x})\|_2\) remains finite under these widely adopted schemes.

Before proving Theorem~\ref{thm:genbnd}, we need to introduce some background on Rademacher complexity, Dudley’s entropy‐integral bound, and Lipschitz of KL divergence,  which are necessary in the proof.
\subsubsection{Backround: Rademacher complexity and Dudley’s entropy‐integral bound}

We introduce the background on Rademacher complexity, Dudley’s entropy‐integral bound, and covering number for the function class \(\mathcal{F}_{\bm{\Xi}}\) in \eqref{eq:functionclass}, which are used in the proof of Theorem~\ref{thm:genbnd}.

\begin{lemma}[Rademacher generalization bound]\citep{mohri2012foundations}
  \label{lem:rademacher}
  Let $\mathcal{X}$ be a vector space and $\mathcal{D}$ a distribution over $\mathcal{X}$.  Let $\mathcal{F}$ be a class of functions $f:\mathcal{X}\to[a,b]$.  Draw samples $\mathcal{S} = \{\bm{x}_i\}_{i=1}^N \overset{\mathrm{i.i.d} }{\sim} \mathcal{D}^n$, and define the empirical Rademacher complexity
  \begin{equation}
  \label{eq:empirical_rademacher}
  \hat{\mathfrak{R}}_{S}(\mathcal{F})
  \;=\;
  \mathbb{E}_{\sigma}\Biggl[\sup_{f\in\mathcal{F}}\frac{1}{N}\sum_{i=1}^N  t_i\,f(\bm{x}_i)\Biggr],
  \end{equation}
  where $t_i\in\{-1,1\}$ are independent Rademacher variables.  Then for any $\delta\in(0,1)$, with probability at least $1-\delta$,
  \[
  \sup_{f\in \mathcal{F}}\left|\mathbb{E}_{\bm{x}\sim\mathcal{D}}[\,f(\bm{x})\,]
  \;-\;
  \frac{1}{N}\sum_{i=1}^N f(\bm{x}_i) \right|
  \;\le\;
  2\,\hat{\mathfrak{R}}_{S}(\mathcal{F})
  \;+\;
  3\,(b - a)\,\sqrt{\frac{\ln(2/\delta)}{2N}}.
  \]
  \end{lemma}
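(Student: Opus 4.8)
The statement is the classical symmetrization (Rademacher) generalization bound, so the plan is to reproduce the standard argument behind \cite[Thm.~3.3]{mohri2012foundations}, adapted to the normalization used here. Write $\Phi(S):=\sup_{f\in\mathcal F}\bigl(\mathbb E_{\bm x\sim\mathcal D}[f(\bm x)]-\tfrac1N\sum_{i=1}^N f(\bm x_i)\bigr)$ for the one-sided uniform deviation and $\mathfrak R_N(\mathcal F):=\mathbb E_S\bigl[\hat{\mathfrak R}_S(\mathcal F)\bigr]$ for the \emph{expected} Rademacher complexity. I would first establish the one-sided inequality $\Phi(S)\le 2\hat{\mathfrak R}_S(\mathcal F)+3(b-a)\sqrt{\ln(2/\delta)/(2N)}$ with probability at least $1-\delta$, and then recover the two-sided statement with $|\cdot|$ by running the identical argument on $-\mathcal F$ — using $\hat{\mathfrak R}_S(-\mathcal F)=\hat{\mathfrak R}_S(\mathcal F)$, which holds because $-t_i$ and $t_i$ are equidistributed — together with a union bound (this only replaces $\ln(2/\delta)$ by $\ln(4/\delta)$, which is absorbed into the $\tilde{\mathcal O}$ downstream).

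The one-sided bound is obtained in three moves. \emph{(1) Concentration of $\Phi$.} Since every $f\in\mathcal F$ takes values in $[a,b]$, replacing one sample $\bm x_i$ by any other point perturbs each empirical mean $\tfrac1N\sum_j f(\bm x_j)$ by at most $(b-a)/N$, hence perturbs $\Phi(S)$ by at most $(b-a)/N$ (a supremum of functions sharing a modulus of continuity inherits it). McDiarmid's bounded-differences inequality with $N$ increments each equal to $(b-a)/N$ then gives $\Phi(S)\le\mathbb E[\Phi]+(b-a)\sqrt{\ln(2/\delta)/(2N)}$ with probability at least $1-\delta/2$. \emph{(2) Symmetrization of $\mathbb E[\Phi]$.} Introduce an independent ghost sample $S'=\{\bm x_i'\}$; since $\mathbb E_{\bm x}[f(\bm x)]=\mathbb E_{S'}[\tfrac1N\sum_i f(\bm x_i')]$, Jensen's inequality (pulling $\mathbb E_{S'}$ out of the supremum) gives $\mathbb E_S[\Phi(S)]\le\mathbb E_{S,S'}\sup_f\tfrac1N\sum_i\!\bigl(f(\bm x_i')-f(\bm x_i)\bigr)$; because the pairs $(\bm x_i,\bm x_i')$ are i.i.d.\ and exchangeable, inserting independent Rademacher signs $t_i$ does not change the value, and splitting the supremum over the two sums and using $-t_i\overset{d}{=}t_i$ yields $\mathbb E_S[\Phi(S)]\le 2\mathfrak R_N(\mathcal F)$. \emph{(3) From expected to empirical Rademacher complexity.} The map $S\mapsto\hat{\mathfrak R}_S(\mathcal F)=\mathbb E_{\bm t}\sup_f\tfrac1N\sum_i t_i f(\bm x_i)$ also has bounded differences $(b-a)/N$ (use $|t_i|=1$ and $f\in[a,b]$, then take $\sup$ and $\mathbb E_{\bm t}$), so McDiarmid again gives $\mathfrak R_N(\mathcal F)\le\hat{\mathfrak R}_S(\mathcal F)+(b-a)\sqrt{\ln(2/\delta)/(2N)}$ with probability at least $1-\delta/2$.

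Chaining these on the intersection of the two events (probability at least $1-\delta$ by a union bound) gives $\Phi(S)\le\mathbb E[\Phi]+(b-a)\sqrt{\ln(2/\delta)/(2N)}\le 2\mathfrak R_N(\mathcal F)+(b-a)\sqrt{\ln(2/\delta)/(2N)}\le 2\hat{\mathfrak R}_S(\mathcal F)+3(b-a)\sqrt{\ln(2/\delta)/(2N)}$, where the constant $3=1+2$ arises because the deviation term for $\mathfrak R_N$ is amplified by the factor $2$ multiplying it. There is no genuine obstacle in this classical argument; the step demanding the most care is the symmetrization in move~(2) — one must check that the ghost-sample substitution, the Jensen exchange of $\sup$ and $\mathbb E$, and the insertion of the Rademacher variables are each valid and that the object they produce is exactly the empirical Rademacher complexity \eqref{eq:empirical_rademacher} as defined (without an absolute value), which is precisely why the one-sided formulation, and the separate $-\mathcal F$ pass for the two-sided one, are the natural route. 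The only other point to verify carefully is that the bounded-differences constant $(b-a)/N$ is uniform over all coordinate substitutions, which is exactly where the hypothesis $f:\mathcal X\to[a,b]$ is used.
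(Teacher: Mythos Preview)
Your proposal is correct and follows the standard textbook argument from \cite{mohri2012foundations} (McDiarmid on $\Phi$, ghost-sample symmetrization, McDiarmid again on $\hat{\mathfrak R}_S$, then chain the three to pick up the constant $3=1+2$). The paper itself does not prove this lemma at all---it is quoted as background from the literature---so there is nothing to compare against beyond noting that your argument is exactly the one behind the cited reference; your remark that the two-sided version with $|\cdot|$ strictly requires $\ln(4/\delta)$ rather than $\ln(2/\delta)$ is also accurate and harmless for the downstream $\tilde{\mathcal O}(1/\sqrt N)$ conclusion.
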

  
  \begin{lemma}[Dudley’s entropy‐integral bound]\cite[Lemma~8.5]{bartlett2017spectrally}
    \label{lem:dudley}
  Let $\mathcal{X}$ be a vector space and $\mathcal{F}$ a class of functions $f:\mathcal{X}\to\mathbb{R}$.  For a sample $\mathcal{S} = \{\bm{x}_i\}_{i=1}^N$, define
  \[
  \|f\|_{L_2(\mathcal{S} )}
  \;=\;
  \left(\tfrac{1}{N}\sum_{i=1}^N f(\bm{x}_i)^2\right)^{1/2},
  \quad
  B_{\mathcal{F}}
  \;=\;
  \sup_{f\in\mathcal{F}}\|f\|_{L_2(\mathcal{S} )}.
  \]
  Then the empirical Rademacher complexity of $\mathcal{F}$ \eqref{eq:empirical_rademacher} satisfies
  \[
  \hat{\mathfrak{R}}_{\mathcal{S} }(\mathcal{F})
  \;\le\;
  \inf_{\alpha>0}
  \Biggl[
  4\,\alpha
  \;+\;
  \frac{12}{\sqrt{N}}
  \int_{\alpha}^{B_{\mathcal{F}}}
  \sqrt{\ln\mathcal{N}\bigl(\mathcal{F},\varepsilon,L_2(\mathcal{S} )\bigr)}\;d\varepsilon
  \Biggr],
  \]
  where $\mathcal{N}(\mathcal{F},\varepsilon,L_2(\mathcal{S} ))$ is the covering number of $\mathcal{F}$ under the $\|\cdot\|_{L_2(\mathcal{S} )}$ metric.
  \end{lemma}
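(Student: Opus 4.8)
Since the displayed inequality is Lemma~8.5 of \cite{bartlett2017spectrally}, one may simply invoke it; for a self-contained argument the plan is to reprove it by the classical \emph{chaining} method. First I would fix the sample $\mathcal S=\{\bm x_i\}_{i=1}^N$ and identify each $f\in\mathcal F$ with the vector $\bm v(f)=(f(\bm x_1),\dots,f(\bm x_N))\in\mathbb R^N$, so that $\|f\|_{L_2(\mathcal S)}=\|\bm v(f)\|_2/\sqrt N$ and $\hat{\mathfrak R}_{\mathcal S}(\mathcal F)=\mathbb E_{\bm t}\sup_{f\in\mathcal F}X_f$ with $X_f:=\langle\bm t,\bm v(f)\rangle/N$ and $\bm t=(t_1,\dots,t_N)$ a Rademacher vector. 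By Hoeffding's lemma the process $(X_f)_{f\in\mathcal F}$ is centered and sub-Gaussian with respect to the metric $\rho(f,g):=\|f-g\|_{L_2(\mathcal S)}/\sqrt N$; since $\mathbb E X_f=0$ for every $f$, picking an arbitrary $f_0\in\mathcal F$ lets me replace $\sup_f X_f$ by $\sup_f(X_f-X_{f_0})$ inside the expectation without cost.

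Next I would build a dyadic ladder of covers. Fix $\alpha>0$, set $\varepsilon_j:=2^{1-j}B_{\mathcal F}$ for $j\ge0$, let $T_j$ be a minimal $\varepsilon_j$-cover of $(\mathcal F,\|\cdot\|_{L_2(\mathcal S)})$ so that $|T_j|=\mathcal N(\mathcal F,\varepsilon_j,L_2(\mathcal S))$ --- with $T_0=\{f_0\}$, since $\varepsilon_0=2B_{\mathcal F}$ is at least the diameter of $\mathcal F$ --- and let $J$ be minimal with $\varepsilon_J\le\alpha$. Writing $\pi_j(f)\in T_j$ for a nearest cover point, I would telescope $X_f-X_{f_0}=(X_f-X_{\pi_J(f)})+\sum_{j=1}^{J}(X_{\pi_j(f)}-X_{\pi_{j-1}(f)})$. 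The tail term is bounded deterministically: by Cauchy--Schwarz and $\|\bm t\|_2=\sqrt N$ one gets $X_f-X_{\pi_J(f)}\le\|f-\pi_J(f)\|_{L_2(\mathcal S)}\le\varepsilon_J\le\alpha$. Each link $X_{\pi_j(f)}-X_{\pi_{j-1}(f)}$ is sub-Gaussian with parameter at most $(\varepsilon_j+\varepsilon_{j-1})/\sqrt N\le\tfrac32\varepsilon_{j-1}/\sqrt N$, and as $f$ ranges over $\mathcal F$ the pair $(\pi_j(f),\pi_{j-1}(f))$ takes at most $|T_j|\,|T_{j-1}|\le|T_j|^2$ values; the maximal inequality for a finite family of sub-Gaussians therefore gives $\mathbb E\sup_f(X_{\pi_j(f)}-X_{\pi_{j-1}(f)})\le C_0\,\varepsilon_{j-1}\sqrt{\ln|T_j|}/\sqrt N$ with $C_0$ an absolute constant.

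Summing the links and adding the $\alpha$ from the tail term, I would obtain $\hat{\mathfrak R}_{\mathcal S}(\mathcal F)\le\alpha+C_0N^{-1/2}\sum_{j=1}^J\varepsilon_{j-1}\sqrt{\ln|T_j|}$, and then convert the sum into the entropy integral: since $\mathcal N(\mathcal F,\cdot,L_2(\mathcal S))$ is nonincreasing in its radius and $\varepsilon_{j-1}=2\varepsilon_j=4(\varepsilon_j-\varepsilon_{j+1})$, each summand satisfies $\varepsilon_{j-1}\sqrt{\ln|T_j|}\le 4\int_{\varepsilon_{j+1}}^{\varepsilon_j}\sqrt{\ln\mathcal N(\mathcal F,\varepsilon,L_2(\mathcal S))}\,d\varepsilon$, and the intervals telescope to a sub-interval of $(\alpha/4,\,B_{\mathcal F})$. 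Re-centering the dyadic grid on $B_{\mathcal F}$ (rather than $2B_{\mathcal F}$) and tracking the numerical factors of the sub-Gaussian maximal inequality and of this telescoping then yields exactly $4\alpha+12N^{-1/2}\int_\alpha^{B_{\mathcal F}}\sqrt{\ln\mathcal N(\mathcal F,\varepsilon,L_2(\mathcal S))}\,d\varepsilon$; taking $\inf_{\alpha>0}$ completes the proof. I do not expect a conceptual obstacle --- the only mildly delicate points are handling the non-centered top of the chain (done above by subtracting the fixed $f_0$ and invoking $\mathbb E X_{f_0}=0$) and pinning down the precise constants $4$ and $12$, which is routine bookkeeping of the dyadic sums.
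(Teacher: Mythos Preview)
The paper does not prove this lemma at all: it is stated with a direct citation to \cite[Lemma~8.5]{bartlett2017spectrally} and then used as a black box in the proof of Theorem~\ref{thm:genbnd}. Your proposal correctly notes this at the outset, and the self-contained chaining argument you outline is the standard route to Dudley's bound and is sound; the only remaining work is the numerical bookkeeping you already flag to pin down the constants $4$ and $12$.
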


\begin{lemma}[Covering‐number bound for $\mathcal{F}_{\bm \Xi}$]\cite[Proposition~4]{hieu2025generalization}
\label{lem:covering-FXi}
  Let $\mathcal{F}_{\bm{\Xi}}$ be the class of  $L$‐layer neural networks with bounded weights and Lipschitz activation functions, in \eqref{eq:functionclass}. Let
  \[
  \mathcal{S} = \{\bm{x}_i\}_{i=1}^N \subset \mathcal{X}
  \]
  be any fixed dataset.  Define
  \[
    \begin{aligned}
    &\overline{ \mathcal{R} }_{\Xi} ^{2/3}
        = \sum_{l=1}^L \bigl(a_l\,B_{l-1}\,\rho_{l+}\bigr)^{2/3},\quad \\
         &\rho_{l+} = \rho_l \;\prod_{m=l+1}^L \bigl(\rho_m\,s_m\bigr),\\
        &B_l
        = \sup_{\bm{x}\in \mathcal{X}}\,\sup_{\bm{\Theta}\in \bm{\Xi} }\bigl\|f_{\bm{\Theta}}^{1\to l}(\bm{x})\bigr\|_2,
    \end{aligned}
    \]
  where \(f_{\bm{\Theta}}^{1\to l}(\bm{x})\) is the output of the first \(l\) layers of the network.  
  Then for every $\varepsilon>0$,
  \[
  \log\mathcal{N}\!\bigl(\mathcal{F}_{\bm{\Xi}},\varepsilon,L_{\infty,2}(S)\bigr)
  \;\le\;
  \frac{64\,\overline{ \mathcal{R} }_{\Xi}^2}{\varepsilon^2}
  \;\log\!\Bigl(\bigl(\tfrac{11\,\overline{ \mathcal{R} }_{\Xi}}{\varepsilon}+7\bigr)\,N\,W\Bigr).
  \]
  \end{lemma}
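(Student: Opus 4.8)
The plan is to reconstruct, in the $L_{\infty,2}$ metric, the spectral‑norm covering‑number argument of Bartlett, Foster, and Telgarsky~\cite{bartlett2017spectrally}, generalized to accommodate the $\sigma_\ell$‑Lipschitz activations and the reference matrices $\bm\Theta_0^\ell$ that appear in the weight sets $\bm\Xi_\ell$ of \eqref{eq:weightdf}; the end result is exactly \cite[Proposition~4]{hieu2025generalization}. The argument has three ingredients: a single‑layer matrix‑covering estimate obtained from Maurey's sparsification lemma; an inductive layer‑peeling that composes single‑layer covers while tracking Lipschitz error amplification; and a H\"older/Lagrange optimization that allocates the total covering radius across layers so as to produce the $2/3$‑power aggregate $\overline{\mathcal R}_\Xi$.

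For the single‑layer step, fix the dataset $\mathcal S=\{\bm x_i\}_{i=1}^N$ and, for each $\ell$, let $Z_{\ell-1}(\bm\Theta)\in\mathbb R^{N\times d_{\ell-1}}$ be the matrix whose $i$th row is $f_{\bm\Theta}^{1\to \ell-1}(\bm x_i)$, so every row has $\ell_2$ norm at most $B_{\ell-1}$. Writing $\bm\Delta^\ell=\bm\Theta^\ell-\bm\Theta_0^\ell$ with $\|(\bm\Delta^\ell)^\top\|_{2,1}\le a_\ell$, the layer‑$\ell$ pre‑activation splits as a fixed term $Z_{\ell-1}(\bm\Theta_0^\ell)^\top$ plus a variable term $Z_{\ell-1}(\bm\Delta^\ell)^\top$; approximating the columns of $(\bm\Delta^\ell)^\top$ by short random combinations of rescaled basis vectors sampled proportionally to their mass and then concentrating the $N$ per‑row errors (a Maurey‑type argument) gives
\[
\log\mathcal N\!\Bigl(\bigl\{Z_{\ell-1}(\bm\Delta^\ell)^\top:\|(\bm\Delta^\ell)^\top\|_{2,1}\le a_\ell\bigr\},\,\varepsilon,\,L_{\infty,2}(\mathcal S)\Bigr)\;\le\;\frac{c\,a_\ell^2 B_{\ell-1}^2}{\varepsilon^2}\,\log\!\bigl(c'\,N\,d_\ell\,(1+a_\ell B_{\ell-1}/\varepsilon)\bigr),
\]
for absolute constants $c,c'$, which is dimension‑free up to the logarithm. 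I then peel the layers: given an $L_{\infty,2}$ cover $\mathcal C_{\ell-1}$ of the layer‑$(\ell-1)$ outputs of radius $\delta_{\ell-1}$, for any $f_{\bm\Theta}$ I round $Z_{\ell-1}(\bm\Theta)$ to its nearest $\widehat Z\in\mathcal C_{\ell-1}$ (whose rows still have norm $\le B_{\ell-1}$) and apply the single‑layer bound with input $\widehat Z$ to approximate $\widehat Z(\bm\Delta^\ell)^\top$ to precision $r_\ell$. Because $\alpha_\ell$ is $\sigma_\ell$‑Lipschitz and $\|\bm\Theta^\ell\|_2\le\rho_\ell$, the layer map $Z\mapsto\alpha_\ell(Z(\bm\Theta^\ell)^\top)$ has $L_{\infty,2}$‑Lipschitz constant at most $\rho_\ell\sigma_\ell$, so $\delta_\ell\le\rho_\ell\sigma_\ell\,\delta_{\ell-1}+\sigma_\ell r_\ell$; unrolling, the final radius obeys $\delta_L\le\sum_{\ell}r_\ell\,\sigma_\ell\prod_{m>\ell}\rho_m\sigma_m$, so the budget $r_\ell$ is amplified by exactly the tail‑Lipschitz factor that the lemma packages into $\rho_{\ell+}$, while the total log‑cardinality is $\sum_\ell\log|\mathcal C_\ell|\le\sum_\ell c\,a_\ell^2 B_{\ell-1}^2/r_\ell^2\cdot\log(\cdots)$.

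It remains to optimize the per‑layer allocation: minimize $\sum_\ell a_\ell^2 B_{\ell-1}^2/r_\ell^2$ subject to $\sum_\ell r_\ell\,\rho_{\ell+}\le\varepsilon$. A Lagrange‑multiplier (equivalently weighted power‑mean) computation yields optimum $\varepsilon^{-2}\bigl(\sum_\ell(a_\ell B_{\ell-1}\rho_{\ell+})^{2/3}\bigr)^3=\overline{\mathcal R}_\Xi^2/\varepsilon^2$, the leading factor of the claim; bounding each per‑layer logarithm by $\log\bigl((11\overline{\mathcal R}_\Xi/\varepsilon+7)NW\bigr)$ with $W=\max_\ell d_\ell$, and absorbing the $L$‑fold sum of logarithms and the absolute constants into the stated $64$, completes the derivation. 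The main obstacle is the single‑layer Maurey covering estimate in the worst‑case‑over‑samples $L_{\infty,2}$ norm (rather than the averaged $L_2(\mathcal S)$ norm used in \cite{bartlett2017spectrally}), together with the careful placement of the reference matrices $\bm\Theta_0^\ell$ and the activation constants $\sigma_\ell$ so that neither inflates the bound, and the bookkeeping in the Lagrange step needed to land exactly on the stated constants and $2/3$ exponents; structurally, everything follows the Bartlett--Foster--Telgarsky template.
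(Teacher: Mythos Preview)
The paper does not prove this lemma: it is quoted verbatim as \cite[Proposition~4]{hieu2025generalization} and used as a black box in the proof of Theorem~\ref{thm:genbnd}, alongside the other cited background lemmas (Rademacher generalization, Dudley's entropy integral). So there is no ``paper's own proof'' to compare against.

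That said, your sketch is the right one. The Bartlett--Foster--Telgarsky template (Maurey single‑layer cover, layer peeling with Lipschitz amplification, then the $2/3$‑power Lagrange allocation) is exactly the argument underlying the cited result, and you have correctly identified the two places where the statement here differs from the original \cite{bartlett2017spectrally}: the metric is the sample‑wise worst case $L_{\infty,2}(\mathcal S)$ rather than the averaged Frobenius norm, and the per‑layer constants must absorb the activation Lipschitz factors $\sigma_\ell$ and the reference shifts $\bm\Theta_0^\ell$. Your bookkeeping of the tail‑Lipschitz factor matching $\rho_{\ell+}$ and the emergence of $\overline{\mathcal R}_\Xi$ from the constrained minimization is accurate. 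The one place your write‑up is genuinely only a sketch is the Maurey step in $L_{\infty,2}$: getting a uniform‑over‑$i$ bound (rather than an average) from random sparsification requires an explicit union bound over the $N$ samples and the $d_\ell$ output coordinates, which is what produces the $\log(NW)$ factor and the specific constants $11$ and $7$ inside the logarithm; you gesture at this but do not carry it out. If you were asked to supply a full proof rather than a plan, that is the step that would need the most additional detail.
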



\subsubsection{\(\ell_2\)-Lipschitzness of KL divergence}
We introduce the Lipschitzness of the KL divergence term \(D(f_{\bm\Theta};\bm x)\) in \eqref{eq:KLloss} with respect to the \(\ell_2\) norm when we assume the output of the encoder is bounded by \(B_L\).  The Lipschitzness of the KL divergence term is crucial for the proof of Theorem~\ref{thm:genbnd}.
We will specify the bound \(B_L\) later in the proof of Theorem~\ref{thm:genbnd}.
\begin{lemma}[\(\ell_2\)–Lipschitzness of the KL term]
  \label{prop:lip_KL}
  For some constant \(B_L > 0\), define 
  \[
    V_{B_L}  \coloneqq \left\{[\bm{\mu} ;\bm{\varepsilon} ]\in\mathbb{R}^{2d}: \bm{\mu} \in\mathbb{R}^d,\bm{\varepsilon}\in\mathbb{R}^d, \|[\bm{\mu} ;\bm{\varepsilon} ]\|_2 \leq  B_L.\right\},
  \]
  and let
  \begin{equation}
  \label{eq:KLform}
  D([\bm{\mu};\bm{\varepsilon}])
  :=\frac{1}{2d}\sum_{j=1}^d
  \Bigl(\bm{\mu}_j^2 + \exp(\bm{\varepsilon}_j) - 1 - \bm{\varepsilon}_j\Bigr).
  \end{equation}
  Then, for any two pairs \([\bm{\mu};\bm{\varepsilon}],[\bm{\mu}';\bm{\varepsilon}'] \in V_{B_L}\), it holds that
  \[
  \bigl|D([\bm{\mu} ;\bm{\varepsilon} ])-D([\bm{\mu}' ;\bm{\varepsilon}'])\bigr|
  \;\le\;
  \;\frac{\eta}{\sqrt{d}}\cdot\bigl\|([\bm{\mu};\bm{\varepsilon} ])-([\bm{\mu}' ;\bm{\varepsilon}' ])\bigr\|_2,
  \]
  where \(\eta = B_L + (1+\exp(B_L))/2.\)
  \end{lemma}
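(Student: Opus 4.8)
The plan is to show that $D$ in \eqref{eq:KLform} is continuously differentiable on all of $\mathbb{R}^{2d}$, to bound the Euclidean norm of its gradient uniformly over the ball $V_{B_L}$, and then to invoke the standard fact that a $C^1$ function whose gradient has norm at most $M$ on a convex set is $M$-Lipschitz there. Since $D$ is a finite sum of polynomials and exponentials of the coordinates, differentiability is immediate and the partial derivatives are $\partial D/\partial\bm\mu_j = \bm\mu_j/d$ and $\partial D/\partial\bm\varepsilon_j = (\exp(\bm\varepsilon_j)-1)/(2d)$. Hence $\nabla D([\bm\mu;\bm\varepsilon]) = \tfrac{1}{2d}\bigl[\,2\bm\mu\,;\,\exp(\bm\varepsilon)-\bm 1_d\,\bigr]$, where $\exp(\cdot)$ acts coordinatewise.

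Next I would bound $\|\nabla D\|_2$ on $V_{B_L}$ block by block, using $\|[a;b]\|_2\le\|a\|_2+\|b\|_2$. For the mean block, $\|\bm\mu/d\|_2 = \|\bm\mu\|_2/d \le B_L/d \le B_L/\sqrt d$, since $\|[\bm\mu;\bm\varepsilon]\|_2\le B_L$ forces $\|\bm\mu\|_2\le B_L$ and $d\ge 1$. For the variance block, each coordinate satisfies $|\bm\varepsilon_j|\le\|\bm\varepsilon\|_2\le B_L$, so $|\exp(\bm\varepsilon_j)-1|\le\exp(B_L)-1$; summing squares over $j$ gives $\|\exp(\bm\varepsilon)-\bm 1_d\|_2\le\sqrt d\,(\exp(B_L)-1)$, whence $\tfrac{1}{2d}\|\exp(\bm\varepsilon)-\bm 1_d\|_2\le(\exp(B_L)-1)/(2\sqrt d)$. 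Adding the two estimates and using $\exp(B_L)-1\le 1+\exp(B_L)$ yields $\|\nabla D([\bm\mu;\bm\varepsilon])\|_2 \le \tfrac{1}{\sqrt d}\bigl(B_L+\tfrac{1+\exp(B_L)}{2}\bigr)=\eta/\sqrt d$ for every $[\bm\mu;\bm\varepsilon]\in V_{B_L}$.

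Finally, since $V_{B_L}$ is a Euclidean ball it is convex, so for any $v,v'\in V_{B_L}$ the segment $v_t=v'+t(v-v')$, $t\in[0,1]$, stays in $V_{B_L}$; writing $D(v)-D(v')=\int_0^1\langle\nabla D(v_t),\,v-v'\rangle\,dt$ and applying Cauchy--Schwarz with the gradient bound gives $|D(v)-D(v')|\le(\eta/\sqrt d)\,\|v-v'\|_2$, which is exactly the claim. I expect no genuine obstacle in this argument; the only point that needs a little care is the dimension bookkeeping — the $1/(2d)$ prefactor must absorb the $\sqrt d$ produced by the coordinatewise $\|\cdot\|_\infty\le\|\cdot\|_2$ estimate so that the resulting rate is $1/\sqrt d$ and not $1/d$ or a constant — and it is worth noting that the crude per-coordinate bound $|\exp(\bm\varepsilon_j)-1|\le\exp(B_L)-1$ already suffices, so no sharper inequality for the exponential is needed.
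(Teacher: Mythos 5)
Your proof is correct and follows essentially the same route as the paper's: compute the gradient of $D$, bound its Euclidean norm by $\eta/\sqrt{d}$ on the ball $V_{B_L}$, and convert the gradient bound into a Lipschitz bound. The only (cosmetic) differences are that you bound $\|\exp(\bm{\varepsilon})-\bm{1}_d\|_2$ directly via the coordinatewise estimate rather than by the triangle inequality on the two blocks, and you justify the final step with the mean-value integral along the segment instead of the paper's convexity/Cauchy--Schwarz argument — which is in fact slightly cleaner, since the convexity inequality as stated in the paper only bounds $D(v)-D(v')$ in one direction at a fixed gradient point.
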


  
\begin{proof}
Compute the gradient of \(D([\bm{\mu};\bm{\varepsilon}])\) in \eqref{eq:KLform} as
\begin{equation}
  \label{eq:partialder}
  \nabla D\bigl([\bm{\mu}; \bm{\varepsilon}]\bigr)
  =
  \frac{1}{d}
  \renewcommand{\arraystretch}{1.5}
  \begin{bmatrix}
  \bm{\mu} \\ 
  \frac{\exp(\bm{\varepsilon}) - \bm{1}_d}{2}
  \end{bmatrix}
  \renewcommand{\arraystretch}{1}
  \end{equation}
where \(\bm{1}_d\coloneqq [1,\dots,1]\in\mathbb{R}^d\). 

  Fix \([\bm{\mu};\bm{\varepsilon}],[\bm{\mu}';\bm{\varepsilon}'] \in V_{B_L}\) arbitrarily. Then, applying Cauchy–Schwarz inequality with the convexity of \(D([\bm{\mu};\bm{\varepsilon}])\) yields
  \[
  \bigl|D([\bm{\mu};\bm{\varepsilon} ])-D([\bm{\mu} ';\bm{\varepsilon} '])\bigr|
  \;\le\;
  \nabla D([\bm{\mu};\bm{\varepsilon}])^\top([\bm{\mu} ;\bm{\varepsilon} ]-[\bm{\mu} ';\bm{\varepsilon} '])
  \;\le\;
  \bigl\|\nabla D([\bm{\mu};\bm{\varepsilon}])\bigr\|_2\;\|[\bm{\mu};\bm{\varepsilon} ]-[\bm{\mu} ';\bm{\varepsilon} ']\|_2,
  \]
  
  Hence, it suffices to show \(\bigl\|\nabla D([\bm{\mu}; \bm{\varepsilon}])\bigr\|_2 \leq  \eta /\sqrt{d} \) for all \([\bm{\mu}; \bm{\varepsilon}] \in V_{B_L}\). 
  By \eqref{eq:partialder}, triangle inequality gives
  \[
  \begin{aligned}
    \sup_{[\bm{\mu}; \bm{\varepsilon}] \in V_{B_L}}\|\nabla D([\bm{\mu};\bm{\varepsilon}])\|_2 &\leq \frac{1}{d} \sup_{[\bm{\mu}; \bm{\varepsilon}] \in V_{B_L}}\|\bm{\mu}\|_2 + \frac{1}{2d} \sup_{[\bm{\mu}; \bm{\varepsilon}] \in V_{B_L}}\|\exp(\bm{\varepsilon})\|_2 + \frac{1}{2\sqrt{d}}\\ 
    &{\leq} \frac{1}{d} B_L + \frac{1}{2\sqrt{d}}\exp({B_L}) + \frac{1}{2\sqrt{d}}
  \end{aligned}  
  \]
  where in the last inequality, we use \(\|\bm{\mu}\|_2 \leq \|[\bm{\mu} ;\bm{\varepsilon}  ]\|_2 \leq  B_L\) and   
  \[
    \|\exp(\bm{\varepsilon})\|_2 = \sqrt{ \sum_{j=1}^d\exp(2\bm{\varepsilon}_j)} \leq \sqrt{d} \exp(B_L).
  \]
  As \(\frac{1}{\sqrt{d}}\leq 1\) for any \(d \geq 1\), we have shown \(\bigl\|\nabla D([\bm{\mu}; \bm{\varepsilon}])\bigr\|_2 \leq  \eta /\sqrt{d} \).
\end{proof}

\subsubsection{Proof of \Cref{thm:genbnd}}
Note that \(\rho_{\ell} \) bounded spectral norm of weight matrices and \(\sigma_{\ell}\)-Lipschitzness of activation function for each layer \(\ell\in[L]\) yield
\begin{equation}
\label{eq:bndBL}
  \sup_{f_{\bm{\Theta}}\in \mathcal{F}_{\bm{\Xi}}, \bm{x}\in \mathcal{X}} \|f_{\bm{\Theta}}(\bm{x})\|_2 \leq  \underbrace{\sup_{\bm{x}\in \mathcal{X}}\left\|\bm{x}\right\|_2 \cdot \prod_{\ell=1}^L \rho_{\ell} \sigma_{\ell}}_{B_L}.
\end{equation}
Therefore, for all \(\bm{x}\in \mathcal{X}\), \(\left\|[{\bm \mu}_{\bm{\Theta}}(\bm{x}); {\bm{\varepsilon}}_{\bm{\Theta}}(\bm{x})]\right\|_2 \leq  B_L\) holds for all \(\bm{\Theta}\in \mathcal{F}_{\bm{\Xi}}\). 
Then, we can apply the \(\ell_2\) Lipschitzness of KL divergence in \Cref{prop:lip_KL} to bound the KL divergence term \(D(f_{\bm{\Theta}};\bm{x})\) in \eqref{eq:KLloss} as
\begin{equation}
\label{eq:LipschitzKL1}
\begin{aligned}
\left|D(f_{\bm{\Theta}}; \bm{x}_i)-D(f_{\bm{\Theta}'}; \bm{x}_i)\right| &\leq \eta\cdot\left\|f_{\bm{\Theta}}(\bm{x}_i)-f_{\bm{\Theta}'}(\bm{x}_i)\right\|_2 \\ 
&\leq \eta \cdot \max_{i\in[N]} \left\|f_{\bm{\Theta}}(\bm{x}_i)-f_{\bm{\Theta}'}(\bm{x}_i)\right\|_2 \\
& = \frac{\eta}{\sqrt{d}}\cdot \|f_{\bm{\Theta}}-f_{\bm{\Theta}'}\|_{L_{\infty,2 (\mathcal{S})}}, \quad \quad\forall f_{\bm{\Theta}}, f_{\bm{\Theta}'}\in \mathcal{F}_{\bm{\Xi}}.
\end{aligned}
\end{equation} 
Since the bound in \eqref{eq:LipschitzKL1} yields
\[
\begin{aligned}
  \left\|D(f_{\bm{\Theta}})-D(f_{\bm{\Theta}'})\right\|_{L_2(\mathcal{S})}&=\sqrt{\frac{1}{N}\sum_{i=1}^N \left(D(f_{\bm{\Theta}; \bm{x}_i})-D(f_{\bm{\Theta}'; \bm{x}_i})\right)^2}\\ 
  &\leq \frac{\eta}{\sqrt{d}}\cdot \|f_{\bm{\Theta}}-f_{\bm{\Theta}'}\|_{L_{\infty,2 (\mathcal{S})}},\quad\forall f_{\bm{\Theta}}, f_{\bm{\Theta}'}\in \mathcal{F}_{\bm{\Xi}},
\end{aligned}
\] we have 
\[
  \log\mathcal{N}\left(\Delta, \epsilon, {L_2}(\mathcal{S})\right) \leq  \log \mathcal{N}\left(\mathcal{F}_{\bm{\Xi}},\frac{\epsilon \sqrt{d}}{\eta}, L_{\infty,2}(\mathcal{S})\right),
\]
where we define \(\Delta\) as the class of KL divergence terms \(D(f_{\bm{\Theta}})\) in \eqref{eq:KLloss}, i.e.,
\[
\Delta := \left\{D(f_{\bm{\Theta}}): f_{\bm{\Theta}}\in \mathcal{F}_{\bm{\Xi}}\right\}.  
\]
Now, we can apply the covering number bound in \Cref{lem:covering-FXi} to bound the covering number \(\mathcal{N}\left(\Delta, \epsilon, {L_2}(\mathcal{S})\right)\) as 
\begin{equation}
  \label{eq:coveringbnd3}
  \begin{aligned}
    \mathcal{N}\!\bigl(\Delta,\epsilon,L_2(\mathcal{S})\bigr)\;\le\;
\frac{64\,\eta^2\,\overline{\mathcal{R}}_{\bm{\Xi}}^2}{\epsilon^{2}d}
\;\log\!\Bigl(\bigl(\tfrac{11\eta\overline{\mathcal{R}}_{\bm{\Xi}}}{\epsilon \sqrt{d}}+7\bigr)\,N\,W\Bigr).
  \end{aligned}
  \end{equation}

Define \(B_{\Delta}:=\sup_{D(f_{\bm{\Theta}})\in \Delta} \|D(f_{\bm{\Theta}})\|_{L_2(\mathcal{S})}\).
We further bound the empirical Rademacher complexity \(\hat{R}_{\mathcal{S}}(\Delta)\) by applying the Dudley's entropy integral \Cref{lem:dudley} with the bound on covering number \eqref{eq:coveringbnd3} by the choosing \(\alpha =\frac{1}{\sqrt{N}}\):
\begin{equation}
\label{eq:bndRad1}
\begin{aligned}
  \Rad{\mathcal{S}}{\Delta}&\leq  \frac{4}{\sqrt{N}} + \frac{12}{\sqrt{N}} \int_{\frac{1}{\sqrt{N}}}^{B_{\Delta} } \sqrt{\log\left( \mathcal{N}\!\bigl(\mathcal{D},\epsilon,L_2(\mathcal{S})\bigr)\right)}d \epsilon \\ 
  & \leq \frac{4}{\sqrt{N}} + \frac{96 \eta \overline{\mathcal{R}}_{\bm{\Xi}}}{\sqrt{Nd}} \sqrt{\log\left(\left(\frac{11 \eta \overline{\mathcal{R}}_{\bm{\Xi}}\sqrt{N}}{\sqrt{d}}+7\right)NW\right)} \int_{\frac{1}{\sqrt{N}}}^{B_{\Delta}} \frac{1}{\epsilon} d \epsilon \\ 
  &\overset{\eqref{eq:coveringbnd3}}{\leq} \frac{4}{\sqrt{N}} + \frac{96 \eta \overline{\mathcal{R}}_{\bm{\Xi}}}{\sqrt{Nd}} \sqrt{\log\left(\left(\frac{11 \eta \overline{\mathcal{R}}_{\bm{\Xi}}\sqrt{N}}{\sqrt{d}}+7\right)NW\right)} \log (\sqrt{N}B_{\Delta}).
\end{aligned}
\end{equation}


Now, we apply the Rademacher generalization bound \Cref{lem:rademacher} with the bound on the empirical Rademacher complexity in \eqref{eq:bndRad1} by letting \(M=\sup_{\bm{\Theta}\in \bm{\Xi}, \bm{x}\in \mathcal{X}} \left|D(f_{\bm{\Theta}}; \bm{x})\right|\). Hence, it holds that
\begin{equation}
\label{eq:Radebnd1}
\begin{aligned}
&\sup_{f_{\bm{\Theta}}\in\mathcal{F}_{\bm{\Xi}}}\left|\E_{\bm{x}\sim\mathcal{D}} \left[D(f_{\bm{\Theta}}; \bm{x})\right] - \frac{1}{N} \sum_{i=1}^N  D(f_{\bm{\Theta}}(\bm{x}_i))\right|\\ 
&\quad\leq 2{\Rad{\mathcal{S}}{\Delta}} + 3M \sqrt{\frac{\log{2}/{\delta}}{2N}}  \\ 
&\quad\leq \frac{8}{\sqrt{N}} + \frac{192 \eta \overline{\mathcal{R}}_{\bm{\Xi}}}{\sqrt{Nd}} \sqrt{\log\left(\left(\frac{11 \eta \overline{\mathcal{R}}_{\bm{\Xi}}\sqrt{N}}{\sqrt{d}}+7\right)NW\right)} \log (\sqrt{N}B_{\Delta})  + 3M \sqrt{\frac{\log{2}/{\delta}}{2N}},
\end{aligned}
\end{equation}
with probability at least \(1-\delta\).  

Suppose that the remainder of the proof is conditioned on the event \eqref{eq:Radebnd1}. We will now show the upper bounds on \(B_{\Delta}\), \(M\), and \(\RX\) in \eqref{eq:Radebnd1}.


\paragraph{Bound on \(B_{{\Delta}}\):}
Using \(\ell_{2} \) lipschitzness of \(D(f_{\bm{\Theta}}; \bm{x}_i)\) in \Cref{prop:lip_KL} gives
\[
  \left|D(f_{\bm{\Theta}};\bm{x}_i)\right| \leq  \eta \|f_{\bm{\Theta}}(\bm{x}_i)\|_2, \quad \forall f_{\bm{\Theta}}\in \mathcal{F}_{\bm{\Xi}},\quad \forall\bm{x}_i \in \mathcal{S}.
\]   
This directly gives us
\begin{equation}
\label{eq:BDbnd}
\begin{aligned}
B_{\Delta} = \sup_{D(f_{\bm{\Theta}})\in \Delta}\sqrt{\frac{1}{N}\sum_{i=1}^N \left(D(f_{\bm{\Theta}} ; \bm{x}_i)\right)^2}&\leq \sqrt{\frac{\eta^2}{N}\sum_{i=1}^N  \sup_{\bm{\Theta}\in \bm{\Xi}}\|f_{\bm{\Theta}}(\bm{x}_i)\|_2^2 }  \\ 
&\leq \eta  B_L,
\end{aligned}
\end{equation}
where \(B_L\) is defined in \eqref{eq:bndBL}. 

\paragraph{Bound on \(M\):}

Upper bound on \(M\) can be obtained in a similar way as the bound on \(B_{\Delta}\) in \eqref{eq:BDbnd}:
 \begin{equation}
  \label{eq:Mbnd}
  M=\sup_{\bm{\Theta}\in \bm{\Xi}, \bm{x}\in \mathcal{X}} |D(f_{\bm{\Theta}}; \bm{x})| \leq \eta \sup_{\bm{\Theta}\in \bm{\Xi}, \bm{x}\in \mathcal{X}}\|f_{\bm{\Theta}}(\bm{x})\|_2 \leq \eta B_L. 
 \end{equation}

 \paragraph{Bound on \(\RX\):}

In the proof of \cite[Theorem~1]{hieu2025generalization}, Hieu et al.\ show an upper bound on \(\RX\) which is 
\begin{equation}
\label{eq:bndRA}
  \RX \leq \sup_{\bm{x} \in \mathcal{X}} \|\bm{x}\|_2 \prod_{\ell=1}^L \sigma_{\ell}\rho_\ell \left(\sum_{\ell'=1}^L (a_{\ell'}/ \rho_{\ell'})^{2/3} \right)^{3/2}.
\end{equation} 

For the sake of simplicity, we denote \(B_x=\sup_{\bm{x} \in \mathcal{X}} \|\bm{x}\|_2\).
Putting the bounds \eqref{eq:BDbnd}, \eqref{eq:Mbnd}, and \eqref{eq:bndRA} into \eqref{eq:Radebnd1} provides 

\begin{equation*}
  \label{eq:Radebnd2}
\begin{aligned}
  &\sup_{f_{\bm{\Theta}}\in\mathcal{F}_{\bm{\Xi}}}\left|\E_{\bm{x}\sim\mathcal{D}} \left[D(f_{\bm{\Theta}}; \bm{x})\right] - \frac{1}{N} \sum_{i=1}^N  D(f_{\bm{\Theta}}(\bm{x}_i))\right|\\ 
  &\leq  \tilde{\mathcal{O}}\left(\frac{\eta}{\sqrt{N}} \sqrt{\log (W)}B_x \prod_{\ell=1}^L \sigma_{\ell} \rho_{\ell} \left(\sum_{\ell'=1}^L (a_{\ell'}/ \rho_{\ell'})^{3/2} \right)^{{3}/{2}} + \eta B_x \prod_{\ell=1}^L(\rho_{\ell} \sigma_{\ell})\sqrt{\frac{\log(1/\delta)}{N}} \right) \\ 
  &\leq \tilde{\mathcal{O}}\left(\frac{\eta B_x\prod_{\ell=1}^L \sigma_{\ell} \rho_{\ell}}{\sqrt{N}}\left[\sqrt{\log (W)} \left(\sum_{\ell'=1}^L (a_{\ell'}/ \rho_{\ell'})^{3/2} \right)^{{3}/{2}} \vee \sqrt{\log(1/\delta)} \right]\right).
\end{aligned}
\end{equation*}

\section{Discussion on the approximation in Section~\ref{subsec:elbo}}

\subsection{Discussion on~\eqref{eq:dist_x_z}}\label{app:recon_approx}
The key step in our decoder-free ELBO maximization is the approximation
\begin{align}
	\EE_{q_\theta(\zs|\xs)}\bigl[\log p(\xs|\zs)\bigr] 
	& \approx \EE_{q_\theta(\zs|\xs)q_\theta(\zs'|\xs)}\bigl[\log p(\zs'| \zs)\bigr]
\end{align}

\paragraph{Lower-bound view.}
As shown in Lemma~\ref{lem:recon_app_jensen}, this approximation admits a lower bound up to an additive constant independent of $\zs$:
\begin{align}
	\mathbb{E}_{q_\theta(\zs|\xs)} [ \log p(\xs|\zs) ] \geq \mathbb{E}_{q_\theta(\zs|\xs)q_\theta(\zs’|\xs)}[\log p(\zs’|\zs)] + {\rm const}.
\end{align}
Consequently, maximizing the right-hand side with respect to $\theta$ implicitly maximizes the reconstruction term $\mathbb{E}_{q_\theta(\mathbf{z}\mid\mathbf{x})}\!\big[\log p(\mathbf{x}\mid\mathbf{z})\big]$, which is the objective of ELBO maximization. Moreover, using~\eqref{eq:recon_infoNCE} (see Section~\ref{subsec:elbo}), the surrogate is negatively related to InfoNCE:
\begin{align}
	\mathbb{E}_{q_\theta(\mathbf{z}\mid\mathbf{x})}\!\big[\log p(\mathbf{x}\mid\mathbf{z})\big]
	& \approx - I_{\mathrm{NCE}}(\mathbf{x};\mathbf{x}'),
\end{align}
so minimizing the InfoNCE loss increases the reconstruction term.

\paragraph{Change-of-variables view.}
Another perspective on the reconstruction approximation~\eqref{eq:dist_x_z} comes from a change of variables. 
Let $g$ be an invertible, differentiable mapping such that $\xs = g(\zs')$. 
Then, by the change-of-variables formula,
\begin{align}
	p(\xs \mid \zs)
	= p(\zs' \mid \zs)\,\bigl|\det J_{g^{-1}}(\xs)\bigr|
	= p(\zs' \mid \zs)\,\bigl|\det J_{g}(\zs')\bigr|^{-1},
\end{align}
where $J_{g}$ and $J_{g^{-1}}$ denote the Jacobians of $g$ and $g^{-1}$, respectively, and $\zs' = g^{-1}(\xs)$. 
Taking logarithms yields
\begin{align}\label{eq:recon_app_cov}
\log p(\xs \mid \zs)
= \log p(\zs' \mid \zs) + \log\bigl|\det J_{g^{-1}}(\xs)\bigr|
= \log p(\zs' \mid \zs) - \log\bigl|\det J_{g}(\zs')\bigr|,
\end{align}
where the second term depends only on $\xs$ (equivalently, on $\zs'$) and is independent of $\zs$.

\textbf{Sufficient condition (tightness).}
If, in addition to invertibility, $g$ is \emph{volume-preserving}, i.e., $\bigl|\det J_{g^{-1}}(\xs)\bigr| \equiv 1$ (equivalently, $\bigl|\det J_{g}(\zs')\bigr|\equiv 1$) on the data manifold, then the additive term in~\eqref{eq:recon_app_cov} vanishes and we obtain the tight equality $\log p(\xs \mid \zs)=\log p(\zs' \mid \zs)$. 
More generally, when $\bigl|\det J_{g^{-1}}(\xs)\bigr|$ is approximately constant over the data manifold, the additive term acts as (approximately) a constant shift independent of $\zs$, yielding a tight surrogate for optimization.

This assumption is plausible in practice under the commonly observed \emph{dimension-collapse} phenomenon: the embeddings $\zs'$ have effective rank (intrinsic dimension) much smaller than the ambient embedding dimension yet retain nearly all task-relevant information about the features $\xs$. 
When the feature and embedding manifolds have (approximately) the same intrinsic dimension and $g$ behaves near-isometrically between them, the Jacobian determinant varies weakly, making the surrogate in~\eqref{eq:recon_app_cov} tight in practice.

\subsection{Gaussian KL Surrogate for Projected-Normal KL}\label{app:kl_surrogate}
We study the tightness of the bound in~\eqref{eq:kl_bound}, repeated here:
\begin{align}
D\bigl(\Nc(\mu,K)\,\|\,\Nc(0,I_d)\bigr)
\;\ge\;
D\bigl(\Pc\Nc(\mu,K)\,\|\,\mathrm{Unif}(\Sc^{d-1})\bigr).
\end{align} 
Before analyzing tightness, we note several practical benefits of using the Gaussian KL as a surrogate for the projected-normal KL:
\begin{itemize}
  \item \textbf{Closed form.} It is trivial to implement and numerically stable.
  \item \textbf{Aligned optima.} The Gaussian KL and projected-normal KL share the same minimizer (e.g., at $\mu=0$ and $K=I_d$), so optimizing the surrogate steers the model toward the same optimum.
  \item \textbf{Efficiency.} Unlike Monte Carlo or $k$-NN estimators needed for the projected-normal KL, the Gaussian KL requires no sampling.
\end{itemize}
Moreover, the KL term acts only as a regularizer, whereas InfoNCE directly drives semantic similarity; thus modest approximation error in the KL has limited effect on downstream performance.

We assess tightness by comparing the closed-form Gaussian KL with an estimated projected-normal KL using a divergence estimator~\citep{what2009divergence} in two settings: synthetic data and CIFAR-10 under VCL training.

\paragraph{KL gap on synthetic data.}
We approximate $D\bigl(\Pc\Nc(\mu,K)\,\|\,\mathrm{Unif}(\Sc^{d-1})\bigr)$ numerically using $10^5$ samples in dimension $d=128$ for random $(\mu,K)$ draws, with $\mu \sim \mathcal{N}(0,I_d)$ and
\begin{align}
	K = \tfrac{1}{d} A A^\top + 0.1\,I_d, \quad A_{ij}\sim \mathcal{N}(0,0.5)\ \ \forall i,j.
\end{align}
We employ the $k$-nearest-neighbor divergence estimator~\citep{what2009divergence} with $k=1$, compute both the Gaussian KL (analytically) and the projected-normal KL (using the estimator) on the same samples, and repeat over 20 random trials to reduce variance.

Table~\ref{tab:kl_gap_synth} reports the gap between the two KLs on synthetic data: the average absolute gap is approximately $9.49$ (about a $10\%$ relative difference). Thus, the Gaussian KL surrogate closely tracks the projected-normal KL while retaining the practical advantages noted above.

\begin{table}[t]
\centering
\caption{Gaussian KL (G-KL) vs. projected normal KL (PN-KL) on synthetic data.}
\label{tab:kl_gap_synth}
\begin{tabular}{l c c c c}
\toprule
 & G-KL & PN-KL & Gap (G-KL$-$PN-KL) & Ratio (G-KL$/$PN-KL)\\
\midrule
mean     & 106.86  & 97.37 & 9.49 & 0.91      \\
std     & 9.56 & 7.63   & - & -   \\
\bottomrule
\end{tabular}
\end{table}

\paragraph{KL gap on CIFAR-10.}
Beyond the synthetic study, we measure the gap during VCL training on CIFAR-10 using the same experimental settings (Appendix~\ref{app:exp_detail}); results are shown in Figure~\ref{fig:kl_comp_cifar10}.
After only a few epochs, the Gaussian KL and the projected-normal KL closely track each other. 
This indicates that minimizing the Gaussian-KL surrogate effectively minimizes the projected-normal KL—the quantity we aim to reduce—while retaining the practical advantages of the surrogate.

\begin{figure*}[h]
    \centering
    \subfigure[G-KL vs. PN-KL]{\includegraphics[width=0.48\textwidth]{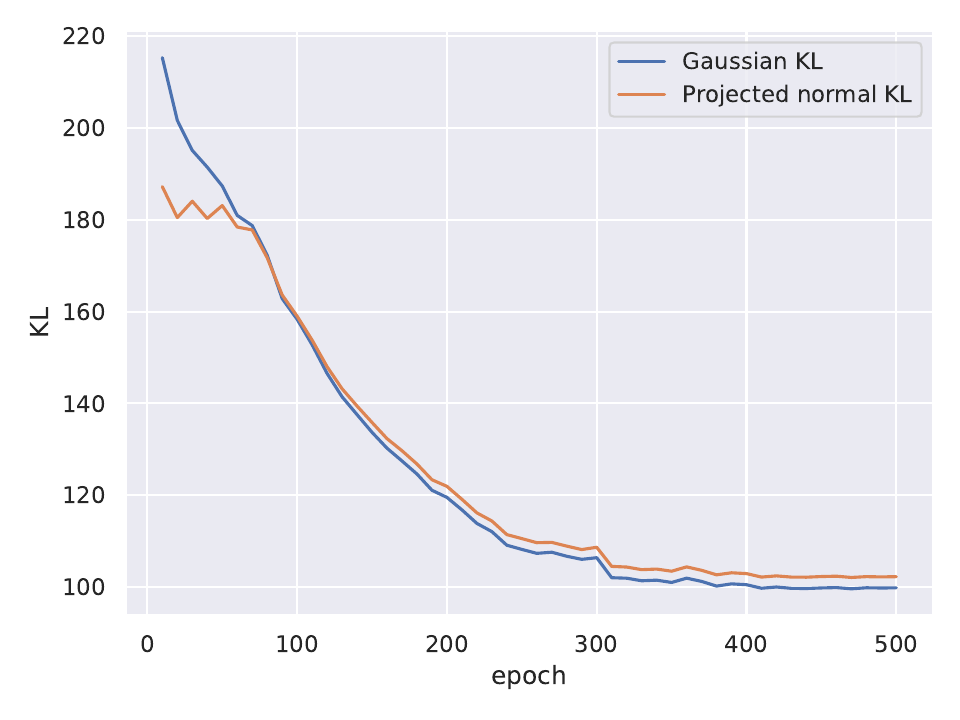} \label{subfig:kl_comp}}  
    \hfil
    \subfigure[Absolute gap]{\includegraphics[width=0.48\textwidth]{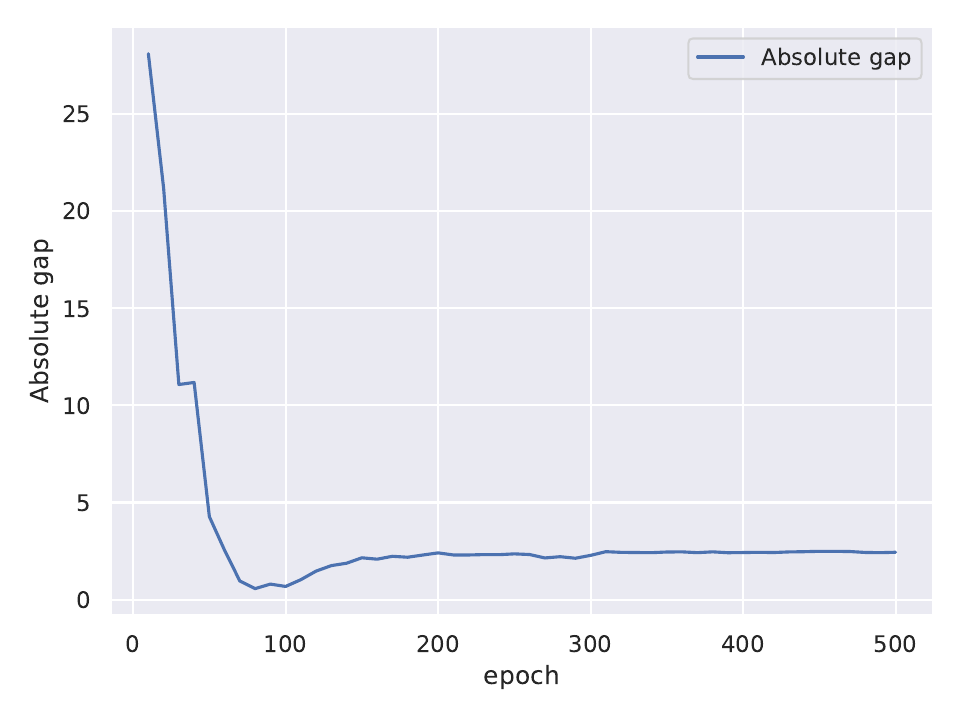}\label{subfig:kl_gap} }
    \caption{Tracking Gaussian KL (G-KL) and projected normal KL (PN-KL) during VCL training on CIFAR-10. (a) G-KL vs. PN-KL; (b) Absolute gap, $|\text{G-KL} - \text{PN-KL}|$. This shows that minimizing Gaussian KL leads to minimizing projected normal KL.}
    \label{fig:kl_comp_cifar10}
\end{figure*}

\section{Experiments}\label{app:experiments}

\subsection{Training Details and Hyperparameters}\label{app:exp_detail}

\paragraph{Datasets and preprocessing.}
Experiments are conducted on CIFAR-10~\citep{krizhevsky2009learning}, CIFAR-10C~\citep{hendrycks2019robustness}, CIFAR-10H~\citep{peterson2019human}, CIFAR-100~\citep{krizhevsky2009learning}, STL-10~\citep{coates2011analysis}, Tiny-ImageNet~\citep{le2015tiny}, and Caltech-256~\citep{griffin2007caltech}. 
Following SimCLR, we sample two views per image via random resized crop (image size $32\!\times\!32$ and scale $[0.2,1.0]$), horizontal flip ($p{=}0.5$), color jitter (brightness/contrast/saturation/hue $=0.4$, applied with $p{=}0.8$), Gaussian blur (kernel size $9$), and random grayscale ($p{=}0.2$). 
Inputs are normalized with dataset-specific means/standard deviations.

\paragraph{Architectures.}
Encoders are ResNet-18, with embedding dimension $d{=}128$. 

\paragraph{Optimization.}
We use AdamW~\citep{loshchilov2018decoupled} with base LR $10^{-2}$ (encoder and head), weight decay $10^{-4}$, batch size $B{=}512$, and $T{=}500$ epochs for pretraining and $T=100$ for training linear classifier. 
Temperature for InfoNCE loss is $\tau{=}0.07$.
We set $m{=}1$ posterior samples per view for VSimCLR and VSupCon by default (ablation in Table~\ref{tab:classification_aux}). 
No momentum encoder or queue is used; all negatives are in-batch.
For training stability, we clip the posterior log-variance ($\log\boldsymbol{\sigma^2}$) to $[-5,5]$ to bound variances, and clip gradient global norm at $1.0$.

%
%

\subsection{Additional Results on Dimension Collapse}\label{app:dim_collapse}
In addition to the singular spectrum of VCL embeddings on CIFAR-10 and CIFAR-100 in Figure~\ref{fig:emb_dim},  Figure~\ref{fig:additional_dim_collapse} reports results on Caltech-256 and Tiny-ImageNet. In both datasets, VCL mitigates the dimension-collapse phenomenon commonly observed in contrastive learning.

\begin{figure}[h]
    \centering
    \subfigure[Caltech-256]{\includegraphics[width=0.45\textwidth]{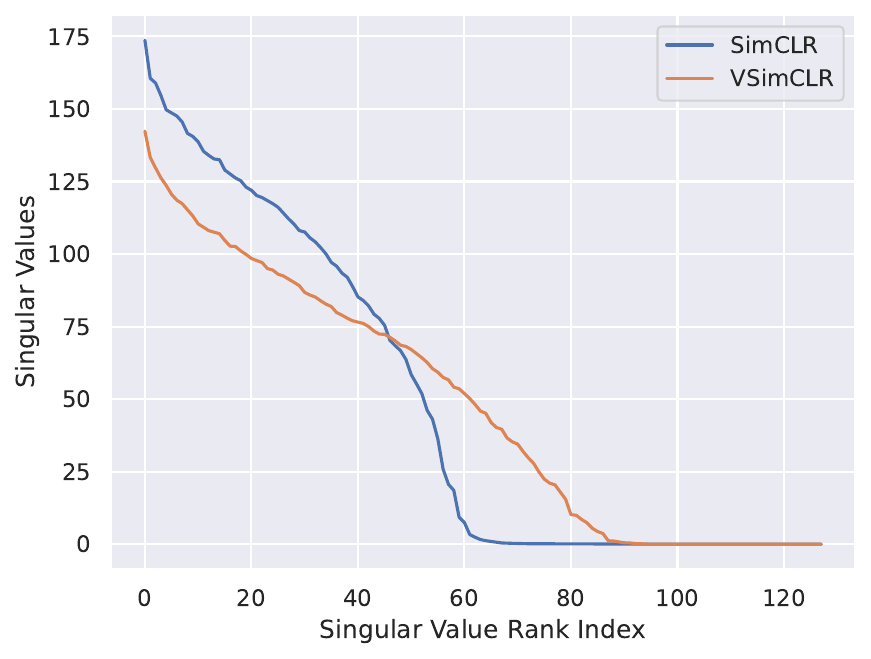} \label{subfig:dim_collapse_cal}}  
    \hfil
    \subfigure[Tiny-ImageNet]{\includegraphics[width=0.45\textwidth]{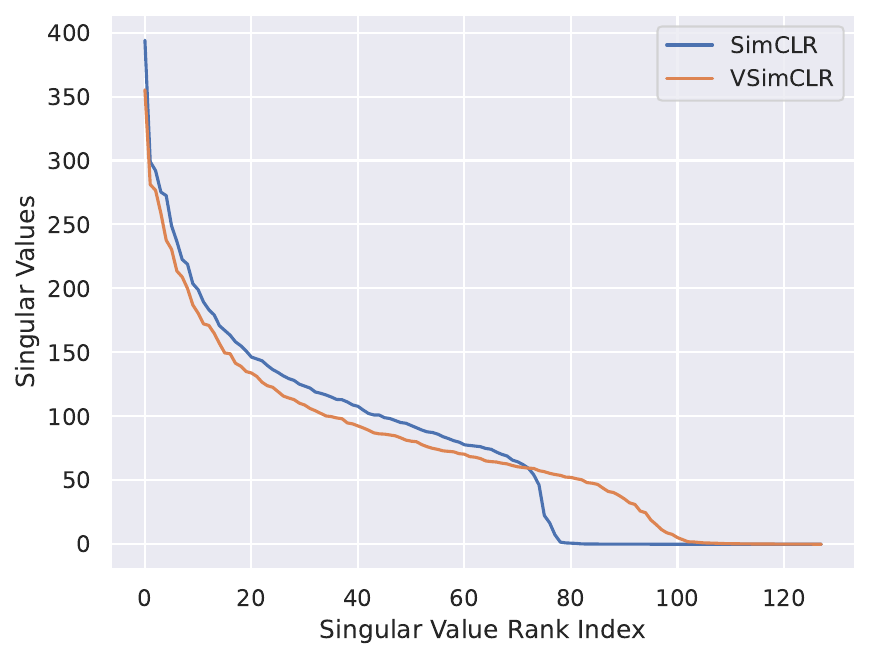}\label{subfig:dim_collapse_tiny} } 
    \caption{Singular‐value spectrum of the embedding covariance on Cartech-256 and Tiny-ImageNet. VSimCLR mitigates dimensional collapse on both datasets. }
    \label{fig:additional_dim_collapse}
\end{figure}

\begin{table}
\centering
\caption{Log‐determinant of average posterior covariance $K$ for each CIFAR‐10 class.}
\label{tab:logdet_by_class}
\begin{tabular}{r l r}
\toprule
Index & Class      & $\log\det(K)$ \\
\midrule
0     & airplane   & -182.207      \\
1     & automobile & -181.691      \\
2     & bird       & -183.713      \\
3     & cat        & -191.317      \\
4     & deer       & -184.969      \\
5     & dog        & -185.432      \\
6     & frog       & -182.125      \\
7     & horse      & -179.331      \\
8     & ship       & -185.991      \\
9     & truck      & -188.179      \\
\bottomrule
\end{tabular}
\end{table}

\subsection{Distributional Contrastive Loss}\label{app:DistNCE}

In addition to the contrastive loss on embeddings, it is worthwhile to contrast the posterior distributions within the VCL framework. Specifically, we aim to pull together the posteriors corresponding to different augmentations of the same input and to push apart posteriors from distinct inputs. To incorporate this into VCL, we introduce the \emph{DistNCE} loss, a contrastive objective over posterior parameters, defined as
\begin{align}\label{eq:dist_nce}
D_{\mathrm{DistNCE}}(\theta)
&= -\mathbb{E}\left[\log \frac{\exp\bigl(s(\theta,\theta^+)\bigr)}
{\displaystyle\sum_{j}\exp\bigl(s(\theta,\theta_j)\bigr)}\right],
\end{align}
where \(\theta\) denotes the posterior parameters \((\boldsymbol{\mu},K)\), \(\theta^+\) is the positive‐pair parameter for the same input, and \(\{\theta_j\}_{j\neq +}\) are negative‐pair parameters from other inputs. The expectation is taken over the joint distribution \(p(\theta,\theta^+)\prod_{j\neq +}p(\theta_j)\).

Moreover, we increase the number of posterior samples used for the InfoNCE loss. Specifically, we draw \(m\) samples \(\{\mathbf{z}^{(k)}\}_{k=1}^m\) from each posterior, resulting in an \(m\)-fold increase in effective batch size, and compute the InfoNCE loss over this enlarged set of embeddings. The classification results are reported in Table~\ref{tab:classification_aux}.

We also evaluate the performance of the asymmetric lower bound~\eqref{eq:asym_bound} (denoted ASYM) in place of the symmetrized objective~\eqref{eq:VCL_obj}. These results are also shown in Table~\ref{tab:classification_aux}.

From these experiments, we did not observe any significant differences when applying DistNCE~\eqref{eq:dist_nce}, using the asymmetric loss, or sampling multiple embeddings per posterior. Based on these findings, we proceed with the basic VCL variants from the main text for all subsequent experiments.

\begin{table*}[t]
\caption{Classification accuracy on STL10 with different number of embedding generation from posterior. We report top-1 and top-5 accuracies of SimCLR, VSimCLR, SupCon, and VSupCon across the datasets with different $m$ and DistNCE~\eqref{eq:dist_nce}. 
}
\label{tab:classification_aux}
\begin{center}
\begin{small}
\begin{sc}
\begin{tabular}{lcc}
\toprule
\textbf{Method} & \multicolumn{2}{c}{STL10} \\
\cmidrule(r){2-3} 
 & Top1 & Top5  \\
\midrule
SimCLR   & 60.44 & 95.80  \\
VSimCLR ($m=1$) & 60.11 & 92.00  \\
VSimCLR ($m=4$) & 57.86 & 88.29  \\
VSimCLR ($m=16$) & 59.13 & 92.85   \\
VSimCLR ($m=64$) & 56.91 & 86.63    \\
VSimCLR with DistNCE~\eqref{eq:dist_nce} & 36.54 & 80.25  \\
VSimCLR (asym) & 57.38 & 88.78  \\
\hdashline
SupCon & 75.88 & 75.88   \\
VSupCon ($m=1$) & 75.76 & 96.99  \\
VSupCon ($m=4$) & 74.35 & 97.14  \\
VSupCon ($m=16$) & 76.11 & 98.39  \\
VSupCon ($m=64$) & 77.96 & 98.44  \\
\bottomrule
\end{tabular}
\end{sc}
\end{small}
\end{center}
\end{table*}

\subsection{Effect of KL Regularizer on Classification}\label{app:KL_classification}
As shown in Table~\ref{tab:classification}, VSupCon exhibits reduced classification accuracy on some datasets, whereas VSimCLR remains stable. We attribute this degradation to two factors:
\begin{enumerate}
  \item VSimCLR’s objective coincides with the VCL objective in~\eqref{eq:VCL_obj}, but VSupCon’s does not, creating a mismatch that can impede proper ELBO maximization.
  \item SupCon optimizes embeddings directly for classification; adding a KL term can conflict with this objective.
\end{enumerate}
We therefore hypothesize that weakening the KL regularizer improves VSupCon’s accuracy. 
To test this, we scale the KL term by $\beta\in\{1, 10^{-1}, 10^{-2}, 10^{-3}\}$,
\begin{align}
	\mathcal{L}^{\mathrm{vsup}}(\beta)
	& = \mathcal{L}^{\mathrm{sup}} + \beta D_{\mathrm{KL}}\!\bigl(q_\theta(\zs\mid\xs)\,\|\,p(\zs)\bigr),
\end{align}
and evaluate the resulting embeddings. In Table~\ref{tab:beta_classification}, as expected, smaller $\beta$ (i.e., a weaker KL effect) yields higher accuracy. 
Thus, for pure classification tasks, SupCon may not benefit from a VCL variant unless the KL weight is carefully tuned.

\begin{table*}[t]
\caption{Classification accuracy on STL10 with different number of embedding generation from posterior. We report top-1 and top-5 accuracies of SimCLR, VSimCLR, SupCon, and VSupCon across the datasets with different $m$ and DistNCE~\eqref{eq:dist_nce}. 
}
\label{tab:beta_classification}
\begin{center}
\begin{small}
\begin{sc}
\begin{tabular}{lcc}
\toprule
$\beta$ & \textbf{Top-1 accuracy} & \textbf{Top-5 accuracy} \\
\midrule
1   & 47.90 & 72.34  \\
0.1  & 47.24 & 71.90  \\
0.01  & 50.35 & 73.27  \\
0.001 & 51.34 & 73.09   \\
\bottomrule
\end{tabular}
\end{sc}
\end{small}
\end{center}
\end{table*}

\subsection{Implications of Distributional Embeddings}\label{app:implication}

Distributional (probabilistic) embeddings provide useful capabilities, including uncertainty quantification and probability-based distances between samples and classes. We analyze them along three axes: uncertainty, typicality, and out-of-distribution (OOD) behavior.

\begin{figure}[h]
  \centering
  \includegraphics[width=0.45\textwidth]{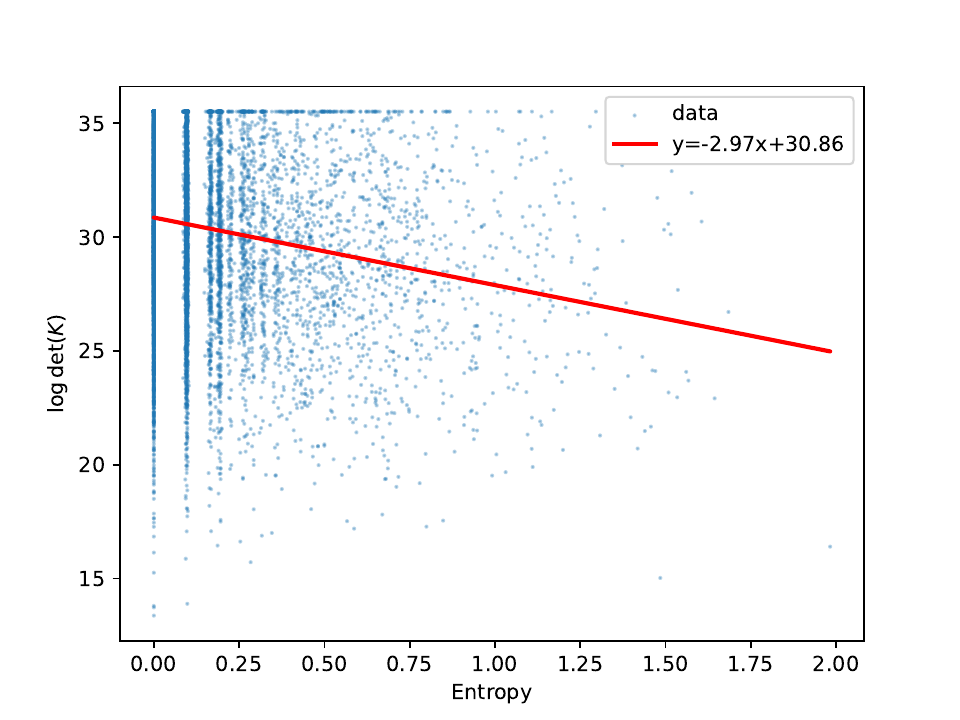}
  \caption{Relationship between posterior dispersion and label ambiguity. Each point plots the trace of $K$ ($\trace(K)$) against the entropy of human‐annotated class probabilities from CIFAR-10H~\citep{peterson2019human}, with a first‐order linear fit (red line). Similar to the result in Figure~\ref{fig:lin_reg}, the dispersion is negatively correlated with label ambiguity.}
  \label{fig:lin_reg_trace}
\end{figure}

\paragraph{Posterior covariance vs.\ uncertainty.}
As shown in Figure~\ref{fig:examples}, different samples exhibit varying degrees of posterior dispersion (e.g., the log-determinant of the covariance, $\log\det(K)$), which can serve as an uncertainty measure.
To examine how uncertainty and posterior covariance are related, we conduct experiments on two benchmark datasets, CIFAR-10H~\citep{peterson2019human} and CIFAR-10C~\citep{hendrycks2019robustness}:
\begin{itemize}
  \item \textbf{CIFAR-10H:} The test set provides soft labels~\citep{ishida2023is,jeong2023demystifying,jeong2024data} aggregated from multiple annotators. Using these soft labels, we compute the per-sample label entropy as a measure of uncertainty about the underlying class.
  \item \textbf{CIFAR-10C:} The test set provides systematically corrupted images with multiple corruption types and severities (higher severity $=$ stronger corruption), which induces greater label ambiguity and thus higher uncertainty.
\end{itemize}

Beyond comparing $\log\det(K)$ with label entropy in Figure~\ref{fig:lin_reg}, we also compare the trace of $K$ (denoted $\mathrm{tr}(K)$) against label entropy in Figure~\ref{fig:lin_reg_trace}. In both cases, we observe a \emph{negative} slope under a first-order linear fit. This indicates that VSimCLR assigns \textbf{lower} posterior dispersion to inputs with greater label uncertainty. Conversely, inputs that humans classify unambiguously—i.e., prototypical class examples—exhibit posteriors with \textbf{larger} dispersion, suggesting their latent representations span a broader region of the class-specific embedding space; ambiguous or outlier inputs yield \textbf{smaller} dispersion, reflecting more concentrated latent distributions.

A similar pattern appears in Figures~\ref{fig:cifar10c_vsim} and~\ref{fig:cifar10c_vsup}, which relate $\log\det(K)$ to corruption severity on CIFAR-10C. We train VSimCLR and VSupCon on CIFAR-10 and evaluate their embeddings on CIFAR-10C. Because higher severity entails stronger corruption and greater label ambiguity, these figures further support the finding that posterior covariance dispersion is negatively correlated with uncertainty. Tables~\ref{tab:cifar10c_vsim_logdet} and~\ref{tab:cifar10c_vsup_logdet} report the mean $\log\det(K)$ for each corruption type and severity level.

\begin{figure}
    \centering
    \subfigure[$\log \det(K)$ heatmap]{\includegraphics[width=0.3\textwidth]{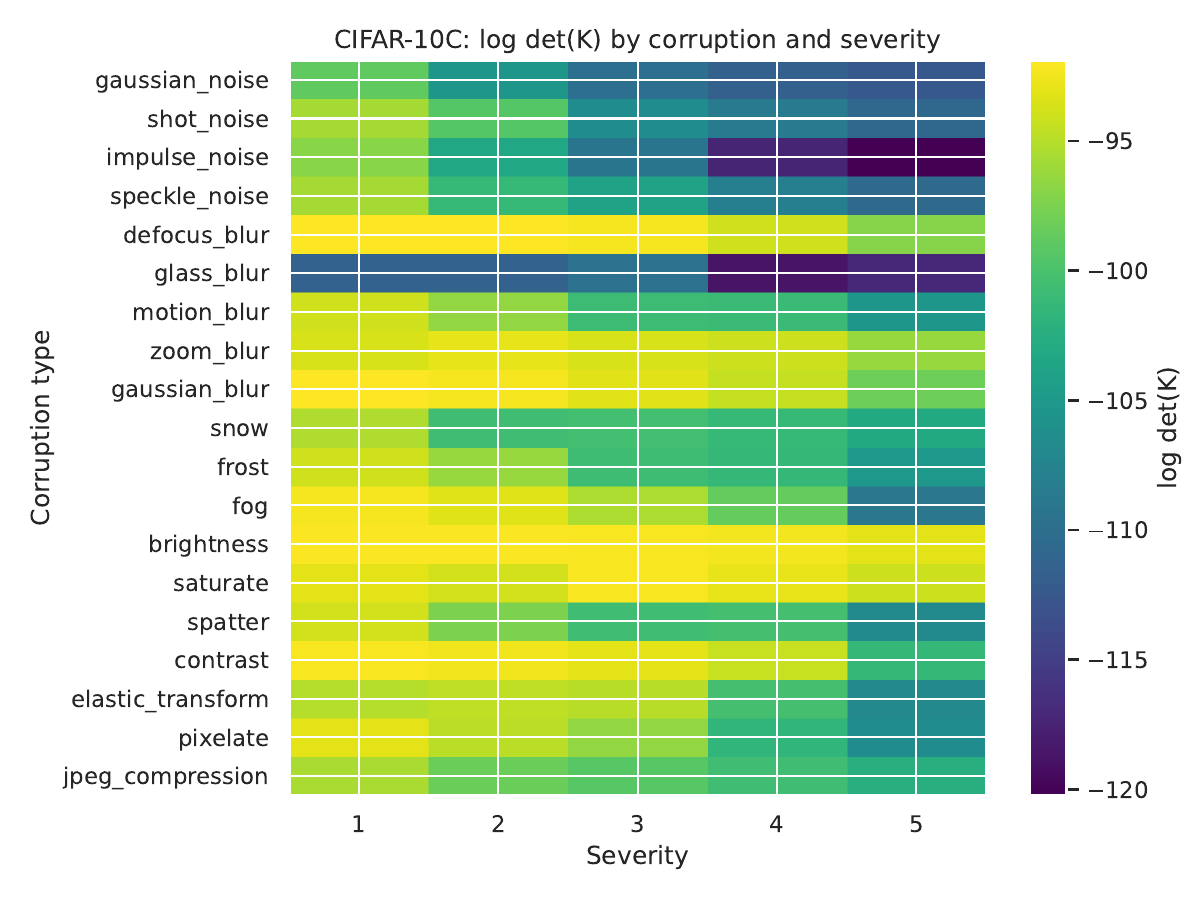} \label{subfig:cifar10c_vsup_heatmap}}  
    \hfil
    \subfigure[Change in $\log \det(K)$ from severity 1 to 5 (sorted)]{\includegraphics[width=0.3\textwidth]{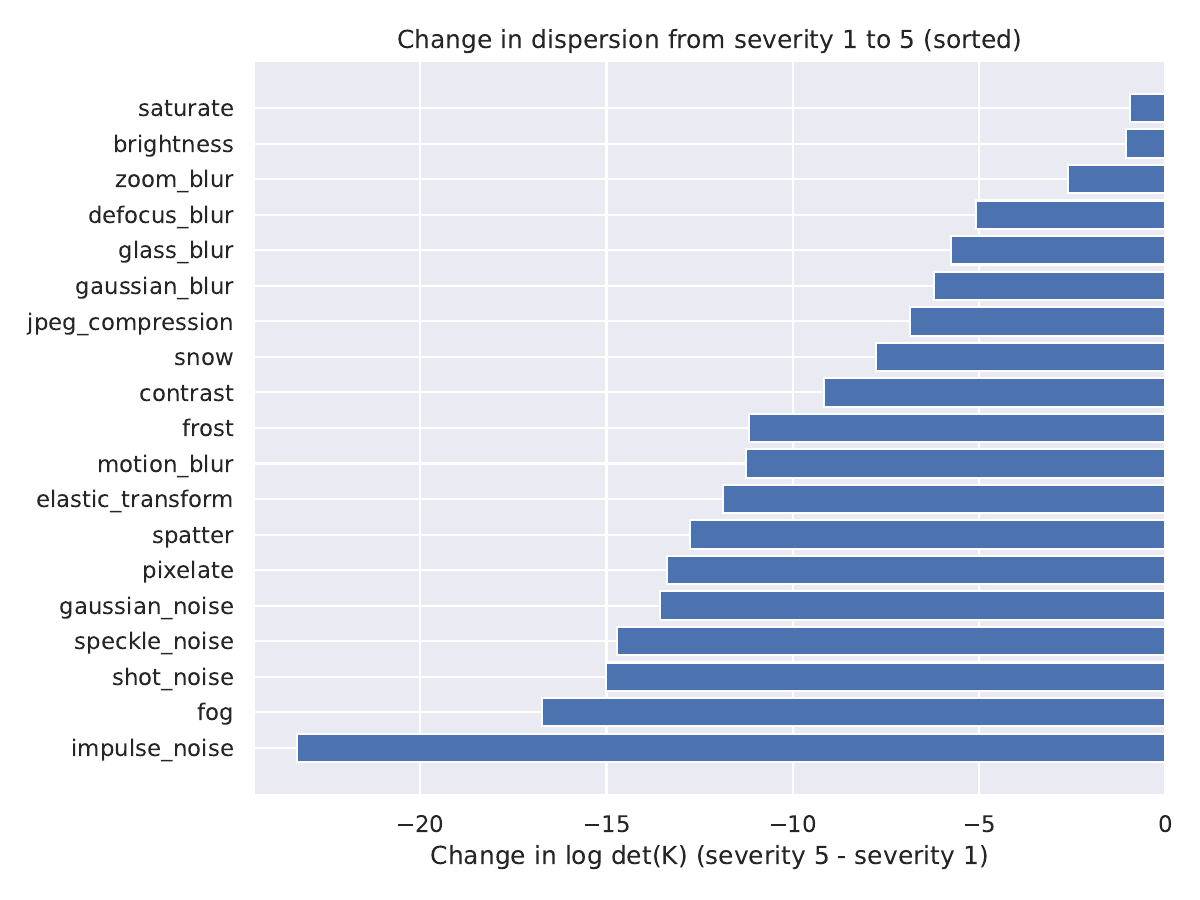}\label{subfig:cifar10c_vsup_bar} } 
    \hfil
    \subfigure[$\log \det(K)$ vs. severity (top-6 most changing corruptions)]{\includegraphics[width=0.3\textwidth]{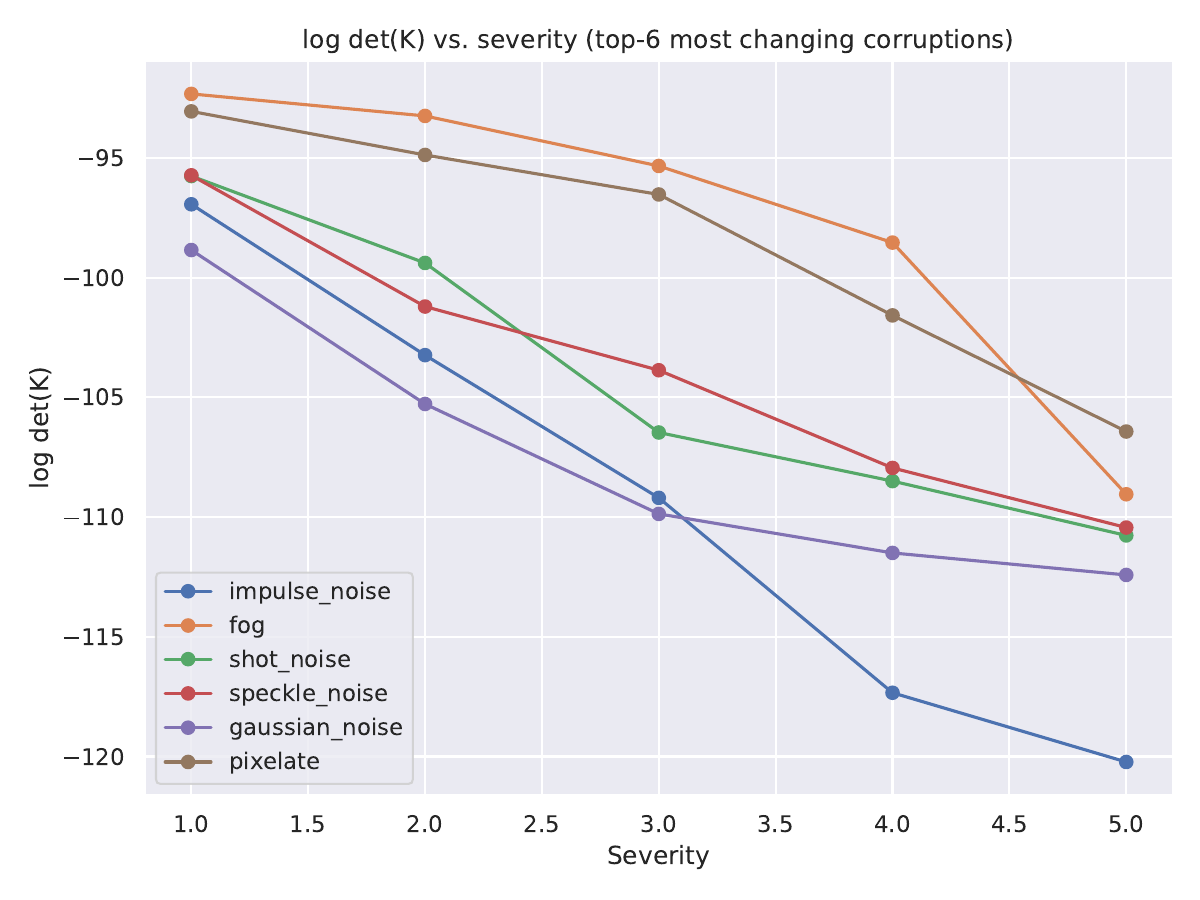} \label{subfig:cifar10c_vsup_top6} }
    \caption{$\log \det(K)$ of \textbf{VSupCon} embeddings on CIFAR-10C~\citep{hendrycks2019robustness} under different corruption types and severities. “Severity” denotes the corruption level. The observed negative correlation between $\log \det(K)$ and severity is consistent with our finding that more uncertain samples exhibit smaller posterior covariance dispersion. Exact $\log \det(K)$ values are in Table~\ref{tab:cifar10c_vsup_logdet}.}
    \label{fig:cifar10c_vsup}
\end{figure}

\begin{table}[t]
\centering
\caption{Average $\log\det K$ of VSimCLR embeddings on CIFAR-10C for each corruption type and severity (higher severity $=$ stronger corruption).}
\label{tab:cifar10c_vsim_logdet}
\begin{tabular}{lrrrrr}
\toprule
\textbf{Corruption} & \textbf{Severity 1} & \textbf{Severity 2} & \textbf{Severity 3} & \textbf{Severity 4} & \textbf{Severity 5} \\
\midrule
gaussian\_noise     & -187.74 & -189.85 & -192.23 & -193.05 & -193.70 \\
shot\_noise         & -187.49 & -188.11 & -190.18 & -190.95 & -191.97 \\
impulse\_noise      & -188.25 & -190.71 & -192.61 & -194.66 & -194.82 \\
speckle\_noise      & -187.59 & -188.64 & -189.21 & -189.93 & -190.48 \\
defocus\_blur       & -184.41 & -183.84 & -182.67 & -187.67 & -186.76 \\
glass\_blur         & -192.35 & -191.76 & -192.03 & -194.36 & -193.98 \\
motion\_blur        & -185.83 & -187.53 & -189.88 & -189.78 & -191.94 \\
zoom\_blur          & -185.95 & -183.85 & -183.86 & -183.75 & -185.07 \\
gaussian\_blur      & -184.43 & -182.83 & -182.11 & -183.47 & -191.56 \\
snow                & -186.92 & -189.86 & -190.48 & -193.08 & -193.89 \\
frost               & -188.43 & -190.13 & -192.08 & -192.16 & -193.85 \\
fog                 & -185.61 & -187.61 & -189.65 & -193.37 & -204.82 \\
brightness          & -184.89 & -185.43 & -186.17 & -187.16 & -189.70 \\
saturate            & -186.40 & -191.14 & -185.02 & -186.36 & -187.87 \\
spatter             & -186.32 & -188.43 & -191.12 & -188.88 & -191.03 \\
contrast            & -185.67 & -188.03 & -189.84 & -192.59 & -200.25 \\
elastic\_transform  & -185.66 & -185.12 & -184.95 & -189.66 & -195.31 \\
pixelate            & -185.10 & -186.44 & -187.62 & -188.58 & -189.46 \\
jpeg\_compression   & -182.94 & -183.30 & -183.73 & -184.38 & -185.28 \\
\bottomrule
\end{tabular}
\end{table}

\begin{table}[t]
\centering
\caption{Average $\log\det K$ of VSupCon embeddings on CIFAR-10C for each corruption type and severity (higher severity $=$ stronger corruption).}
\label{tab:cifar10c_vsup_logdet}
\begin{tabular}{lrrrrr}
\toprule
\textbf{Corruption} & \textbf{Severity 1} & \textbf{Severity 2} & \textbf{Severity 3} & \textbf{Severity 4} & \textbf{Severity 5} \\
\midrule
gaussian\_noise     & -98.85 & -105.28 & -109.87 & -111.50 & -112.42 \\
shot\_noise         & -95.76 &  -99.39 & -106.47 & -108.50 & -110.77 \\
impulse\_noise      & -96.94 & -103.24 & -109.20 & -117.34 & -120.23 \\
speckle\_noise      & -95.73 & -101.21 & -103.87 & -107.95 & -110.44 \\
defocus\_blur       & -91.95 &  -91.90 &  -92.33 &  -93.94 &  -97.03 \\
glass\_blur         & -111.32 & -111.29 & -109.63 & -118.74 & -117.08 \\
motion\_blur        & -93.95 &  -96.48 & -100.86 & -100.96 & -105.21 \\
zoom\_blur          & -93.66 &  -92.94 &  -93.67 &  -94.06 &  -96.29 \\
gaussian\_blur      & -91.95 &  -92.31 &  -93.14 &  -94.40 &  -98.17 \\
snow                & -95.28 & -100.62 & -100.32 & -101.30 & -103.04 \\
frost               & -93.98 &  -96.23 & -100.71 & -101.33 & -105.15 \\
fog                 & -92.33 &  -93.25 &  -95.34 &  -98.54 & -109.05 \\
brightness          & -92.04 &  -92.06 &  -92.16 &  -92.40 &  -93.11 \\
saturate            & -93.05 &  -93.80 &  -92.14 &  -92.82 &  -94.02 \\
spatter             & -93.86 &  -97.46 & -100.59 & -100.27 & -106.63 \\
contrast            & -92.14 &  -92.54 &  -93.10 &  -94.30 & -101.31 \\
elastic\_transform  & -95.01 &  -94.65 &  -94.96 & -100.26 & -106.89 \\
pixelate            & -93.06 &  -94.88 &  -96.53 & -101.58 & -106.43 \\
jpeg\_compression   & -95.47 &  -98.31 &  -99.28 & -100.59 & -102.32 \\
\bottomrule
\end{tabular}
\end{table}

This counterintuitive observation—that typical (i.e., common) samples exhibit larger posterior dispersion—parallels the concurrent findings of Guth et al.~\citep{guth2025learning}, albeit under different settings: (i) \textbf{Quantity:} we analyze latent-space posterior dispersion via $\log\det K$, whereas they study input-space marginal density $p(x)$; (ii) \textbf{Observation:} typical samples have larger $\log\det K$ (ours), while they have lower $p(x)$ (theirs). Although these quantities live in different spaces, both results indicate that typical samples are not the highest-density points. In our case, typical images yield larger dispersion and atypical images smaller dispersion; since dispersion is inversely related to peak density, our result is consistent with Guth et al. Hence, in both settings, “typical” $\neq$ “highest-density.” Consequently, posterior dispersion serves as a useful uncertainty signal; see Table~\ref{tab:label_scarcity} for an application under label scarcity.

\begin{figure*}[t]
    \centering
    \subfigure[VSimCLR]{\includegraphics[width=0.48\textwidth]{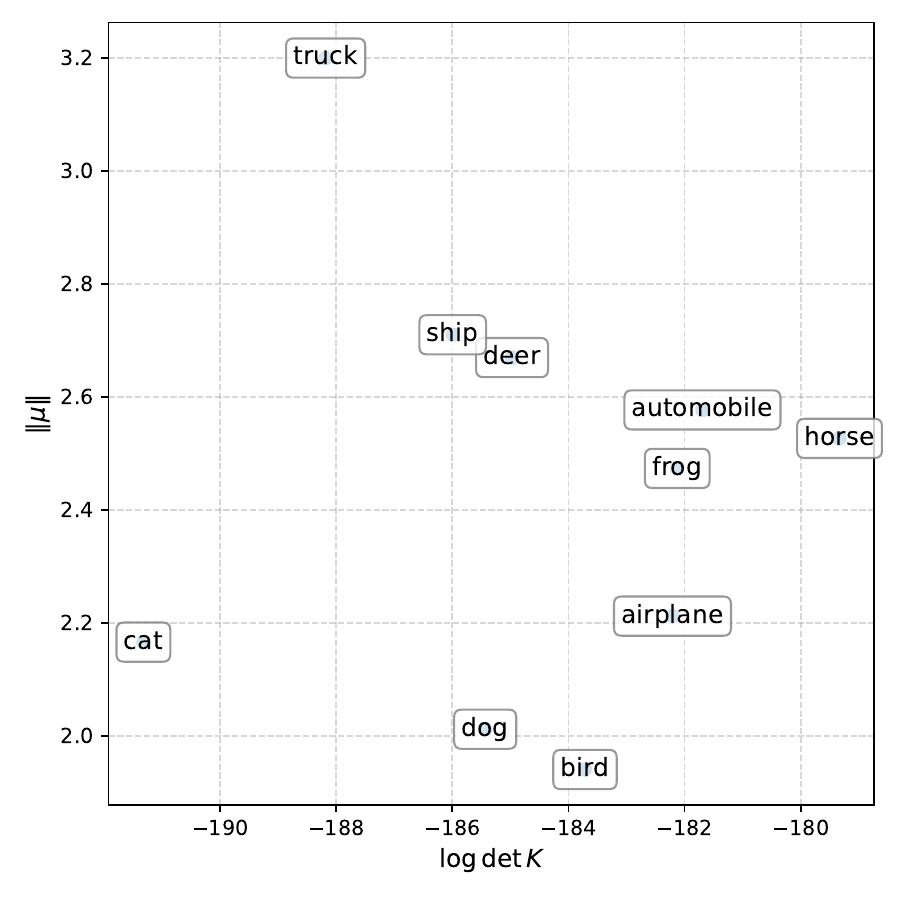} \label{subfig:mu_K_VSimCLR}}  
    \hfil
    \subfigure[VSupCon]{\includegraphics[width=0.48\textwidth]{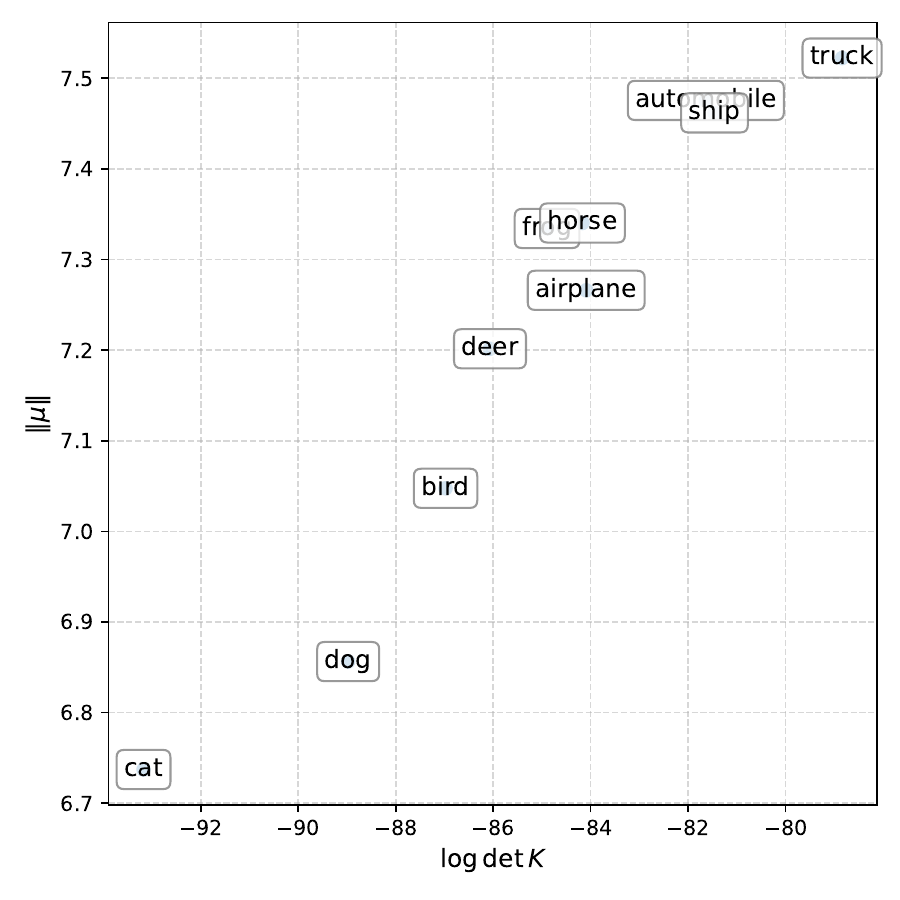}\label{subfig:mu_K_VSupCon} }
    \caption{Norm of the posterior mean $\|\boldsymbol{\mu}\|$ versus the log‐determinant of the covariance $\log\det(K)$, averaged per class. Both $\boldsymbol{\mu}$ and $K$ are computed by averaging over all samples belonging to the same class.}
    \label{fig:mu_K}
\end{figure*}

\paragraph{Class-wise average posterior parameters.}
Figure~\ref{fig:mu_K} reports class-wise averages of the posterior parameters—the mean norm $\|\boldsymbol{\mu}\|$ and the covariance dispersion $\log\det K$—for VSimCLR and VSupCon. Classes exhibit distinct dispersion profiles. Despite being trained independently, the two methods yield similar class-wise patterns in both quantities: for example, the \emph{cat} and \emph{dog} classes show comparatively lower $\|\boldsymbol{\mu}\|$ and $\log\det K$, whereas \emph{truck} attains the largest $\|\boldsymbol{\mu}\|$. Table~\ref{tab:logdet_by_class} provides detailed per-class $\log\det K$ values.

\begin{figure*}[h]
    \centering
    \subfigure[$\trace(K)$]{\includegraphics[width=0.48\textwidth]{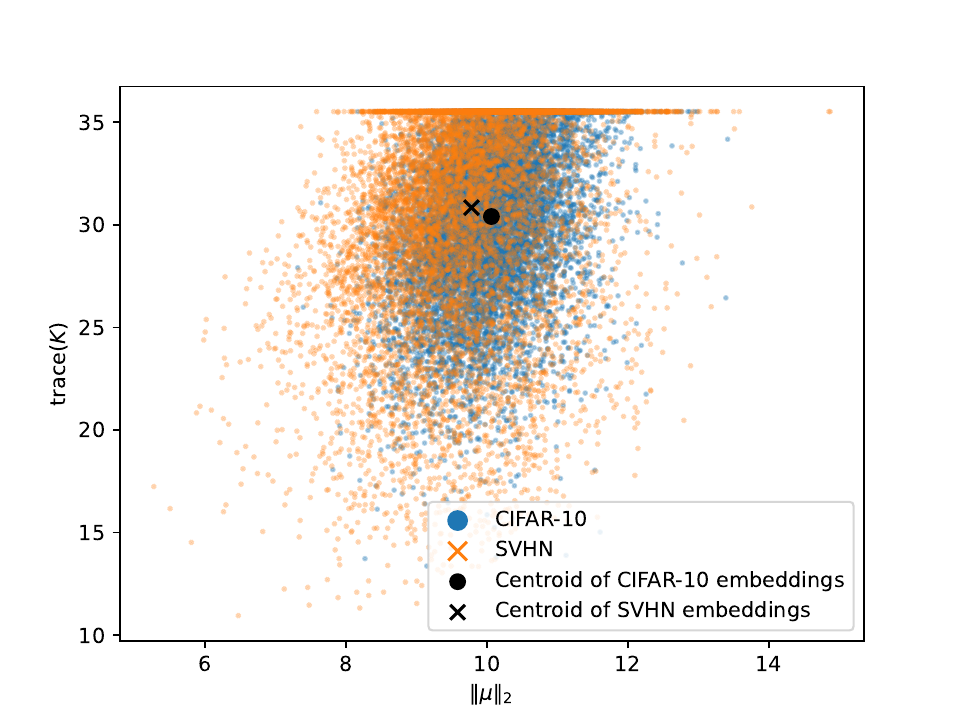} \label{subfig:ood_trace}}  
    \hfil
    \subfigure[$\log\det(K)$]{\includegraphics[width=0.48\textwidth]{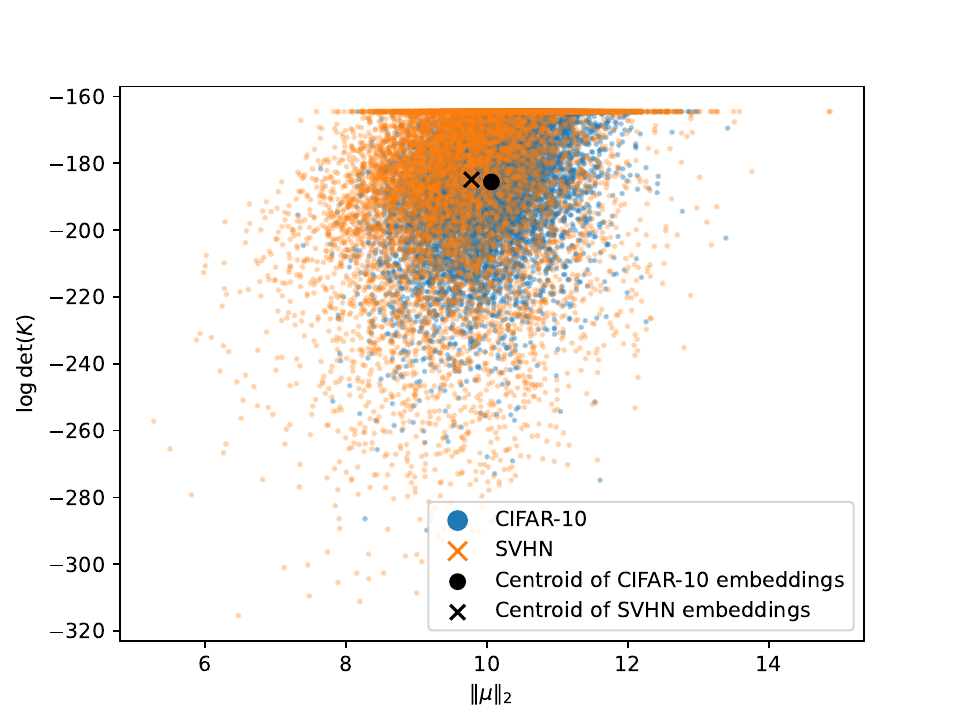}\label{subfig:ood_log_det} }
    \caption{Posterior parameters of CIFAR-10 and SVHN datasets. We use the same encoder of VSimCLR trained with CIFAR-10. }
    \label{fig:ood}
\end{figure*}

\paragraph{Posterior on in-distribution vs.\ out-of-distribution.}
We compare per-sample posterior parameters under VSimCLR for in-distribution (ID; CIFAR-10) versus out-of-distribution (OOD; SVHN~\citep{netzer2011reading}) inputs. 
VSimCLR is trained on the CIFAR-10 training set, after which we extract $(\boldsymbol{\mu}, K)$ on the CIFAR-10 and SVHN test sets. 
Figure~\ref{fig:ood} plots the pairs $\bigl(\|\boldsymbol{\mu}\|,\ \log\det K\bigr)$ for each dataset; black markers denote dataset-wise means. 
While the mean values $\operatorname{avg}(\|\boldsymbol{\mu}\|)$ and $\operatorname{avg}(\log\det K)$ are similar across CIFAR-10 and SVHN, the SVHN points exhibit substantially greater spread (dispersion) across samples, indicating a broader posterior-parameter distribution for OOD data.

\end{document}